\documentclass[12pt]{article}
\usepackage{fullpage}

\linespread{1.1}\vfuzz2pt \hfuzz2pt

%\usepackage[top=1in, bottom=1in, left=0.9in, right=0.9in]{geometry}
% The following packages will be automatically loaded:
% amsmath, amssymb, natbib, graphicx, url, algorithm2e

%\usepackage{times}
 % Use \Name{Author Name} to specify the name.
 % If the surname contains spaces, enclose the surname
 % in braces, e.g. \Name{John {Smith Jones}} similarly
 % if the name has a "von" part, e.g \Name{Jane {de Winter}}.
 % If the first letter in the forenames is a diacritic
 % enclose the diacritic in braces, e.g. \Name{{\'E}louise Smith}

\date{}

\title{\bfseries PAC Battling Bandits in the Plackett-Luce Model}%\\in the Plackett-Luce model} 
%: Tradeoff between Sample Complexity and Subset Size

\author{
Aadirupa Saha\thanks{Indian Institute of Science, Bangalore, India. {\tt aadirupa@iisc.ac.in}}, \and Aditya Gopalan \thanks{Indian Institute of Science, Bangalore, India. {\tt aditya@iisc.ac.in} }
}

\usepackage{microtype}
\usepackage{cancel}
\usepackage{graphicx}
\usepackage{booktabs} % for professional tables
\usepackage{amssymb}
\usepackage{color}
\usepackage{amsmath}
\usepackage{amsthm}
\usepackage{thmtools}
\usepackage{thm-restate}
\usepackage{algorithm}
\usepackage{algorithmic}

\usepackage{natbib}
\usepackage{hyperref}
\usepackage{nameref}

\allowdisplaybreaks

%--------Environments----------
\newtheorem{thm}{Theorem}%[section]
\newtheorem{lem}[thm]{Lemma}

\newtheorem{cor}[thm]{Corollary}
\newtheorem{defn}[thm]{Definition}

\newtheorem{rem}{Remark}

%---------Definitions----------

%
\newcommand{\R}{{\mathbb R}}

\renewcommand{\P}{{\mathbf P}}

\newcommand{\E}{{\mathbf E}}

\newcommand{\1}{{\mathbf 1}}

\newcommand{\cO}{{\mathcal O}}
\newcommand{\cA}{{\mathcal A}}

\newcommand{\cB}{{\mathcal B}}

\newcommand{\cC}{{\mathcal C}}

\newcommand{\cE}{{\mathcal E}}

\newcommand{\cF}{{\mathcal F}}
\newcommand{\cG}{{\mathcal G}}

\newcommand{\cN}{{\mathcal N}}

\newcommand{\cD}{{\mathcal D}}

\newcommand{\cR}{{\mathcal R}}

\newcommand{\X}{{\mathcal X}}

\newcommand{\hp}{{\hat p}}
\newcommand{\p}{{\mathbf p}}
\newcommand{\q}{{\mathbf q}}

\newcommand{\tp}{{\tilde p}}
\def \algtrc{{\it Trace-the-Best}}
\def \algdiv{{\it Divide-and-Battle}}
\def \algmed{{\it Halving-Battle}}
\def \algupdt{{\it Rank-Breaking}}

\newcommand{\btheta}{\boldsymbol \theta}
\newcommand{\bSigma}{\boldsymbol \Sigma}

\newcommand{\bnu}{{\boldsymbol \nu}}

\newcommand{\bsigma}{\boldsymbol \sigma}

% ------------ End Preamble ----------------------

\begin{document}

\maketitle

%{\color{red} Two main focus throughout the draft: (1). Maintaining pairwise estimates works bc it is PL -IIA (2). Tradeoff: Sample complexity vs. Subsetsize-$k$. }

\begin{abstract}
We introduce the probably approximately correct (PAC) \emph{Battling-Bandit} problem with the Plackett-Luce (PL) subset choice model--an online learning framework where at each trial the learner chooses a subset of $k$ arms from a fixed set of $n$ arms, and subsequently observes a stochastic feedback indicating preference information of the items in the chosen subset, e.g., the most preferred item or ranking of the top $m$ most preferred items etc. The objective is to identify a near-best item in the underlying PL model with high confidence. 
%The subsetwise preference framework is motivated by several practical settings, such as recommendation systems and information retrieval, where it is easier and more efficient to collect relative feedback for multiple arms at once. 
This generalizes the well-studied PAC \emph{Dueling-Bandit} problem over $n$ arms, which aims to recover the \emph{best-arm} from pairwise preference information, and is known to require $O(\frac{n}{\epsilon^2} \ln \frac{1}{\delta})$ sample complexity \citep{Busa_pl,Busa_top}. We study the sample complexity of this problem under various feedback models: (1) Winner of the subset (WI), and (2) Ranking of top-$m$ items (TR) for $2\le m \le k$. We show, surprisingly, that with winner information (WI) feedback over subsets of size $2 \leq k \leq n$, the best achievable sample complexity is still $O\left( \frac{n}{\epsilon^2} \ln \frac{1}{\delta}\right)$, independent of $k$, and the same as that in the Dueling Bandit setting ($k=2$). %{\color{blue}Add a line(??) The result might appear to be loose since one would possibly hope to achieve a faster learning rate having access to a subsetwise preference feedback, compared to pairwise preferences (dueling feedback)--however that is not true as we proved optimality of our algorithms showing a matching lower bound guarantee which reveals that with WI feedback, the flexibility of subsetwise feedback does not promote faster information aggregation.}
%Further analysing the exact tradeoff of learning rate with subset size $k$, our results reveal that when the learner is allowed to play only subsets of fixed size $k$, the sample complexity lower bound is in fact worse --- its $\Omega\bigg( \frac{n}{\epsilon^2} \ln \frac{k}{\delta}\bigg)$, which is off by a factor of $\ln (k)$.
% than the earlier,mn case with varying sets of size at most $k$.
For the more general top-$m$ ranking (TR) feedback model, we show a significantly smaller lower bound on sample complexity of $\Omega\bigg( \frac{n}{m\epsilon^2} \ln \frac{1}{\delta}\bigg)$, which suggests a multiplicative reduction by a factor ${m}$ owing to the additional information revealed from preferences among $m$ items instead of just $1$. We also propose two algorithms for the PAC problem with the TR feedback model with optimal (upto logarithmic factors) sample complexity guarantees, establishing the increase in statistical efficiency from exploiting rank-ordered feedback. 
%The efficacy of our proposed algorithms and their theoretical guarantees are validated through experimental evaluations on different datasets. 
\end{abstract}

\section{Introduction}
\label{sec:intro}
The {dueling bandit} problem has recently gained attention in the machine learning community \citep{Yue+12,Ailon+14,Zoghi+14RUCB,Busa_pl}. This is a variant of the multi-armed bandit problem \citep{Auer+02} in which the learner needs to learn an `best arm' from pairwise comparisons between arms. %  
%Several algorithms have been proposed for this problem, which are designed to learn to play the best arm as often as possible over time \cite{Yue+12, Zoghi+13,Ailon+14,Zoghi+14RUCB,Zoghi+14RCS,Zoghi+15,Ramamohan+16}.  %These algorithms are tailor-made to work well under specific assumptions on the underlying pairwise comparison model that generates the stochastic feedback, and under specific criteria that define the winner of a set of arms \cite{Ramamohan+16}. 
In this work, we consider a natural generalization of the dueling bandit problem where the learner can adaptively select a subset of $k$ arms ($k \geq 2$) in each round, and observe relative preferences in the subset following a Plackett-Luce (PL) feedback model \citep{Marden_book}, with the objective of learning the `best arm'. We call this the \emph{battling bandit} problem with the Plackett-Luce model. 

The battling bandit decision framework \citep{SG18,ChenSoda+18} models several application domains where it is possible to elicit feedback about preferred options from among a general set of offered options, instead of being able to compare only two options at a time as in the dueling setup. Furthermore, the phenomenon of competition -- that an option's utility or attractiveness is often assessed relative to that of other items in the offering -- is captured effectively by a subset-dependent stochastic choice model such as Plackett-Luce. Common examples of learning settings with such feedback include recommendation systems and search engines, medical interviews, tutoring systems--any applications where relative preferences from a chosen pool of options are revealed. %One can also use this to model online learning for resource allocation, an instance of which is the problem of finding a fast processor for a task among a pool of available processors -- in each round, one can adaptively select a subset of processors, run independent instances of the task on these processors, and observe which one executes the job the fastest, which is indicative of its performance relative to the other processors in the subset.

%{\color{blue}  battling bandits as natural primitive in stochastic search / opt problems}

We consider a natural probably approximately correct (PAC) learning problem in the battling bandit setting: %
%Identify a near-optimal item with high confidence, which we study formally as an $(\epsilon,\delta)$
%the $(\epsilon,\delta)$-{PAC} objective: 
Output an $\epsilon$-approximate best item (with respect to its Plackett-Luce parameter) with probability at least $(1-\delta)$, while keeping the total number of adaptive exploration rounds small. We term this the $(\epsilon,\delta)$-{PAC} objective of searching for an approximate winner or top-$1$ item.

%Similar to the regret minimization objective of Dueling-Bandits \cite{Zoghi+14RUCB,Ramamohan+16}, one goal here can be to learn to play an appropriately defined best-arm as often as possible, maintaining the online exploration-exploitation tradeoff. However we focus on the `pure-exploration' version of the problem, where the objective is to identify a near-optimal item with high confidence, formally introduced as the $(\epsilon,\delta)$ {PAC} objective, where the goal is to find an $\epsilon$-approximation of the `Best-Item' with probability at least $(1-\delta)$ (Section \ref{sec:obj}).

Our primary interest lies in understanding how the subset size $k$ influences the sample complexity of achieving $(\epsilon,\delta)$-{PAC} objective in subset choice models for various feedback information structures, e.g., winner information ({WI}), which returns only a single winner of the chosen subset, or the more general top ranking ({TR}) information structure, where an ordered tuple of $m$ `most-preferred' items is observed. More precisely, we ask: Does being able to play size-$k$ subsets help learn optimal items faster than in the dueling setting ($k=2$)?  How does this depend on the subset size $k$, and on the feedback information structure? How much, if any, does rank-ordered feedback accelerate the rate of learning, compared to only observing winner feedback? %
%
%Does the flexibility of playing  size-$k$ subsets help gather information faster? Or, what kind of subset-wise feedback helps learning faster? Is it possible to detect the best arm more quickly in a Battling framework than Dueling framework? How does the learning rate scale with the subset size $k$? Does ranking feedback aids in faster information accumulation compared to winner feedback?, etc. 
%What is the effect of allowing the learner to play variable sized subsets versus fixed sizes?, etc. 
This paper takes a step towards resolving such questions within the context of the Plackett-Luce choice model. Among the contributions of this paper are: %(Table \ref{tab:sum_con} also presents a compact summary of this work).

%\textbf{Contributions:}  
\begin{enumerate}
\item We frame a \emph{PAC version of Battling Bandits} with $n$ arms -- a natural generalization of the PAC-Dueling-Bandits problem \citep{Busa_pl} -- 
%in that, instead of playing two arms, the learner is allowed to play a subset of arms of size $k \in [n]$. Upon playing a subset, the environment generates stochastic feedback -- either only a single winner of the subset or a ranking over some items of the subset -- according to a subset-dependent choice model.
with the objective of finding an $\epsilon$-approximate best item with probability at least $1-\delta$ with minimum possible sample complexity, termed as the $(\epsilon,\delta)$-{PAC} objective (Section \ref{sec:obj}).
 
\item We consider learning with winner information {(WI)} feedback, where the learner can play a subsets $S_t \subseteq [n]$ of exactly $|S_t| = k$ distinct elements at each round $t$, following which a winner of $S_t$ is observed according to an underlying, unknown, Plackett-Luce model. We show an information-theoretic lower bound on sample complexity for $(\epsilon,\delta)$-{PAC} of $\Omega\bigg( \frac{n}{\epsilon^2} \ln \frac{1}{\delta}\bigg)$ rounds (Section \ref{sec:lb_wi}), which is of the same order as that for the dueling bandit ($k=2$) \citep{BTM}. This implies that, despite the increased flexibility of playing sets of potentially large size $k$, with just winner information feedback, one cannot hope for a faster rate of learning than in the case of pairwise selections. Intuitively, competition among a large number ($k$) of elements vying for the top spot at each time exactly offsets the potential gain that being able to test more alternatives together brings. 
%In fact, further analysing the exact tradeoff of learning rate with subset size $k$, our results reveal that when the learner is allowed to play only subsets of fixed size $k$, the sample complexity lower bound is worse by a factor of $\ln (k)$ than the earlier case, i.e. $\Omega\bigg( \frac{n}{\epsilon^2} \ln \frac{k}{\delta}\bigg)$.
On the achievable side, we design two algorithms (Section \ref{sec:algo_wi}) for the $(\epsilon,\delta)$-{PAC} objective, and derive sample complexity guarantees which are optimal within a logarithmic factor of the lower bound derived earlier. 
 When the learner is allowed to play subsets of sizes $1,2, \ldots$ upto $k$, which is a slightly more flexible setting than above, we design a median elimination-based algorithm with order-optimal $O\left(\frac{n}{\epsilon^2}\ln \frac{1}{\delta}\right)$ sample complexity which, when specialized to $k=2$, improves upon existing sample complexity bounds for PAC-dueling bandit algorithms, e.g. \cite{BTM,Busa_pl} under the PL model (Section. \ref{sec:reswi_anyk}).

\item We next study the $(\epsilon,\delta)$-PAC problem in a more general top-ranking {(TR)} feedback model where the learner gets to observe the ranking of top $m$ items drawn from the Plackett-Luce distribution, $2 \le m \le k$ (Section \ref{sec:feed_mod}), departing from prior work. For $m = 1$, the setting simply boils down to WI feedback model. In this case, we are able to prove a sample complexity lower bound of $\Omega\bigg( \frac{n}{m\epsilon^2} \ln \frac{1}{\delta}\bigg)$ 
%with varying sets of size at most $k$ and correspondingly $\Omega\bigg( \frac{n}{m\epsilon^2} \ln \frac{k}{\delta}\bigg)$ with fixed sets of size $k$, 
 (Theorem \ref{thm:lb_pacpl_rnk}), which suggests that with top-$m$ ranking ({TR}) feedback, it may be possible to aggregate information $m$ times faster than with just winner information feedback. We further present {two} algorithms (Section \ref{sec:alg_fr}) for this problem which, are shown to enjoy optimal (upto logarithmic factors) sample complexity guarantees. This formally shows that the $m$-fold increase in statistical efficiency by exploiting richer information contained in top-$m$ ranking feedback is, in fact, algorithmically achievable. 

\item From an algorithmic point of view, we elucidate how the structure of the Plackett-Luce choice model, such as its independent of irrelevant attributes (IIA) property, play a crucial role in allowing the development of parameter estimates, together with tight confidence sets, which form the basis for our learning algorithms. It is indeed by leveraging this property (Lemma \ref{lem:pl_simulator}) that we afford to maintain consistent pairwise preferences of the items by applying the concept of \emph{Rank Breaking} to subsetwise preference data. This significantly alleviates the combinatorial explosion that could otherwise result if one were to keep more general subset-wise estimates.

%of $O\bigg( \frac{n}{m\epsilon^2} \ln \frac{1}{\delta}\bigg)$ 
%for the above TR models (Section \ref{sec:alg_fr}).

%\item We provide supporting experimental results to show the efficacy of our algorithms and validate our theoretical guarantees, on both synthetic data and real datasets.

\end{enumerate}

%{\color{blue} Explain techniques --- How PL (IIA) help to maintain pairwise estimates instead of subsetwise estimates --- Motivate the difficulty of maintaining subset-wise feedback $O(n^k)$ possibilities and how we got rid of it with Pairwise estimates owing to Lemma \ref{lem:pl_simulator} }

\textbf{Related Work:}
Statistical parameter estimation in Plackett-Luce models has been studied in detail in the offline batch (non-adaptive) setting \citep{SueIcml+15,KhetanOh16,SueIcml+17}. 
%
%{ The problem is of $k$-subset generalization of \emph{Dueling Bandits} is not studied in the literature before and we are the first to introduce the framework of \emph{Battling Bandits}. Hence there is almost no work in the literature that addresses the current problem setting}

In the online setting, there is a fairly mature body of work concerned with PAC best-arm (or top-$\ell$ arm) identification in the classical multi-armed bandit \citep{Even+06,Audibert+10,Kalyanakrishnan+12,Karnin+13,pmlr-v35-jamieson14}, where absolute utility information is assumed to be revealed upon playing a single arm or item. Though most work on dueling bandits has focused on the regret minimization goal \citep{Zoghi+14RUCB,Ramamohan+16}, there have been recent developments on the PAC objective for different pairwise preference models, such as those satisfying stochastic triangle inequalities and strong stochastic transitivity \citep{BTM}, general utility-based preference models \citep{SAVAGE}, the Plackett-Luce model \citep{Busa_pl}, the Mallows model \citep{Busa_mallows}, etc. Recent work in the PAC setting focuses on learning objectives other than identifying the single (near) best arm, e.g. recovering a few of the top arms \citep{Busa_top,MohajerIcml+17,ChenSoda+17}, or the true ranking of the items \citep{Busa_aaai,falahatgar_nips}. 

The work which is perhaps closest in spirit to ours is that of \citet{ChenSoda+18}, which addresses the problem of learning the top-$\ell$ items in Plackett-Luce battling bandits. Even when specialized to $\ell = 1$ (as we consider here), however, this work differs in several important aspects from what we attempt. \citet{ChenSoda+18} develop algorithms for the probably {\em exactly} correct objective (recovering a near-optimal arm is not favored), and, consequently, show instance-dependent sample complexity bounds, whereas we allow a tolerance of $\epsilon$ in defining best arms, which is often natural in practice \cite{Busa_pl,BTM}. As a result, we bring out the dependence of the sample complexity on the specified tolerance level $\epsilon$, rather than on purely instance-dependent measures of hardness. Also, their work considers only winner information (WI) feedback from the subsets chosen, whereas we consider, for the first time, general top-$m$ ranking information feedback. %and show the dramatic increase in statistical efficiency that can result from even a few rank-order data being available. In addition, we elucidate how properties of the Plackett-Luce choice model, such as independent of irrelevant attributes (IIA), play a crucial role in allowing the development of clean and efficient parameter estimates, together with tight confidence sets, that form the basis for most of our learning algorithms. 

%of Plackett-Luce model in an active setting with subsetwise winner feedback model.% {\color{red} (Need to compare more?)}.

%There is related literature on dynamic assortment optimization, where the goal is to offer a subset of items to customers in order to maximize expected revenue. %The demand of an item depends on the substitution behaviour of the customers, captured by a choice model.
%% that specifies the probability that a consumer selects a particular item from any offered set.
% This problem has been studied under different choice models -- multinomial logit \citep{assort-mnl}, Mallows and mixture-of-Mallows \citep{assort-mallows}, Markov chain-based choice models \citep{assort-markov}, single-transition model \citep{assort-stm}, general discrete-choice models \citep{assort-discrete}, etc. 
A related battling-type bandit setting has been studied as the MNL-bandits assortment optimization problem by \citet{Agrawal+16}, although it takes prices of items into account when defining their utilities. %due to which the notion of a `best item' is different from ours ({\it Condorcet winner}), and the two settings are in general incomparable. 
As a result, their work optimizes for a subset with highest expected revenue (price), whereas we search for a best item (Condorcet winner). and the two settings are in general incomparable. 
%Finally, regret minimization has been studied in battling bandits by \citet{SG18}, under a subset choice model induced by pairwise preference matrices as modelled in the dueling bandit literature. 

%\textbf{Organization:} We give the necessary preliminaries about the Plackett-Luce choice model in Section \ref{sec:prelims}. The formal problem statement of the problem of PAC battling bandits with the Plackett-Luce model is introduced in Section \ref{sec:prb_setup}, with both winner information {(WI)} and top ranking {(TR)} feedback models (Section \ref{sec:feed_mod}) and the PAC learning objective (Section \ref{sec:obj}). In Section \ref{sec:res_wi}, we analyze the problem with winner information ({WI}) feedback; we first establish a fundamental limit of performance by deriving a sample complexity lower bound (Section \ref{sec:lb_wi}), and then present three algorithms along with their sample complexity guarantees (Section \ref{sec:algo_wi}). 
%Section \ref{sec:res_fr} carries out the same program for the more general top ranking ({TR}) feedback model; we present the sample complexity lower bound in Section \ref{sec:lb_fr}, along with the corresponding algorithms with matching lower bound guarantees in Section \ref{sec:alg_fr}.
%In section \ref{sec:experiments}, we describe our experimental results, and 
%Section \ref{sec:conclusion} concludes by outlining scope for future work. %
%The final conclusion and scope of future works are discussed in Section \ref{sec:conclusion}. %Due to space constraints, 
%All the proofs of results are detailed in the appendix.

\section{Preliminaries}
\label{sec:prelims}
{\bf Notation.} We denote by $[n]$ the set $\{1,2,...,n\}$. For any subset $S \subseteq [n]$, let $|S|$ denote the cardinality of $S$. % with $S(i)$ being its $i$-{th} element, $\forall i \in [|S|]$.
When there is no confusion about the context, we often represent (an unordered) subset $S$ as a vector, or ordered subset, $S$ of size $|S|$ (according to, say, a fixed global ordering of all the items $[n]$). In this case, $S(i)$ denotes the item (member) at the $i$th position in subset $S$.   
$\bSigma_S = \{\sigma \mid \sigma$  is a permutation over items of $ S\}$, where for any permutation $\sigma \in \Sigma_{S}$, $\sigma(i)$ denotes the element at the $i$-{th} position in $\sigma, i \in [|S|]$.
$\1(\varphi)$ is generically used to denote an indicator variable that takes the value $1$ if the predicate $\varphi$ is true, and $0$ otherwise. %We use $\1_n$ to denote an $n$-dimensional vector of all $1$'s. 
%Given a set $[n]$, for any two $i,j \in [n]$, $i \succ j$ denotes the event $i$ is preferred over $j$.
$x \vee y$ denotes the maximum of $x$ and $y$, and $Pr(A)$ is used to denote the probability of event $A$, in a probability space that is clear from the context.

\subsection{Discrete Choice Models and Plackett-Luce (PL)}
\label{sec:RUM}  

%{\color{blue} Can be Removed/Shortened.}
A discrete choice model specifies the relative preferences of two or more discrete alternatives in a given set. %
%, such as selecting the most preferred item from a given set of item, rating top-$k$ movies of highest preferences, ranking candidates in an election etc \cite{Ben85}. 
%We here use it for the purpose of modelling ``winner selection probabilities" -- given a choice set $[n]$, $S \subseteq [n]$ and $i \in S$, the probability of selecting $i$ as the winner of $S$, denoted by $Pr(i|S)$. {\color{red} We denote a choice model uniquely by its choice probability function $Pr$}. 
%
%{\bf Random Utility Models (RUM).} 
A widely studied class of discrete choice models is the class of {\it Random Utility Models} (RUMs), which assume a ground-truth utility score $\theta_{i} \in \R$ for each alternative $i \in [n]$, and assign a conditional distribution $\cD_i(\cdot|\theta_{i})$ for scoring item $i$. To model a winning alternative given any set $S \subseteq [n]$, one first draws a random utility score $X_{i} \sim \cD_i(\cdot|\theta_{i})$ for each alternative in $S$, and selects an item with the highest random score. 

%
%an item $i \in S$ as the winner with probability of $X_i$ being the maximum score among all items in $S_t$, i.e.
%$
% i \sim Pr(i|S) = Pr(X_{i} > X_{j} ~~\forall j \in S_{t}\setminus\{i\}) ~~\forall i \in S.
%$
%For simplifying the notations we denote $Pr(X_{i} > X_{j} ~~\forall j \in S_{t}\setminus\{i\})$ by $Pr(i|S_{t})$ henceforth. 

One widely used RUM is the {\it Multinomial-Logit (MNL)} or {\it Plackett-Luce model (PL)}, where the $\cD_i$s are taken to be independent Gumbel distributions with location parameters $\theta'_i$ and scale parameter $1$ \citep{Az+12}, which result to probability densities $\cD_i(x_{i}|\theta'_i) = e^{-(x_j - \theta'_j)}e^{-e^{-(x_j - \theta'_j)}}$, $\theta'_i \in R, ~ \forall i \in [n]$. Moreover assuming $\theta'_i = \ln \theta_i$, $\theta_i > 0 ~\forall i \in [n]$, in this case the probability that an alternative $i$ emerges as the winner in the set $S \ni i$ becomes proportional to its parameter value:
\begin{align}
\label{eq:prob_PL}
Pr(i|S) = \frac{{\theta_i}}{\sum_{j \in S}{\theta_j}}.
\end{align}
We will henceforth refer the above choice model as PL model with parameters $\btheta = (\theta_1, \ldots, \theta_n)$.
 Clearly the above model induces a {total ordering} on the arm set $[n]$: If $p_{ij} = P(i \succ j) = Pr(i|\{i,j\}) = \frac{{\theta_i}}{{\theta_i}+{\theta_j}}$ denotes the pairwise probability of item $i$ being preferred over item $j$, then $p_{ij} \ge \frac{1}{2}$ if and only if $\theta_i \ge \theta_j$, or in other words if $p_{ij} \ge \frac{1}{2}$ and $p_{jk} \ge \frac{1}{2}$ then $p_{ik} \ge \frac{1}{2}$, $\forall i,j,k \in [n]$ \citep{Ramamohan+16}.

Other families of discrete choice models can be obtained by imposing different probability distributions over the utility scores $X_i$, e.g. if $(X_1,\ldots X_n) \sim \cN(\btheta,\boldsymbol \Lambda)$ are jointly normal with mean $\btheta = (\theta_1,\ldots \theta_n)$ and covariance $\boldsymbol \Lambda \in \R^{n \times n}$, then the corresponding RUM-based choice model reduces to the {\it Multinomial Probit (MNP)}. Unlike MNL, though, the choice probabilities $Pr(i|S)$ for the MNP model do not admit a closed-form expression \citep{RUMegs}.
 
\subsection{Independence of Irrelevant Alternatives}
\label{sec:iia}
A choice model $Pr$ is said to possess the {\it Independence of Irrelevant Alternatives (IIA)} property if the ratio of probabilities of choosing any two items, say $i_1$ and $i_2$ from within any choice set $S \ni {i_1,i_2}$ is independent of a third alternative $j$ present in $S$ \citep{IIA-relevance16}. More specifically,
$%\begin{align*}
\frac{Pr(i_1|S_1)}{Pr(i_2|S_1)} = \frac{Pr(i_1|S_2)}{Pr(i_2|S_2)} \text{ for any two distinct subsets } S_1,S_2 \subseteq [n]
$ %\end{align*} 
that contain $i_1$ and $i_2$. One example of such a choice model is Plackett-Luce. %the {\it PL} model: From \eqref{eq:prob_PL}, it is easy to note that if $Pr$ is a PL model with parameters $(\theta_1,\ldots,\theta_n)$, then for all $S \subseteq [n]$, $\frac{Pr(i_1|S)}{Pr(i_2|S)}= \frac{{\theta_{i_1}}}{{\theta_{i_2}}}$.
\begin{rem}
IIA turns out to be very valuable in estimating the parameters of a PL model, with high confidence, via \emph{Rank-Breaking} -- the idea of extracting pairwise comparisons from (partial) rankings and applying estimators on the obtained pairs, treating each comparison independently. Although this technique has previously been used in batch (offline) PL estimation \citep{KhetanOh16}, we show that it can be used in online problems for the first time. We crucially exploit this property of the PL model in the algorithms we design (Algorithms \ref{alg:trace_bb}-\ref{alg:half_bb}), and in establishing their correctness and sample complexity guarantees.
\end{rem}

%{\color{red} Simulator Lemma --- Need to add a lemma justifying why we could apply Hoeffding's inequality in the proofs of Algorithms \ref{alg:trace_bb}-\ref{alg:half_bb} for `confidence-bounding' the estimated pairwise probabilities.}

%\begin{restatable}[\textbf{IIA property of PL model}]{lem}{plsimulator}
%\label{lem:pl_simulator}
%Consider the PL choice model with parameters $\btheta = (\theta_1,\theta_2, \ldots, \theta_n)$. Fix two distinct items $i,j \in [n]$. Suppose $T$ rounds of battle is played on sets $S_1,S_2,\ldots, S_T$ such that $i,j \in S_t \subseteq [n],\, 2 \le |S_t| \le n, \, \forall t \in [T]$, and $i_t \in S_t$ denotes the winner of the $t$-th battle. Define the random variables 
%\[
%X_i^t := \1(\{i_t = i\} \vert \{i_t \in \{i,j\}\}), \; \forall i \in [n], t \in [T].
%\]
%then $X_i^t$ follows Bernoulli$(\frac{\theta_i}{\theta_i + \theta_j})$ distribution $\forall i \in [n], t \in [T]$.
%\end{restatable}

\vspace*{-5pt}

\begin{restatable}[Deviations of pairwise win-probability estimates for PL model]
{lem}{plsimulator}
\label{lem:pl_simulator}
\hspace*{-7pt} Consider a Plackett-Luce choice model with parameters $\btheta = (\theta_1,\theta_2, \ldots, \theta_n)$ (see Eqn. \eqref{eq:prob_PL}), and fix two distinct items $i,j \in [n]$. Let $S_1, \ldots, S_T$ be a sequence of (possibly random) subsets of $[n]$ of size at least $2$, where $T$ is a positive integer, and $i_1, \ldots, i_T$ a sequence of random items with each $i_t \in S_t$, $1 \leq t \leq T$, such that for each $1 \leq t \leq T$, (a) $S_t$ depends only on $S_1, \ldots, S_{t-1}$, and (b) $i_t$ is distributed as the Plackett-Luce winner of the subset $S_t$, given $S_1, i_1, \ldots, S_{t-1}, i_{t-1}$ and $S_t$, and (c) $\forall t: \{i,j\} \subseteq S_t$ with probability $1$. Let $n_i(T) = \sum_{t=1}^T \1(i_t = i)$ and $n_{ij}(T) = \sum_{t=1}^T \1(\{i_t \in \{i,j\}\})$. Then, for any positive integer $v$, and $\eta \in (0,1)$,
\[
Pr\left( \frac{n_i(T)}{n_{ij}(T)} \hspace*{-2pt} - \hspace*{-2pt} \frac{\theta_i}{\theta_i + \theta_j} \ge \eta, n_{ij}(T) \geq v \right) \vee \, Pr\left( \frac{n_i(T)}{n_{ij}(T)} \hspace*{-2pt} - \hspace*{-2pt} \frac{\theta_i}{\theta_i + \theta_j} \le -\eta, n_{ij}(T) \geq v \right) \leq e^{-2v\eta^2}. 
\]
\end{restatable}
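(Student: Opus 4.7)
The plan is to reduce the stated tail bound to a clean Markov-style argument applied to an exponential supermartingale whose intrinsic clock is the random count $n_{ij}(T)$ rather than $T$. Let $\cF_t$ denote the $\sigma$-algebra generated by $S_1, i_1, \ldots, S_t, i_t$, set $p := \theta_i/(\theta_i+\theta_j)$ and $B_t := \1(i_t \in \{i,j\})$, and define the centred score $Z_t := \1(i_t = i) - p\, B_t$. A direct calculation from the PL winner probabilities \eqref{eq:prob_PL} yields the IIA-type identity $\Pr(i_t = i \mid \cF_{t-1}, S_t, B_t = 1) = p$, independent of $S_t$ beyond the standing hypothesis $\{i,j\} \subseteq S_t$. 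Hence $Z_t$ is an $(\cF_t)$-martingale difference: it vanishes on $\{B_t = 0\}$ and, on $\{B_t = 1\}$, is a centred Bernoulli-type variable taking values in $[-p,\,1-p]$, an interval of length one.

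Next I would apply Hoeffding's lemma conditionally on $\cF_{t-1}, S_t, B_t$ to the unit-range, zero-mean increment $Z_t$. Taking the further expectation over $B_t$ given $\cF_{t-1}, S_t$ yields $\E[e^{\lambda Z_t - (\lambda^2/8) B_t} \mid \cF_{t-1}, S_t] \leq 1$ for every $\lambda \in \R$, since the bound is trivial on $\{B_t=0\}$ and saturates Hoeffding on $\{B_t=1\}$. Telescoping over $t$ shows that
\[
\phi_\lambda(t) \,:=\, \exp\bigl(\lambda\, M_t - (\lambda^2/8)\, n_{ij}(t)\bigr), \qquad M_t \,:=\, \sum_{s\le t} Z_s \,=\, n_i(t) - p\, n_{ij}(t),
\]
is a nonnegative $(\cF_t)$-supermartingale with $\E[\phi_\lambda(T)] \leq \phi_\lambda(0) = 1$.

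The final step is Markov's inequality with an optimized $\lambda$. On the event $\{n_i(T)/n_{ij}(T) - p \geq \eta,\, n_{ij}(T) \geq v\}$ one has $M_T \geq \eta\, n_{ij}(T)$, so $\phi_\lambda(T) \geq \exp\bigl((\lambda\eta - \lambda^2/8)\, n_{ij}(T)\bigr) \geq \exp\bigl((\lambda\eta - \lambda^2/8)\, v\bigr)$ whenever $\lambda\eta - \lambda^2/8 \geq 0$. Maximising the exponent at $\lambda = 4\eta$ gives the rate $2v\eta^2$, so Markov delivers the claimed upper tail $e^{-2v\eta^2}$. The matching lower-tail bound follows by running the same argument with $-Z_t$ in place of $Z_t$.

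The main obstacle, in my view, is accommodating the adaptive, non-i.i.d.\ structure: since the $S_t$ may be chosen in view of all past winners, a naive reduction to i.i.d.\ Bernoulli$(p)$ trials along the random subsequence $\{t : i_t \in \{i,j\}\}$ would require an optional-skipping argument whose measurability bookkeeping is unpleasant. Using $\phi_\lambda$ with $n_{ij}(t)$ as the intrinsic clock sidesteps this entirely, because its supermartingale property holds pointwise in the conditioning and is insensitive to how $S_t$ is generated. The only place the PL model itself enters is the conditional identity $\Pr(i_t=i \mid \cF_{t-1}, S_t, B_t=1) = p$, which is precisely the IIA property emphasised in the preceding remark.
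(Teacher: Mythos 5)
Your proof is correct, but it takes a genuinely different route from the one in the paper. The paper proves the lemma by a coupling/simulation argument: it constructs an equivalent probability space in which an auxiliary iid Bernoulli$\bigl(\theta_i/(\theta_i+\theta_j)\bigr)$ sequence $Z_1, Z_2, \ldots$ drives the outcome whenever the winner falls in $\{i,j\}$ (with an independent coin deciding whether that happens, and IIA guaranteeing the construction reproduces the correct joint law), then conditions on $n_{ij}(T)=m$, uses the independence of $n_{ij}(T)$ from the $Z$-sequence to decouple, and applies the ordinary Hoeffding inequality to the length-$m$ prefix, summing over $m \ge v$. Your argument instead builds the self-normalized exponential supermartingale $\exp\bigl(\lambda(n_i(t)-p\,n_{ij}(t)) - (\lambda^2/8)\,n_{ij}(t)\bigr)$ from the conditionally centred, unit-range increments $\1(i_t=i)-p\,\1(i_t\in\{i,j\})$, and closes with Markov at $\lambda=4\eta$; the only input from the PL model is the same IIA identity. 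The two approaches buy different things: the paper's coupling is more elementary and makes the ``effective iid Bernoulli subsequence'' picture explicit, but it leans on the hypothesis that $S_t$ depends only on $S_1,\ldots,S_{t-1}$ (so that $n_{ij}(T)$ is independent of the $Z$-sequence), whereas your supermartingale bound holds conditionally pointwise and therefore remains valid even if $S_t$ is chosen adaptively as a function of the past winners $i_1,\ldots,i_{t-1}$ --- a strictly more general statement, obtained with no extra measurability bookkeeping. Both yield the identical constant $e^{-2v\eta^2}$.
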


%{\color{blue} Too complex looking, Consider making an informal short version of the lemma;}

%Note: The result gives an exponential deviation inequality for the estimate $\frac{n_i(T)}{n_{ij}(T)}$ of $\frac{\theta_i}{\theta_i + \theta_j}$. Although it is tempting to conclude that $\frac{n_i(T)}{n_{ij}(T)}$ is an unbiased estimate of $\frac{\theta_i}{\theta_i + \theta_j}$, it is unclear if this holds for any finite time horizon $T$ due to the estimate being a ratio of two random variables. In an asymptotic sense, as $n_{ij}(T) \to \infty$, the bias can indeed be seen to vanish by a renewal theory argument. \\

\begin{proof}\textbf{(sketch)}.
The proof uses a novel coupling argument to work in an equivalent probability space for the PL model with respect to the item pair $i,j$, as follows. Let $Z_1, Z_2, \ldots$ be a sequence of iid Bernoulli random variables with success parameter $\theta_i/(\theta_i + \theta_j)$. A counter $C$ is first initialized to $0$. At each time $t$, given $S_1, i_1, \ldots, S_{t-1}, i_{t-1}$ and $S_t$,  an independent coin is tossed with probability of heads $(\theta_i + \theta_j)/\sum_{k \in S_t} \theta_k$. If the coin lands tails, then $i_t$ is drawn as an independent sample from the Plackett-Luce distribution over $S_t \setminus \{i,j\}$, else, the counter is incremented by $1$, and $i_t$ is returned as $i$ if $Z_C = 1$ or $j$ if $Z_C = 0$. % where $C$ is the present value of the counter.
This construction yields the correct joint distribution for the sequence $i_1, S_1, \ldots, i_T, S_T$, because of the IIA property of the PL model: 
	\[ Pr(i_t = i | i_t \in \{i,j\}, S_t) = \frac{Pr(i_t = i | S_t)}{Pr(i_t \in \{i,j\} | S_t)} = \frac{\theta_i/\sum_{k \in S_t} \theta_k}{(\theta_i + \theta_j)/\sum_{k \in S_t} \theta_k} = \frac{\theta_i}{\theta_i + \theta_j}. \]
%	Furthermore, $i_t \in \{i,j\}$ if and only if $C$ is incremented at round $t$, and $i_t = i$ if and only if $C$ is incremented at round $t$ and $Z(C) = 1$.
The proof now follows by applying Hoeffding's inequality on prefixes of the sequence $Z_1,Z_2, \ldots$.% along with the crucial observation that $S_1, \dots, S_T, X_1, \ldots, X_T$ are independent of $Z_1, Z_2, \ldots,$, and so $n_{ij}(T) \in \sigma(S_1, \dots, S_T, X_1, \ldots, X_T)$ is independent of $Z_1, \ldots, Z_m$ for any fixed $m$.
%to derive the desired concentration guarantees.
\end{proof}
%\textbf{Rank Breaking.} Rank-breaking refers to the idea of extracting a set of pairwise comparisons from the partial rankings of a given set of items $S$ and applying estimators on the obtained pairs, treating each comparison independently \cite{KhetanOh16}.

%The idea is to apply rank-breaking to all possible pairwise relations apparent from the ordering given, e.g. suppose we are given the full ranking of a set of $4$ elements $S = \{a,b,c,d\}$ such that $b \succ a \succ c \succ d$. Then full rank-breaking produces the set of $6$ pairwise comparisons: $\{(b\succ a), (b\succ c), (b\succ d), (a\succ c), (a\succ d), (c\succ d)\}$. Similarly, instead if we are only given the most preferred (rank $1$) item of the set $\{a,b,c,d\}$, which is say $b$, applying rank breaking one obtains only $3$ pairwise comparisons $(b\succ c),(b\succ a),(b\succ d)$.
  
%Of course for a general choice model, rank breaking may lead to inconsistent estimates of the underlying model parameters, as shown by \cite{Az+12}. However, we prove  that if the underlying choice model satisfies {\it IIA property}, applying {\it rank breaking} on the winner information produces leads to `nice' concentration guarantees for the pairwise probabilities $p_{ij}$ {\color{red} which is crucially used for proving the regret guarantees of our algorithms (Section \ref{sec:estpp_from_ml}).}

\vspace*{-17pt}

\section{Problem Setup}
\label{sec:prb_setup}

We consider the PAC version of the sequential decision-making problem of finding the {best item} in a set of $n$ items by making subset-wise comparisons. %
%--a generalization of famously studied PAC \emph{Dueling Bandit} problem \cite{Zoghi+14RUCB,Busa_pl} which learns the best item from pairwise comparisons.
Formally, the learner is given a finite set $[n]$ of $n > 2$ arms. At each decision round $t = 1, 2, \ldots$, the learner selects a subset $S_t \subseteq [n]$ of $k$ 
distinct items, and receives (stochastic) feedback depending on (a) the chosen subset $S_t$, and (b) a Plackett-Luce (PL) choice model with parameters $\btheta = (\theta_1,\theta_2,\ldots, \theta_n)$ a priori unknown to the learner. The nature of the feedback can be of several types as described in Section \ref{sec:feed_mod}. Without loss of generality, we will henceforth assume $\theta_i \in [0,1], \,\forall i \in [n]$, since the PL choice probabilities are positive scale-invariant by  \eqref{eq:prob_PL}. We also let $\theta_1 > \theta_i \, \forall i \in [n]\setminus\{1\}$ for ease of  exposition\footnote{We naturally assume that this knowledge is not known to the learning algorithm, and note that extension to the case where several items have the same highest parameter value is easily accomplished.}. We call this decision-making model, parameterized by a PL instance $\btheta = (\theta_1,\theta_2,\ldots, \theta_n)$ and a playable subset size $k$, as {Battling Bandits (BB) with the Plackett-Luce (PL)}, or \emph{BB-PL} in short.
We define a \emph{best item} to be one with the highest score parameter: $i^* \in \underset{i \in [n]}{\text{argmax}}~\theta_i$. Under the assumptions above, $i^* = 1$ uniquely. Note that here we have $p_{1i} = P(1\succ i) > \frac{1}{2}$, $\forall i \in [n]\setminus\{1\}$, so item $1$ is the \emph{Condorcet Winner} \citep{Ramamohan+16} of the PL model.

%\textbf{`Best' Ranking.} We define the \textit{`Best' Ranking} by ordering the items according to decreasing score parameters $\btheta = (\theta_1,\theta_2,\ldots, \theta_n)$; i.e. the \textit{`Best' Ranking} is defined as $\bsigma^* \in \bSigma_{[n]}$ such that $\theta_{\sigma^*(i)} > \theta_{\sigma^*(j)}$ if $\theta_i > \theta_j$, for all $i,j \in [n]$. Clearly, with our above assumption of $\theta_1 > \theta_2 \ge \ldots \ge \theta_n$, we have $\bsigma^* = (1, 2, \ldots, n)$. 

\subsection{Feedback models}
\label{sec:feed_mod}
By feedback model, we mean the information received (from the `environment') once the learner plays a subset $S \subseteq [n]$ of $k$ items. We define three  types of feedback in the PL battling model:

\begin{itemize}

\item \textbf{Winner of the selected subset (WI):} 
The environment returns a single item $I \in S$, drawn independently from the probability distribution
$%\begin{align}
\label{eq:prob_win}
Pr(I = i|S) = \frac{{\theta_i}}{\sum_{j \in S} \theta_j} ~~\forall i \in S.
$%\end{align} 

%\begin{rem}
%Note that the probability of any item $a \in S$ to get selected as the winner is 
%$
%Pr(a|S) = \frac{{\theta_{a}}}{\sum_{b \in S}{\theta_{b}}}, ~~\forall a \in S.
%$
%When $|S| = 2$, say $S = \{i,j\}$, we use the shorthand $p_{ij}$ to denote $P(i|\{i,j\}) = \frac{\theta_i}{\theta_i + \theta_j}$, the probability of item $i$ being preferred over item $j$, for all $i,j \in [n]$.
%%$m_S(a)$ being the multiplicity of $a \in S$. %Clearly, for {\textbf{WI-1}}, $m_S(a) = 1, ~\forall a \in S$.
%\end{rem}

\item \textbf{Full ranking selected subset of items (FR):} The environment returns a full ranking $\bsigma \in \bSigma_{S}$, drawn from the probability distribution
$%\begin{align}
\label{eq:prob_rnk1}
Pr(\bsigma = \sigma|S) = \prod_{i = 1}^{|S|}\frac{{\theta_{\sigma(i)}}}{\sum_{j = i}^{|S|}\theta_{\sigma(j)}}, \; \sigma \in \bSigma_S.
$ %\end{align} 
In fact, this is equivalent to picking $\bsigma(1)$ according to the winner (WI) feedback from $S$, then picking $\bsigma(2)$ according to WI feedback from $S \setminus \{\bsigma(1)\}$, and so on, until all elements from $S$ are exhausted, or, in other words, successively sampling $|S|$ winners from $S$ according to the PL model, without replacement.
 
\iffalse%%%%%%%%%%%%%%%%%%%%%% 
{\bf FR-2} 
The goal of the algorithm here is to select a set $S_t \in [n]^k$ of $k$ elements in each round $t$ such that, $2 \le k \le n$. 
Let $d(S)$ denotes the number of distinct elements in a k-set $S \in [n]^k$. We denote $d = |d(S)|$. Then upon receiving $S$, the environment returns a full ranking $\bsigma \in \bSigma_{d(S)}$ such that, $\sigma(i)$ denotes the element at position $i$ in $\bsigma$, such that
$%\begin{align}
\label{eq:prob_rnk2}
Pr(\bsigma|S) = Prrod_{i = 1}^{d}\frac{{\theta_{\sigma(i)}}}{\sum_{j = i}^{d}{\theta_{\sigma(j)}}}.
$%\end{align} 
\fi%%%%%%%%%%%%%%%%%%%%%%

A feedback model that generalizes the types of feedback above is:

\item \textbf{Top-$m$ ranking of items (TR-$m$ or TR):} The environment returns a ranking of only $m$ items from among $S$, i.e., the environment first draws a full ranking $\bsigma$ over $S$ according to Plackett-Luce as in {\bf FR} above, and returns the first $m$ rank elements of $\bsigma$, i.e., $(\bsigma(1), \ldots, \bsigma(m))$. It can be seen that for each permutation $\sigma$ on a subset $S_m \subset S$, $|S_m| = m$, we must have $Pr(\bsigma = \sigma|S) = \prod_{i = 1}^{m}\frac{{\theta_{\sigma(i)}}}{\sum_{j = i}^{m}\theta_{\sigma(j)} + \sum_{j \in S \setminus S_m}\theta_{\sigma(j)}}$. Generating such a $\bsigma$ is also equivalent to successively sampling $m$ winners from $S$ according to the PL model, without replacement. It follows that {\bf TR} reduces to {\bf FR} when $m=k=|S|$ and to {\bf WI} when $m = 1$.

\end{itemize}

\subsection{Performance Objective: Correctness and Sample Complexity} 
\label{sec:obj}

%\begin{itemize}
%\noindent
%\textbf{Probably Approximately Correct (PAC):} 
Suppose $\btheta \equiv (\theta_1, \ldots, \theta_n)$ and $k \leq n$ define a BB-PL instance with best arm $i^* = 1$, and $0 < \epsilon \leq \frac{1}{2}, 0 < \delta \leq 1$ are given constants. An arm $i \in [n]$ is said to be $\epsilon$-optimal\footnote{informally, a `near-best' arm} if the probability that $i$ beats $1$ is over $\frac{1}{2} - \epsilon$, i.e., if $p_{i1}:= Pr(i|\{1,i\}) > \frac{1}{2} - \epsilon$. A sequential algorithm that operates in this BB-PL instance, using feedback from an appropriate subset-wise feedback model (e.g., WI, FR or TR), is said to be $(\epsilon,\delta)$-{PAC} if (a) it stops and outputs an arm $I \in [n]$ after a finite number of decision rounds (subset plays) with probability $1$, and (b) the probability that its output $I$ is an $\epsilon$-optimal arm is at least $1-\delta$, i.e, $Pr(\text{$I$ is $\epsilon$-optimal}) \geq 1-\delta$. Furthermore, by {\em sample complexity} of the algorithm, we mean the expected time (number of decision rounds) taken by the algorithm to stop. 

%The goal is to identify a \emph{near Best Item} $i$ which is a `close competitor' of the `Best Item' $i^* = 1$ in terms of the pairwise preference $p_{1i}:= P(1|\{1,i\})$. We define this as the $(\epsilon,\delta)$ {PAC-WI} objective which aims to find an item $i$ such that $Pr(p_{1i} > \frac{1}{2} + \epsilon) < \delta$. Thus item $1$ should not beat item $i$ by more than $\epsilon$ margin in a pairwise duel, with high probability $(1-\delta)$.

Note that $p_{ij} > \frac{1}{2} + \epsilon \Leftrightarrow \frac{\theta_i}{\theta_j} > \frac{1/2 + \epsilon}{1/2 - \epsilon}, \, \forall i,j \in [n]$, so the score parameter $\theta_i$ of a near-best item must be at least $\frac{1/2-\epsilon}{1/2+\epsilon}$ times $\theta_1$.

%{ We define algorithm to be $(\epsilon,\delta)$-PAC-WI \emph{consistent} if for all problem instances of BB-PL, it outputs an \emph{near Best Item} $i$ that satisfies the $(\epsilon,\delta)$ {PAC-WI} objective.}

\section{Analysis with Winner Information (WI) feedback}
\label{sec:res_wi}  
In this section we consider the {PAC-WI} goal with the {WI} feedback information model in BB-PL instances of size $n$ with playable subset size $k$. We start by showing that a sample complexity-lower bound for any $(\epsilon,\delta)$-{PAC} algorithm with {WI} feedback is $\Omega\bigg( \frac{n}{\epsilon^2} \ln \frac{1}{\delta}\bigg)$ (Theorem \ref{thm:lb_plpac_win}). This bound is independent of $k$, implying that playing a dueling game ($k=2$) is as good as the battling game as the extra flexibility of $k$-subsetwise feedback does not result in a faster learning rate. We next propose {two algorithms} for $(\epsilon,\delta)$-{PAC}, with WI feedback, with optimal (upto a logarithmic factor) sample complexity of $O(\frac{n}{\epsilon^2}\ln \frac{k}{\delta})$ (Section \ref{sec:algo_wi}). We also analyze a slightly different setting allowing the learner to play subsets $S_t$ of any size $1,2, \ldots, k$, rather than a fixed size $k$ -- this gives somewhat more flexibility to the learner, resulting in algorithms with improved sample complexity guarantees of $O(\frac{n}{\epsilon^2}\ln \frac{1}{\delta})$, without the $\ln k$ dependency as before (Section \ref{sec:reswi_anyk}).

\subsection{Lower Bound for Winner Information (WI) feedback}
\label{sec:lb_wi}

%We denote an instance $\bnu$ of Battling Bandits with Plackett-Luce (BB-PL) parameters $\btheta = (\theta_1,\theta_2, \cdots \theta_n)$, defined as per {WI} model. 

\iffalse %%%%%%%%%%%%%%%%%%%%%%%%%%%%%
\begin{defn}[$(\epsilon,\delta)$-PAC Consistent Algorithms]
\label{def:con_pac}
We define algorithm to be $(\epsilon,\delta)$-PAC consistent if for all instances of bandit models $\nu$, it with high probability $(1-\delta)$ it outputs an arm $i \in \cA$, such that $i$ is no more than  $\epsilon$ worse than the best arm $i^* = 1$. More formally, $Pr(p_{\nu}(1 \succ i) \ge \frac{1}{2} + \epsilon) < \delta$, for any $\epsilon,\delta \in [0,1]$, $p_{\nu}(i \succ j)$ denotes the pairwise preference of arm $i$ over arm $j$, for any $i,j \in [n]$ under any instance of BB-PL model $\bnu$.
\end{defn}
\fi %%%%%%%%%%%%%%%%%%%%%%%%%%%%%%%%%%% 

\begin{restatable}[Lower bound on Sample Complexity with WI feedback]{thm}{lbwin}
\label{thm:lb_plpac_win}
Given $\epsilon \in (0,\frac{1}{\sqrt{8}}]$ and $\delta \in (0,1]$, and an $(\epsilon,\delta)$-PAC algorithm $A$ for BB-PL with feedback model {WI}, there exists a PL instance $\nu$ such that the sample complexity of $A$ on $\nu$ is at least
$\Omega\bigg( \frac{n}{\epsilon^2} \ln \frac{1}{2.4\delta}\bigg).
$
%for identifying the $(\epsilon,\delta)$ {PAC-WI} item, where $\tau_A$ denotes the number of rounds of battle executed by algorithm $A$.
\end{restatable}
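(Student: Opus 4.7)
The plan is to carry out a standard change-of-measure lower bound argument (in the style of Kaufmann, Capp\'e and Garivier, 2016), with the main work being the Plackett--Luce--specific per-subset KL computation. I would construct $n$ hard BB-PL instances $\nu^1, \ldots, \nu^n$ that pairwise differ in only a single coordinate of the score vector: let $\nu \equiv \nu^1$ have $\theta_1 = 1$ and $\theta_j = u := (1-4\epsilon)/(1+4\epsilon)$ for $j \neq 1$; for each $a \in \{2, \ldots, n\}$, let $\nu^a$ agree with $\nu$ except that $\theta_a = v := (1+4\epsilon)/(1-4\epsilon)$, so that only coordinate $a$ changes between $\nu$ and $\nu^a$. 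A direct check using the pairwise formula $p_{ij} = \theta_i/(\theta_i + \theta_j)$ shows that, for $\epsilon \leq 1/\sqrt{8}$, the unique $\epsilon$-optimal arm is $1$ in $\nu$ and $a$ in $\nu^a$, so the PAC guarantee forces $\Pr_\nu(\hat I = a) \leq \delta$ and $\Pr_{\nu^a}(\hat I = a) \geq 1-\delta$.

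Next, I would invoke the standard divergence-decomposition / transportation inequality for adaptively chosen queries: for any $(\epsilon,\delta)$-PAC algorithm with stopping time $\tau$,
\[
\sum_{S:\,|S|=k} \E_\nu[N_S(\tau)]\cdot \mathrm{KL}\!\left(p_S^{(\nu)},\, p_S^{(\nu^a)}\right) \;\geq\; \mathrm{kl}\!\left(\Pr_\nu(\hat I=a),\,\Pr_{\nu^a}(\hat I=a)\right) \;\geq\; \mathrm{kl}(\delta,\,1-\delta) \;\geq\; \ln\tfrac{1}{2.4\delta},
\]
where $N_S(\tau)$ counts plays of subset $S$ up to $\tau$ and $p_S^{(\cdot)}$ is the corresponding WI (winner) distribution; the final inequality is the usual lower bound on the binary $\mathrm{kl}$.

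The main technical step is to upper bound the per-subset KL. Since $\nu$ and $\nu^a$ differ only in $\theta_a$, the WI distribution on any $S$ with $a \notin S$ is identical under both instances and contributes $0$. For $S \ni a$, a direct calculation yields the exact identity $\mathrm{KL}(p_S^{(\nu)},p_S^{(\nu^a)}) = \ln(D^{\nu^a}/D^\nu) + 2(u/D^\nu)\,\ln u$, where $D^\mu := \sum_{i \in S}\theta_i^\mu$. Using $\ln u = \ln(1-4\epsilon)-\ln(1+4\epsilon) = -8\epsilon + O(\epsilon^3)$ (the $\epsilon^2$ terms cancel exactly), the leading $O(\epsilon)$ contributions in the two summands cancel, and what remains is $O(\epsilon^2/k)$ uniformly over $k$-subsets $S$ with $a \in S$ (using $D^\nu = \Theta(k)$). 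This yields $\E_\nu[N_a(\tau)] \geq \Omega\!\bigl(\tfrac{k}{\epsilon^2}\ln\tfrac{1}{2.4\delta}\bigr)$, where $N_a(\tau) := \sum_{S\ni a} N_S(\tau)$ is the number of rounds in which $a$ appears in the played subset.

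Finally, summing over $a = 2, \ldots, n$ and using the accounting identity $\sum_{a=1}^n N_a(\tau) = k\tau$ (each round contributes exactly $k$ arm-appearances),
\[
k\,\E_\nu[\tau] \;\geq\; \sum_{a=2}^n \E_\nu[N_a(\tau)] \;\geq\; (n-1)\cdot\Omega\!\Bigl(\tfrac{k}{\epsilon^2}\ln\tfrac{1}{2.4\delta}\Bigr),
\]
which on dividing by $k$ gives $\E_\nu[\tau] = \Omega\!\bigl(\tfrac{n}{\epsilon^2}\ln\tfrac{1}{2.4\delta}\bigr)$, as required. I expect the main obstacle to be the per-subset KL expansion: the delicate exact cancellation of the linear-in-$\epsilon$ terms is what produces the $\epsilon^2/k$ bound (rather than $\epsilon^2$), and this in turn enables the $n$-fold (rather than merely $k$-fold) amplification via the sum over $a$.
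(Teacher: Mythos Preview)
Your proposal is correct and follows essentially the same route as the paper: the same change-of-measure inequality (Lemma~\ref{lem:gar16}), an equivalent family of single-coordinate perturbations $\nu^a$ (the paper writes $\nu^a$ in rescaled form, but by PL scale-invariance it is exactly your ``change only $\theta_a$'' construction), and the same summation over $a$ combined with the accounting identity $\sum_a N_a(\tau)=k\tau$. The one substantive difference is how the per-subset KL is bounded: where you Taylor-expand the exact expression and rely on the cancellation of the $O(\epsilon)$ terms, the paper instead uses the $\chi^2$ upper bound $KL(p,q)\le \sum_x p(x)^2/q(x)-1$ to obtain the closed form $KL(\nu^1_S,\nu^a_S)\le \tfrac{1}{k}(R-\tfrac{1}{R})^2$ with $R=\tfrac{1/2+\epsilon}{1/2-\epsilon}$, and then $R-\tfrac{1}{R}=\tfrac{8\epsilon}{1-4\epsilon^2}\le 16\epsilon$ uniformly on $(0,1/\sqrt{8}]$; this is non-asymptotic and avoids tracking remainder terms.

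One small gap to fix: your parameters $u=(1-4\epsilon)/(1+4\epsilon)$ and $v=(1+4\epsilon)/(1-4\epsilon)$ are well-defined only for $\epsilon<1/4$, whereas the theorem allows $\epsilon$ up to $1/\sqrt{8}\approx 0.354$; for $\epsilon\ge 1/4$ you would have $u\le 0$, so the ``direct check'' you claim cannot go through as stated. Replacing $4\epsilon$ by $2\epsilon$ (which is what the paper effectively does) repairs this and still leaves each $j\neq a$ strictly non-$\epsilon$-optimal in $\nu^a$.
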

\vspace*{-15pt}
\begin{proof}\textbf{(sketch)}.
The argument is based on a change-of-measure argument (Lemma  $1$) of \cite{Kaufmann+16_OnComplexity}, restated below for convenience: 

Consider a multi-armed bandit (MAB) problem with $n$ arms or actions $\cA = [n]$. At round $t$, let $A_t$ and $Z_t$ denote the arm played and the observation (reward) received, respectively. Let $\cF_t = \sigma(A_1,Z_1,\ldots,A_t,Z_t)$ be the sigma algebra generated by the trajectory of a sequential bandit algorithm upto round $t$.
\begin{restatable}[Lemma $1$, \cite{Kaufmann+16_OnComplexity}]{lem}{gar16}
\label{lem:gar16}
Let $\nu$ and $\nu'$ be two bandit models (assignments of reward distributions to arms), such that $\nu_i ~(\text{resp.} \,\nu'_i)$ is the reward distribution of any arm $i \in \cA$ under bandit model $\nu ~(\text{resp.} \,\nu')$, and such that for all such arms $i$, $\nu_i$ and $\nu'_i$ are mutually absolutely continuous. Then for any almost-surely finite stopping time $\tau$ with respect to $(\cF_t)_t$,
\vspace*{-5pt}
\begin{align*}
\sum_{i = 1}^{n}\E_{\nu}[N_i(\tau)]KL(\nu_i,\nu_i') \ge \sup_{\cE \in \cF_\tau} kl(Pr_{\nu}(\cE),Pr_{\nu'}(\cE)),
\end{align*}
where $kl(x, y) := x \log(\frac{x}{y}) + (1-x) \log(\frac{1-x}{1-y})$ is the binary relative entropy, $N_i(\tau)$ denotes the number of times arm $i$ is played in $\tau$ rounds, and $Pr_{\nu}(\cE)$ and $Pr_{\nu'}(\cE)$ denote the probability of any event $\cE \in \cF_{\tau}$ under bandit models $\nu$ and $\nu'$, respectively.
\end{restatable}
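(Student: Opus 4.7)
The plan is to prove the inequality by first identifying the left-hand side with the KL divergence between the laws of the trajectory $(A_1,Z_1,\ldots,A_\tau,Z_\tau)$ under $\nu$ and $\nu'$, and then contracting this full-trajectory KL down to any $\cF_\tau$-measurable event via the data-processing inequality. Concretely, I would introduce the per-round log-likelihood ratio $\ell_t = \log \frac{d\nu_{A_t}}{d\nu'_{A_t}}(Z_t)$, which is well-defined since each $\nu_i$ and $\nu'_i$ are mutually absolutely continuous, and let $L_\tau = \sum_{t=1}^\tau \ell_t$. The idea is that $L_\tau$ is exactly the log Radon--Nikodym derivative of the trajectory measure $\P_\nu^{\cF_\tau}$ with respect to $\P_{\nu'}^{\cF_\tau}$.

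The first main step is to prove the Wald-type identity
\[
\E_\nu[L_\tau] \;=\; \sum_{i=1}^n \E_\nu[N_i(\tau)]\, KL(\nu_i,\nu_i').
\]
Conditioned on $\cF_{t-1}$ and on $\{A_t=i\}$, $Z_t$ has distribution $\nu_i$, so $\E_\nu[\ell_t \mid \cF_{t-1}, A_t=i] = KL(\nu_i,\nu_i')$; equivalently $\E_\nu[\ell_t \mid \cF_{t-1}] = \sum_{i} \1(A_t=i)\,KL(\nu_i,\nu_i')$. Summing over $t$ and taking expectations gives $\E_\nu[L_\tau]$ on the left and $\sum_i \E_\nu[N_i(\tau)] KL(\nu_i,\nu_i')$ on the right, provided we can exchange sum and expectation up to the random stopping time $\tau$. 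I would justify this either by an optional-stopping / monotone-convergence argument (since $\ell_t$ may be signed, I would split into positive and negative parts, or first prove the identity for the bounded stopping times $\tau \wedge T$ and then send $T\to\infty$ using the almost-sure finiteness of $\tau$ together with nonnegativity of $KL(\nu_i,\nu_i')$ on each arm-slice). This step, handling the integrability at a random time, is the main technical obstacle; existence of the required uniform integrability can be sidestepped by invoking monotone convergence on the arm-indexed sum, whose terms are nonnegative.

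The second main step is the contraction. By definition, $\E_\nu[L_\tau] = KL\!\left(\P_\nu^{\cF_\tau},\,\P_{\nu'}^{\cF_\tau}\right)$. For any event $\cE \in \cF_\tau$, the map $\omega \mapsto \1_\cE(\omega)$ is an $\cF_\tau$-measurable statistic taking values in $\{0,1\}$, so the data-processing inequality for KL divergence yields
\[
KL\!\left(\P_\nu^{\cF_\tau},\,\P_{\nu'}^{\cF_\tau}\right) \;\ge\; KL\!\left(\mathrm{Ber}(\P_\nu(\cE)),\,\mathrm{Ber}(\P_{\nu'}(\cE))\right) \;=\; kl(\P_\nu(\cE),\P_{\nu'}(\cE)).
\]
I would either cite data-processing directly or give a one-line proof via the log-sum inequality applied to the partition $\{\cE,\cE^c\}$. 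Combining the two steps and taking supremum over $\cE \in \cF_\tau$ delivers exactly the claimed inequality. The only place where care is needed is the interchange argument in the Wald step for an unbounded stopping time; the rest is a clean application of data processing.
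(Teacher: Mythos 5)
The paper does not prove this lemma at all: it is imported verbatim as Lemma~1 of \cite{Kaufmann+16_OnComplexity} and used as a black box in the lower-bound arguments, so there is no in-paper proof to compare against. Your proposal is correct and is essentially the original proof from that reference: the Wald-type identity $\E_\nu[L_\tau] = \sum_i \E_\nu[N_i(\tau)]\,KL(\nu_i,\nu_i')$ for the log-likelihood ratio of the trajectory, followed by the data-processing (contraction) inequality applied to the $\{0,1\}$-valued statistic $\1_\cE$ for $\cE \in \cF_\tau$. You also correctly flag the one genuinely delicate point, namely justifying the interchange of expectation and the sum up to the random time $\tau$; handling it by first establishing the identity for the bounded stopping times $\tau \wedge T$ and passing to the limit via monotone convergence on the nonnegative arm-indexed counts is exactly the right fix. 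No gaps.
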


To employ this result, note that in our case, each bandit instance corresponds to an instance of the BB-PL problem with the arm set containing all subsets of $[n]$ of size $k$: $\cA = \{S = (S(1), \ldots S(k)) \subseteq [n] ~|~ S(i) < S(j), \, \forall i < j\}$. The key part of our proof relies on carefully crafting a true instance, with optimal arm $1$, and a family of slightly perturbed alternative instances $\{\bnu^a: a \neq 1\}$, each with optimal arm $a \neq 1$.

We choose the true problem instance $\bnu^1$ as the Plackett-Luce model with parameters
\begin{align*}
 \theta_j = \theta\bigg( \frac{1}{2} - \epsilon\bigg), \forall j \in [n]\setminus \{1\}, \text{ and } \theta_1 = \theta\bigg( \frac{1}{2} + \epsilon \bigg), \quad (\text{true instance})
\end{align*}
for some $\theta \in \R_+, ~\epsilon > 0$. Corresponding to each suboptimal item $a \in [n]\setminus \{1\}$, we now define an alternative problem instance $\bnu^a$ as the Plackett-Luce model with parameters
\begin{align*}
 \theta'_j = \theta\bigg( \frac{1}{2} - \epsilon \bigg)^2, \forall j \in [n]\setminus \{a,1\}, \, \theta'_1 = \theta\bigg( \frac{1}{4} - \epsilon^2 \bigg), \theta'_a = \theta\bigg( \frac{1}{2} + \epsilon \bigg)^2 \quad (\text{alternative instance}).
\end{align*}

The result of Theorem \ref{thm:lb_plpac_win} is now obtained by applying Lemma \ref{lem:gar16} on pairs of problem instances $(\nu, \nu'^{(a)})$, with suitable upper bounds on the KL-divergence terms, and the observation that $kl(\delta,1-\delta) \geq \ln \frac{1}{2.4\delta}$. The complete proof is given in Appendix \ref{app:wilb}.
\end{proof}

\vspace*{-10pt}

\begin{rem}
Theorem \ref{thm:lb_plpac_win} shows, rather surprisingly, that the PAC sample complexity of identifying a near-optimal item with only winner feedback information from $k$-size subsets, does not reduce with $k$, implying that there is no reduction in hardness of learning from the pairwise comparisons case ($k = 2$). On one hand, one may  expect to see improved sample complexity as the number of items being simultaneously tested in each round is large ($k$). On the other hand, the sample complexity could also worsen, since it is intuitively `harder' for a good (near-optimal) item to win and show itself, in just a single winner draw, against a large population of $k-1$ other competitors. The result, in a sense, formally establishes that the former advantage is nullified by the latter drawback. A somewhat more formal, but heuristic, explanation for this phenomenon is that the number of bits of information that a single winner draw from a size-$k$ subset provides is $O(\ln k)$, which is not significantly larger than when $k > 2$, thus an algorithm cannot accumulate significantly more information per round compared to the pairwise case. 
\end{rem}

\subsection{Algorithms for Winner Information (WI) feedback model}
\label{sec:algo_wi}
This section describes our proposed algorithms for the $(\epsilon,\delta)$-{PAC} objective with winning item ({WI}) feedback. %Our first algorithm \algtrc\, is a simple algorithm that outputs the $(\epsilon,\delta)$-{PAC} item with sample complexity of $O(\frac{n}{\epsilon^2} \log \frac{n}{\delta})$, which is in fact order wise optimal when $\delta < \frac{k}{n}$, as follows from our derived lower bound guarantee (Thm. \ref{thm:lb_plpac_win}). For the regime of $\delta > \frac{k}{n}$, we propose our next algorithm \algdiv\, which gives an almost optimal sample complexity guarantee of $O(\frac{n}{\epsilon^2} \log\frac{k}{\delta})$ (Thm. \ref{thm:div_bb}) (matching the corresponding lower bound for any $\delta \le \frac{1}{k}$) when the learner is restricted to play sets of size $k$ only. However, it performs sub-optimally (by a factor of $\ln k$) when the learner is given the flexibility to play sets of any size $1,2,\ldots k$, for which we propose our third algorithm \algmed\, that gives an optimal sample complexity bound of $O(\frac{n}{\epsilon^2} \log\frac{1}{\delta})$ (Thm. \ref{thm:half_bb}). We now present the detailed description of the algorithms along with their sample complexity guarantees.

%\section{Rank Breaking} 
%\label{sec:rb}
%Rank-breaking refers to the idea of extracting a set of pairwise comparisons from the partial rankings of a given set of items $S$ and applying estimators on the obtained pairs, treating each comparison independently \cite{KhetanOh16}.

\vspace*{5pt}
\noindent {\bf Principles of algorithm design.} The key idea on which all our learning algorithms are based is that of maintaining estimates of the pairwise win-loss probabilities $p_{ij} = Pr(i|{i,j})$ in the Plackett-Luce model. This helps circumvent an $O(n^k)$ combinatorial explosion that would otherwise result if we directly attempted to estimate probability distributions for each possible $k$-size subset. However, it is not obvious if consistent and tight pairwise estimates can be constructed in a general subset-wise choice model, but the special form of the Plackett-Luce model again comes to our rescue. The IIA property that the PL model enjoys, allows for accurate pairwise estimates via interpretation of partial preference feedback as a set of pairwise preferences, e.g., a winner $a$ sampled from among ${a,b,c}$ is interpreted as the pairwise preferences $a \succ b$, $a \succ c$. Lemma \ref{lem:pl_simulator} formalizes this property and allows us to use pairwise win/loss probability estimators with explicit confidence intervals for them.

%The idea is to apply rank-breaking to all possible pairwise relations apparent from the ordering given, e.g. suppose we are given the full ranking of a set of $4$ elements $S = \{a,b,c,d\}$ such that $b \succ a \succ c \succ d$. Then full rank-breaking produces the set of $6$ pairwise comparisons: $\{(b\succ a), (b\succ c), (b\succ d), (a\succ c), (a\succ d), (c\succ d)\}$. Similarly, instead if we are only given the most preferred (rank $1$) item of the set $\{a,b,c,d\}$, which is say $b$, applying rank breaking one obtains only $3$ pairwise comparisons $(b\succ c),(b\succ a),(b\succ d)$.
  
%Of course for a general choice model, rank breaking may lead to inconsistent estimates of the underlying model parameters, as shown by \cite{Az+12}. However, we prove  that if the underlying choice model satisfies {\it IIA property}, applying {\it rank breaking} on the winner information produces leads to `nice' concentration guarantees for the pairwise probabilities $p_{ij}$ {\color{red} which is crucially used for proving the regret guarantees of our algorithms (Section \ref{sec:estpp_from_ml}).}

\vspace*{5pt}
\noindent
\textbf{Algorithm \ref{alg:trace_bb}: (\algtrc).}
Our first algorithm \algtrc\, is based on the simple idea of tracing the empirical best item--specifically, it maintains a running winner $r_\ell$ at every iteration $\ell$, making it battle with a set of $k-1$ arbitrarily chosen items. After battling long enough (precisely, for $\frac{2k}{\epsilon^2}\ln \frac{2n}{\delta}$ many rounds), if the empirical winner $c_\ell$ turns out to be more than $\frac{\epsilon}{2}$-favorable than the running winner $r_\ell$, in term of its pairwise preference score: $\hp_{c_\ell,r_\ell} > \frac{1}{2} + \frac{\epsilon}{2}$, then $c_\ell$ replaces $r_\ell$, or else $r_\ell$ retains its place and status quo ensues. 

\vspace*{-3pt}
\begin{restatable}[\algtrc:  Correctness and Sample Complexity with WI]{thm}{ubtrc}
\label{thm:trace_best}
\algtrc\, (Algorithm \ref{alg:trace_bb}) is $(\epsilon,\delta)$-{PAC} with sample complexity $O(\frac{n}{\epsilon^2} \log\frac{n}{\delta})$.
\end{restatable}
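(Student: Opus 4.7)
My plan is to derive the sample complexity from a direct count of iterations, and to establish correctness by maintaining an inductive invariant on a high-probability concentration event supplied by \Lem{lem:pl_simulator} (i.e., by rank-breaking). The algorithm carries over the running winner $r_\ell$ and draws $k-1$ fresh items each iteration, so after $\lceil(n-1)/(k-1)\rceil=O(n/k)$ iterations every arm has appeared in some subset; since each iteration plays its $k$-subset $T=\frac{2k}{\epsilon^2}\ln\frac{2n}{\delta}$ times, the total sample count is $O\!\left(\frac{n}{\epsilon^2}\log\frac{n}{\delta}\right)$, matching the stated bound.

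\textbf{Concentration event.} Let $c_\ell$ denote the empirical winner of $S_\ell$ after the $T$ plays. Pigeonhole gives $n_{c_\ell}(T)\geq T/k$, hence $n_{c_\ell,b}(T)\geq T/k =: v$ deterministically for every other $b\in S_\ell$. Applying \Lem{lem:pl_simulator} to each ordered pair $(c_\ell,b)$ with $\eta=\epsilon/2$ bounds $Pr(|\hat p_{c_\ell,b}-p_{c_\ell,b}|\geq\epsilon/2)\leq 2\exp(-v\epsilon^2/2)$, and a union bound over the $k-1$ pairs per iteration and the $O(n/k)$ iterations---with a harmless constant-factor boost of $T$'s logarithmic term to absorb the $k$---produces a good event $\mathcal G$ with $Pr(\mathcal G)\geq 1-\delta$ on which every such empirical pairwise probability is $\epsilon/2$-accurate.

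\textbf{Inductive correctness.} On $\mathcal G$ I would prove by induction on $\ell$ the invariant $\theta_{r_{\ell+1}}\geq(1-O(\epsilon))\,\theta_{b^\ast_\ell}$, where $b^\ast_\ell:=\arg\max_{b\in S_1\cup\cdots\cup S_\ell}\theta_b$. Three observations drive the step. (i) Since $c_\ell$ wins most often in $S_\ell$, $\hat p_{c_\ell,b}\geq 1/2$ for every $b\in S_\ell$, so on $\mathcal G$, $p_{c_\ell,b}>1/2-\epsilon/2$, i.e.\ $c_\ell$ is $(\epsilon/2)$-optimal inside $S_\ell$. (ii) When the algorithm switches, $\hat p_{c_\ell,r_\ell}>1/2+\epsilon/2$ gives $p_{c_\ell,r_\ell}>1/2$ and thus $\theta_{c_\ell}\geq\theta_{r_\ell}$, transferring the invariant against earlier items from $r_\ell$ to $r_{\ell+1}=c_\ell$, while (i) handles the freshly arrived items. (iii) When it does not switch, $p_{c_\ell,r_\ell}<1/2+\epsilon$ combined with (i) and the PL identity $p_{ij}=\theta_i/(\theta_i+\theta_j)$ gives $\theta_b/\theta_{r_\ell}\leq \frac{1+\epsilon}{1-\epsilon}\cdot\frac{1+2\epsilon}{1-2\epsilon}=1+O(\epsilon)$ for every $b\in S_\ell$, so $r_{\ell+1}=r_\ell$ still preserves the invariant. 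Because arm $1\in\bigcup_\ell S_\ell$ at termination, the invariant yields $\theta_{r_{\mathrm{final}}}\geq(1-O(\epsilon))\theta_1$, equivalently $p_{1,r_{\mathrm{final}}}\leq 1/2+O(\epsilon)$; tightening the switch threshold $\epsilon/2$ and the logarithmic constant inside $T$ by appropriate fixed factors upgrades $O(\epsilon)$ to exactly $\epsilon$.

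\textbf{Main obstacle.} The delicate part is this inductive correctness argument. PL preferences interact multiplicatively through $\theta$-ratios and do not obey any additive triangle inequality, so it is not a priori clear that the constant-factor slack introduced per iteration does not accumulate over the $O(n/k)$ iterations. What rules this out is the monotonicity of $b^\ast_\ell$ together with fact (i)---the empirical-winner inequality $\hat p_{c_\ell,b}\geq 1/2$---which re-establishes near-optimality of $r_{\ell+1}$ against the freshly arriving items afresh in each iteration rather than compounding slack across iterations.
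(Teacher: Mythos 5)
Your proposal is correct and follows essentially the same route as the paper's proof: the identical iteration count times $t=\Theta(\frac{k}{\epsilon^2}\ln\frac{n}{\delta})$ for the sample complexity, the same pigeonhole bound $n_{c_\ell b}\ge t/k$ feeding Lemma \ref{lem:pl_simulator} with $\eta=\epsilon/2$, and the same backbone of a monotonically improving running winner combined with a one-shot comparison when the best item enters the pool (your invariant against $b^*_\ell$ is equivalent to the paper's argument at the first iteration $\ell_*$ with $1\in\cA_{\ell_*}$). The only difference is bookkeeping: you track multiplicative $\theta$-ratios through the two-step chain $r_\ell\to c_\ell\to b$ and rescale $\epsilon$ by a constant at the end, whereas the paper asserts the pairwise bound $p_{r_{\ell_*+1}1}\ge\frac12-\epsilon$ directly; your explicit handling of that constant is, if anything, the more careful of the two.
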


\vspace*{-2pt}
\begin{proof}\textbf{(sketch)}.
The main idea is to retain an estimated best item as a `running winner' $r_\ell$, and compare it with the `empirical best item' $c_\ell$ of $\cA$ at every iteration $\ell$. The crucial observation lies in noting that at any iteration $\ell$, $r_\ell$ gets updated as follows:

\vspace*{-10pt}
\begin{restatable}[]{lem}{lemtrc}
\label{lem:c_vs_r}
At any iteration $\ell = 1,2 \ldots \big \lfloor \frac{n}{k-1} \big \rfloor$, with probability at least $(1-\frac{\delta}{2n})$, Algorithm \ref{alg:trace_bb} retains $r_{\ell+1} \leftarrow r_\ell$ if $p_{c_\ell r_\ell } \le \frac{1}{2}$, and sets $r_{\ell+1} \leftarrow c_\ell$ if $p_{c_\ell r_\ell} \ge \frac{1}{2} + \epsilon$.
\end{restatable}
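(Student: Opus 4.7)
My plan is to combine the pairwise concentration guarantee of Lemma \ref{lem:pl_simulator} with a pigeonhole observation about the empirical winner $c_\ell$. First I would fix an iteration $\ell$, noting that Algorithm \ref{alg:trace_bb} battles the same subset $S_\ell$ (with $r_\ell \in S_\ell$ and $|S_\ell|=k$) for $T = \frac{2k}{\epsilon^2}\ln\frac{2n}{\delta}$ rounds. For every candidate $i \in S_\ell \setminus \{r_\ell\}$, the pair $\{i,r_\ell\}$ is present in every played subset of this batch, so the hypotheses of Lemma \ref{lem:pl_simulator} are met. Applying it with $\eta = \epsilon/2$ and $v = T/k$ yields, for each such $i$, $Pr\!\left( |\hp_{i,r_\ell} - p_{i,r_\ell}| > \epsilon/2,\; n_{i,r_\ell}(T) \geq T/k \right) \leq 2 e^{-2 v \eta^2}$, where $\hp_{i,r_\ell} := n_i(T)/n_{i,r_\ell}(T)$.

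Next I would exploit the following deterministic fact: by definition $c_\ell = \arg\max_{i \in S_\ell} n_i(T)$, so the pigeonhole principle forces $n_{c_\ell}(T) \geq T/k$, and hence $n_{c_\ell,r_\ell}(T) \geq n_{c_\ell}(T) \geq T/k$ \emph{always}. A union bound over the $k-1$ candidates $i \in S_\ell \setminus \{r_\ell\}$ then shows that on an event of probability at least $1 - 2(k-1)e^{-2(T/k)(\epsilon/2)^2}$, the inequality $|\hp_{i,r_\ell} - p_{i,r_\ell}| \leq \epsilon/2$ holds simultaneously for every such $i$, and therefore in particular for the (data-dependent) choice $i = c_\ell$. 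Plugging in the stated $T$ gives a failure probability of order $\delta/n$; a mildly larger $T$ (logarithmic in $k$) delivers the exact $\delta/(2n)$ bound claimed.

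The final step would be a quick case split on this good event, using that $|\hp_{c_\ell,r_\ell} - p_{c_\ell,r_\ell}| \leq \epsilon/2$. If $p_{c_\ell,r_\ell} \leq 1/2$, then $\hp_{c_\ell,r_\ell} \leq p_{c_\ell,r_\ell} + \epsilon/2 \leq 1/2 + \epsilon/2$, so the update test $\hp_{c_\ell,r_\ell} > 1/2 + \epsilon/2$ fails and the algorithm keeps $r_{\ell+1} \leftarrow r_\ell$. If instead $p_{c_\ell,r_\ell} \geq 1/2 + \epsilon$, then $\hp_{c_\ell,r_\ell} \geq p_{c_\ell,r_\ell} - \epsilon/2 \geq 1/2 + \epsilon/2$, which triggers the update $r_{\ell+1} \leftarrow c_\ell$ (the boundary strictness is a standard technicality, handled by shrinking $\eta$ infinitesimally).

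The main obstacle I foresee is that $c_\ell$ is itself a random variable formed from the $T$ observations, so Lemma \ref{lem:pl_simulator} cannot be applied directly to the single pair $(c_\ell, r_\ell)$ without some care: the union bound over the $k-1$ candidate items is the clean device that bypasses this measurability issue, at a negligible $\ln k$ cost in $T$. Equally important is the pigeonhole step that supplies the lower bound $n_{c_\ell,r_\ell}(T) \geq T/k$ \emph{for free}, without having to argue about the unknown PL parameters; without this, one would need an instance-dependent lower bound on $\theta_{c_\ell} + \theta_{r_\ell}$ relative to $\sum_{j \in S_\ell} \theta_j$, which is not available.
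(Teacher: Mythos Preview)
Your argument is essentially the paper's own: the same pigeonhole observation $n_{c_\ell r_\ell}(T)\ge T/k$, the same invocation of Lemma~\ref{lem:pl_simulator} with $\eta=\epsilon/2$ and $v=T/k$, and the same case split on $p_{c_\ell r_\ell}$. The one substantive difference is that the paper applies Lemma~\ref{lem:pl_simulator} directly to the (random) pair $(c_\ell,r_\ell)$ and obtains the clean $\delta/(2n)$ bound \emph{without} a union bound inside the lemma; it then performs the union bound over the $k-1$ candidates only later, in the proof of Theorem~\ref{thm:trace_best}, arriving at the per-iteration failure $(k-1)\delta/(2n)$. Your version is the more rigorous reading of the lemma as stated---you correctly flag that Lemma~\ref{lem:pl_simulator} needs fixed $i,j$, and you absorb the union bound here, which explains why you see a $(k-1)$ factor the paper's lemma proof does not. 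Either packaging yields the same final sample-complexity bound; no extra $\ln k$ in $T$ is actually needed once you recognise that the paper accounts for the $(k-1)$ factor at the theorem level rather than inside Lemma~\ref{lem:c_vs_r}. (One small wording fix: on your good event the concentration inequality holds for every $i$ \emph{with} $n_{i r_\ell}\ge T/k$, not for every $i$; this suffices since $c_\ell$ always satisfies that threshold.)
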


This leads to the claim that between any two successive iterations $\ell$ and $\ell + 1$, we must have, with high probability, that
%\vspace{-20pt}
$p_{r_{\ell+1}r_\ell} \ge \frac{1}{2} \text{ and, } p_{r_{\ell+1}c_\ell} \ge \frac{1}{2} - \epsilon,
$ showing that the estimated `best' item $r_\ell$ can only get improved per iteration as $p_{r_{\ell+1}r_\ell} \ge \frac{1}{2}$ (with high probability at least $1-\frac{(k-1)\delta}{2n}$).
Repeating this above argument for each iteration $\ell \in \big \lfloor \frac{n}{k-1} \big \rfloor$ results in the desired correctness guarantee of $p_{r_*1} \ge \frac{1}{2} - \epsilon$.
The sample complexity bound follows easily by noting the total number of possible iterations can be at most $\lceil  \frac{n}{k-1} \rceil$, with the per-iteration sample complexity being $t = \frac{2k}{\epsilon^2}\ln \frac{2n}{\delta}$.
%The complete proof is given in Appendix \ref{app:wiub_trc}.
\end{proof}

\vspace*{-25pt}
\begin{center}
\begin{algorithm}[H]
   \caption{\textbf{\algtrc} }
   \label{alg:trace_bb}
\begin{algorithmic}[1]
   \STATE {\bfseries Input:} 
   \STATE ~~~ Set of items: $[n]$, Subset size: $n \geq k > 1$
   \STATE ~~~ Error bias: $\epsilon >0$, Confidence parameter: $\delta >0$
   \STATE {\bfseries Initialize:} 
   \STATE ~~~ $r_1 \leftarrow $  Any (random) item from $[n]$, $\cA \leftarrow $ Randomly select $(k-1)$ items from $[n]\setminus \{r_1\}$
   \STATE ~~~ Set $\cA \leftarrow \cA \cup \{r_1\}$, and $S \leftarrow [n]\setminus \cA$  
   %\REPEAT  
   \WHILE {$\ell = 1, 2, \ldots$}
   	\STATE Play the set $\cA$ for $t:= \frac{2k}{\epsilon^2}\ln \frac{2n}{\delta}$ rounds
   \STATE $w_i \leftarrow$ Number of times $i$ won in $t$ plays of $\cA$, $\forall i \in \cA$
   \STATE Set $c_\ell \leftarrow \underset{i \in \cA}{\text{argmax}}~w_i$, and $\hp_{ij} \leftarrow \frac{w_i}{w_i + w_j}, \, \forall i,j \in \cA, i \neq j$
   \STATE \textbf{if} $\hp_{c_\ell,r_\ell} > \frac{1}{2} + \frac{\epsilon}{2} $, \textbf{then} set $r_{\ell+1} \leftarrow c_\ell$; \textbf{else} $r_{\ell+1} \leftarrow r_\ell$ 	
	\IF {$(S == \emptyset)$}
	\STATE Break (exit the while loop)    
	\ELSIF {$|S| < k-1$}
	\STATE $\cA \leftarrow$ Select $(k-1-|S|)$ items from $\cA \setminus \{r_\ell\}$ uniformly at random
	\STATE $\cA \leftarrow \cA \cup \{r_\ell\} \cup S$; and $S \leftarrow \emptyset$
	\ELSE
	\STATE $\cA	\leftarrow $ Select $(k-1)$ items from $S$ uniformly at random
	\STATE $\cA \leftarrow \cA \cup \{r_\ell\}$ and $S\leftarrow S \setminus \cA$
	\ENDIF
	\ENDWHILE
%   \UNTIL{$(S == \emptyset)$}
   \STATE {\bfseries Output:} $r_* = r_\ell$ as the $\epsilon$-optimal item 
\end{algorithmic}
\end{algorithm}
\vspace{-2pt}
\end{center}
\vspace{-15pt}

\begin{rem}
The sample complexity of \algtrc, is order wise optimal when $\delta < \frac{1}{n}$, as follows from our derived lower bound guarantee (Theorem \ref{thm:lb_plpac_win}). 
\end{rem}
\vspace*{-10pt}

When $\delta > \frac{1}{n}$, the sample complexity guarantee of \algtrc\, is off by a factor of $\ln n$. We now propose another algorithm, \algdiv\, (Algorithm \ref{alg:div_bb}) that enjoys an $(\epsilon,\delta)$-{PAC} sample complexity of $O\left(\frac{n}{\epsilon^2}\ln \frac{k}{\delta}\right)$. 

\vspace*{5pt}

\noindent
\textbf{Algorithm \ref{alg:div_bb}: (\algdiv).}
\algdiv\, first divides the set of $n$ items into groups of size $k$, and plays each group long enough so that a good item in the group stands out as the empirical winner with high probability (Line $11$). It then retains the empirical winner per group (Line $13$) and recurses on the retained set of the winners, until it is left with only a single item, which is finally declared as the $\epsilon$-optimal item. \emph{The pseudo code of \algdiv\, is given in Appendix \ref{app:wiub_div}.}

%More specifically, it starts by diving the initial item set $[n]$ into $G: = \lceil \frac{n}{k} \rceil$ mutually exclusive and collective exhaustive sets $\cG_1, \cG_2, \cdots \cG_G$ such that $\cup_{j = 1}^{G}\cG_j = S$ and $\cG_{j} \cap \cG_{j'} = \emptyset, ~\forall j,j' \in [G]\, |G_j| = k,\, \forall j \in [G-1]$. %In case if the last set $\cG_G$ falls short of $k$ elements, it is stored in $\cR_1$ as the set of remaining elements to be analysed later. 

%Each set $\cG_g, \, g \in [G]$ is then played for $t= \frac{k}{2\epsilon_\ell^2}\ln \frac{k}{\delta_\ell}$, and only the empirical winner $c_g$ (the one which wins the battle maximum number of times in $t$ rounds) of each group $g$ is retained in a set $S$, rest are discarded (Line $13$). The algorithm next recurses the same procedure on $S \cup \cR_\ell$ until only a single best item $r^*$ sustains (Line $19$), which is declared to be the $(\epsilon,\delta)$ {PAC-WI} item. Algorithm \ref{alg:div_bb} presents it in entire details.

\vspace*{-7pt}
\begin{restatable}[\algdiv: \hspace*{-5pt} Correctness and Sample Complexity with WI]{thm}{ubdiv}
\label{thm:div_bb}
\hspace*{-7pt}
\algdiv\, (Algorithm \ref{alg:div_bb}) is $(\epsilon,\delta)$-{PAC} with sample complexity $O(\frac{n}{\epsilon^2} \log \frac{k}{\delta})$.
\end{restatable}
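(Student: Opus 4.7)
The plan is a round-by-round analysis of the $R := \lceil \log_k n\rceil$ elimination phases of \algdiv. Assuming the pseudocode follows the sketch, phase $r$ partitions the $n_r := \lceil n/k^{r-1}\rceil$ surviving items into $\lceil n_r/k\rceil$ disjoint groups of size $k$, plays each group for $t_r$ rounds, and retains the item with the most wins per group as the lone survivor from that group.

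I would set per-phase parameters $\epsilon_r = c\,\epsilon\,(3/4)^{r-1}$ (with $c$ chosen so that $\sum_{r=1}^{R}\epsilon_r \leq \epsilon$) and $\delta_r = \delta/2^r$, together with $t_r = \Theta(\epsilon_r^{-2}\log(k/\delta_r))$. The central correctness claim is: for every phase $r$, the group $G_r$ containing the lineage survivor $w_{r-1}$ of the previous phase (with the convention $w_0 = 1$, the true optimum) produces an empirical winner $w_r \in G_r$ satisfying $\theta_{w_r}/\theta_{j_r^*} \geq (\tfrac12-\epsilon_r)/(\tfrac12+\epsilon_r)$ with probability at least $1-\delta_r$, where $j_r^* := \arg\max_{j\in G_r}\theta_j$. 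To prove this I would apply Lemma~\ref{lem:pl_simulator} to each of the $k-1$ pairs $(j_r^*, i)$ for $i\in G_r\setminus\{j_r^*\}$, concluding that the rank-broken estimate $n_{j_r^*}(t_r)/n_{j_r^*, i}(t_r)$ lies within $\epsilon_r/2$ of $p_{j_r^*, i}$ as soon as the random denominator $n_{j_r^*, i}(t_r)$ exceeds a deterministic threshold $v_r$. The lower bound on $n_{j_r^*, i}(t_r)$ follows by a separate Chernoff argument using that each pair $\{j_r^*, i\}$ jointly wins a round of $G_r$ with probability at least $2\theta_{\min}(G_r)/(k\,\theta_{\max}(G_r))$. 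A union bound over the $k-1$ pairs in $G_r$ yields per-phase failure $\leq \delta_r$.

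The key observation that keeps the sample complexity free of any extraneous $\log n$ factor is that \emph{only} the single group $G_r$ containing $w_{r-1}$ needs to succeed at phase $r$, not all $\lceil n_r/k\rceil$ groups: by the above claim and the fact that $\theta_{j_r^*}\geq \theta_{w_{r-1}}$ (since $w_{r-1}\in G_r$), we get $\theta_{w_r}\geq (\tfrac12-\epsilon_r)/(\tfrac12+\epsilon_r)\cdot\theta_{w_{r-1}}$, regardless of outcomes in the other groups. Chaining telescopically across phases yields $\theta_{r_*}/\theta_1 \geq \prod_{r=1}^{R}(\tfrac12-\epsilon_r)/(\tfrac12+\epsilon_r)$, which, using the elementary bound $\log((\tfrac12+x)/(\tfrac12-x))\leq 4x$ for $x\in(0,\tfrac14]$, is at least $(\tfrac12-\epsilon)/(\tfrac12+\epsilon)$ whenever $\sum_r \epsilon_r \leq \epsilon$; this is equivalent to $p_{1,r_*}\leq \tfrac12+\epsilon$, i.e., $\epsilon$-optimality. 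A union bound over phases, $\sum_r \delta_r \leq \delta$, gives the overall $(1-\delta)$ confidence. For the sample-complexity bound, the algorithm spends $\lceil n_r/k\rceil\cdot t_r$ subset-plays at phase $r$, totalling
\[
\sum_{r=1}^{R} \frac{n_r}{k}\cdot t_r \;=\; O\!\left(\frac{1}{\epsilon^2}\sum_{r=1}^{R} \frac{n}{k^{r}}\Bigl(\frac{4}{3}\Bigr)^{\!2(r-1)}\bigl(\log(k/\delta)+r\bigr)\right).
\]
Since $(16/9)/k \leq 8/9 < 1$ for $k\geq 2$, both $\sum_r (16/(9k))^{r-1}$ and $\sum_r r(16/(9k))^{r-1}$ converge to constants, which gives the claimed $O\bigl((n/\epsilon^2)\log(k/\delta)\bigr)$ bound.

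The main obstacle I anticipate is the justification that only the lineage group $G_r$ needs to succeed at each phase; this hinges on stating the per-group guarantee relative to the \emph{within-group best} item $j_r^*$ (which dominates $w_{r-1}$ in $\theta$) rather than to $w_{r-1}$ directly, so that multiplicative composition of per-phase factors gives the required overall $\theta$-ratio bound. A secondary subtlety, already encapsulated inside Lemma~\ref{lem:pl_simulator}, is that $n_{ij}(T)$ appears as a random denominator in the rank-broken estimator, forcing one to secure a high-probability lower bound on $n_{j_r^*, i}(t_r)$ before the Hoeffding-type deviation in that lemma can be invoked.
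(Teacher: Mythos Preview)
Your overall architecture matches the paper's proof closely: track only the single ``lineage'' group $G_r$ containing the current best survivor, prove a per-phase guarantee relative to the within-group optimum $j_r^*$, telescope multiplicatively, and sum the geometric series for sample complexity. Two points deserve correction, one substantive and one minor.

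\textbf{The substantive gap is your Chernoff step.} You propose to lower-bound $n_{j_r^*,i}(t_r)$ via a Chernoff argument using that the pair $\{j_r^*,i\}$ wins a round with probability at least $2\theta_{\min}(G_r)/(k\,\theta_{\max}(G_r))$. That quantity is instance-dependent and can be arbitrarily small, so with $t_r = \Theta(\epsilon_r^{-2}\log(k/\delta_r))$ the Chernoff bound does not give the high-probability lower bound you need. The fix is immediate once you remember that $j_r^*$ is the within-group best: $\Pr(j_r^* \text{ wins a round of } G_r)\ge 1/k$, hence $\E[n_{j_r^*,i}(t_r)]\ge \E[w_{j_r^*}]\ge t_r/k$, and now Chernoff yields a uniform lower bound. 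The paper sidesteps even this: it argues about the \emph{empirical} winner $c_g$ rather than $j_r^*$, and observes the purely deterministic pigeonhole fact that $w_{c_g}\ge t_r/k$, so $n_{c_g,j_r^*}\ge t_r/k$ with probability one. Then Lemma~\ref{lem:pl_simulator} is applied, for each fixed $\epsilon_r$-suboptimal candidate $i'$, to the event $\{\hat p_{i',j_r^*}\ge 1/2\}\cap\{n_{i',j_r^*}\ge t_r/k\}$, followed by a union bound over the at most $k$ candidates. This avoids any auxiliary concentration step.

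\textbf{The minor slip is the direction of your telescoping inequality.} You write $\log\bigl((\tfrac12+x)/(\tfrac12-x)\bigr)\le 4x$, but in fact $\log\bigl((\tfrac12+x)/(\tfrac12-x)\bigr)\ge 4x$ on $(0,\tfrac12)$, since the derivative $1/(\tfrac14-x^2)\ge 4$. What you actually need is superadditivity of $f(x)=\log\bigl((\tfrac12+x)/(\tfrac12-x)\bigr)$: because $f$ is convex with $f(0)=0$, one has $f(a)+f(b)\le f(a+b)$, hence $\sum_r f(\epsilon_r)\le f\bigl(\sum_r\epsilon_r\bigr)\le f(\epsilon)$, which is exactly the inequality $\prod_r(\tfrac12-\epsilon_r)/(\tfrac12+\epsilon_r)\ge(\tfrac12-\epsilon)/(\tfrac12+\epsilon)$ you want. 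The paper isolates this step as Lemma~\ref{lem:pl_sst}, stated additively in terms of $\tilde p_{ij}=p_{ij}-\tfrac12$.
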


\vspace*{-7pt}
\begin{proof}\textbf{(sketch)}.
The crucial observation here is that at any iteration $\ell$, for any set $\cG_g$ ($g = 1,2,\ldots G$), the item $c_g$ retained by the algorithm is likely to be not more than $\epsilon_\ell$-worse than the best item of the set $\cG_g$, with probability at least $(1-\delta_\ell)$. Precisely, we show that:

\vspace*{-7pt}
\begin{restatable}[]{lem}{lemdiv}
\label{lem:div_bb} 
At any iteration $\ell$, for any $\cG_g$, if $i_g := \underset{i \in \cG_g}{{\arg\max}}~\theta_i$, then with probability at least $(1-\delta_\ell)$, $p_{c_gi_g} > \frac{1}{2} - \epsilon_\ell$.
\end{restatable}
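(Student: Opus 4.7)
The plan is to establish the lemma by exploiting Lemma \ref{lem:pl_simulator} (the rank-breaking pairwise concentration bound) applied to the winner-information feedback obtained from playing the subset $\cG_g$, combined with a lower bound on the effective sampling rate of the pair $(i,i_g)$ inside the group. Presumably $\algdiv$ plays every group $\cG_g$ (of size $k$) at iteration $\ell$ for $t_\ell = \Theta\!\left( \tfrac{k}{\epsilon_\ell^{2}} \ln \tfrac{k}{\delta_\ell} \right)$ rounds, sets empirical wins $w_i$, and returns $c_g = \arg\max_{i \in \cG_g} w_i$. What I need to show is that the empirical pairwise comparison $\hat p_{c_g i_g} = w_{c_g}/(w_{c_g}+w_{i_g})$ is close to the true $p_{c_g i_g}$ for every candidate $c_g$, so that the empirical maximum is provably near-optimal in the pairwise sense.

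First I would fix a pair $(i,i_g)$ with $i \in \cG_g \setminus \{i_g\}$ and lower-bound the expected fraction of rounds in which the winner lies in $\{i,i_g\}$. Because $i_g$ is the PL-maximizer inside $\cG_g$, we have $\theta_j \le \theta_{i_g}$ for every $j \in \cG_g$, hence
\[
\Pr(\text{winner} \in \{i,i_g\} \mid \cG_g) \;=\; \frac{\theta_i + \theta_{i_g}}{\sum_{j \in \cG_g} \theta_j} \;\ge\; \frac{\theta_{i_g}}{k\,\theta_{i_g}} \;=\; \frac{1}{k}.
\]
A Chernoff (multiplicative) bound then gives that with probability at least $1 - \delta_\ell/(2k)$, the count $n_{i,i_g}(t_\ell) \ge t_\ell/(2k)$. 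Choosing $t_\ell$ of the stated order, this lower bound on $n_{i,i_g}$ is $\Omega\!\left(\epsilon_\ell^{-2} \ln(k/\delta_\ell)\right)$.

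Next I would invoke Lemma \ref{lem:pl_simulator} with $\eta = \epsilon_\ell/2$ and $v = t_\ell/(2k)$ on the same pair $(i,i_g)$: the rank-breaking coupling guarantees that conditional on $n_{i,i_g}(t_\ell) \ge v$, the deviation $|\hat p_{i,i_g} - p_{i,i_g}|$ exceeds $\epsilon_\ell/2$ with probability at most $2 \exp(-2v(\epsilon_\ell/2)^2) \le \delta_\ell/(2k)$ by the choice of $t_\ell$. A union bound over the $k-1$ pairs $(i,i_g)$ with $i \in \cG_g \setminus \{i_g\}$, together with the Chernoff event above, then yields that with probability at least $1-\delta_\ell$, simultaneously $|\hat p_{i,i_g} - p_{i,i_g}| \le \epsilon_\ell/2$ for every such $i$.

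To finish, observe that since $c_g$ is the empirical top-scorer of $\cG_g$, $w_{c_g} \ge w_{i_g}$, so $\hat p_{c_g,i_g} \ge 1/2$. Combining with the concentration bound just established gives $p_{c_g,i_g} \ge \hat p_{c_g,i_g} - \epsilon_\ell/2 \ge 1/2 - \epsilon_\ell/2 > 1/2 - \epsilon_\ell$, which is exactly the claim. The main obstacle I anticipate is the first step, namely rigorously guaranteeing $n_{i,i_g}(t_\ell) \ge v$ with high probability despite the presence of items in $\cG_g$ whose $\theta$-values could heavily bias the winner distribution; here the critical simplification is that $i_g$ itself upper-bounds all scores in $\cG_g$, which caps $\sum_{j \in \cG_g}\theta_j \le k\theta_{i_g}$ and forces the pair $(i,i_g)$ to be sampled at rate $\Omega(1/k)$ regardless of the remaining items. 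Everything else is a standard Hoeffding-plus-union-bound calculation built on top of the IIA-based rank-breaking guarantee from Lemma \ref{lem:pl_simulator}.
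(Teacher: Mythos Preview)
Your proposal is correct, but it takes a more laborious route than the paper's argument and, as written, would require a larger constant in $t_\ell$ than the algorithm actually uses.

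The difference is in how you lower-bound $n_{c_g,i_g}$. You establish $n_{i,i_g}\ge t_\ell/(2k)$ for \emph{every} $i\in\cG_g\setminus\{i_g\}$ probabilistically, via a Chernoff bound on the event $\{\text{winner}\in\{i,i_g\}\}$, and then apply Lemma~\ref{lem:pl_simulator} with $\eta=\epsilon_\ell/2$ and $v=t_\ell/(2k)$. The paper instead observes that since $c_g=\arg\max_{i\in\cG_g}w_i$ and $\sum_{i\in\cG_g}w_i=t$, pigeonhole gives $w_{c_g}\ge t/k$ \emph{deterministically}, hence $n_{c_g,i_g}=w_{c_g}+w_{i_g}\ge t/k$ with probability~$1$. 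This lets the paper invoke Lemma~\ref{lem:pl_simulator} directly with $v=t/k$ and $\eta=\epsilon_\ell$, so that the concrete choice $t=\frac{k}{2\epsilon_\ell^2}\ln\frac{k}{\delta_\ell}$ yields $e^{-2v\eta^2}=\delta_\ell/k$ exactly, and the union bound over the at most $k$ suboptimal candidates closes at $\delta_\ell$. With your parameters, the same $t$ gives only $e^{-2v\eta^2}=(\delta_\ell/k)^{1/8}$, so you would need roughly $8\times$ more samples for the bound to go through; this is fine at the $\Theta(\cdot)$ level you stated, but it does not match the algorithm's constant.

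In short, the ``main obstacle'' you flag---guaranteeing enough pairwise samples---evaporates once you notice that the empirical winner automatically has $w_{c_g}\ge t/k$ by averaging, so no Chernoff step is needed at all. Everything else in your argument (the use of $\hat p_{c_g,i_g}\ge 1/2$ and the union bound over candidate winners) matches the paper.
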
 

This guarantees that, between any two successive rounds $\ell$ and $\ell+1$, we do not lose out by more than an additive factor of $\epsilon_\ell$ in terms of highest score parameter of the remaining set of items. Aggregating this claim over all iterations can be made to show  that $p_{r_*1} > \frac{1}{2} - \epsilon$, as desired. 
The sample complexity bound follows by carefully summing the total number of times ($t= \frac{k}{2\epsilon_\ell^2}\ln \frac{k}{\delta_\ell}$) a set $\cG_g$ is played per iteration $\ell$, with the maximum number of possible iterations being $\lceil  \ln_k n \rceil$.
%The complete proof is given in Appendix \ref{app:wiub_div}.
\end{proof}
\vspace*{-15pt}

\begin{rem}
\label{rem:lb_loose}
The sample complexity of \algdiv\, is order-wise optimal in the `small-$\delta$' regime $\delta \ll \frac{1}{k}$ by the lower bound result (Theorem \ref{thm:lb_plpac_win}). However, for the `moderate-$\delta$' regime $\delta \gtrapprox \frac{1}{k}$, 
we conjecture that the lower bound is loose by an additive factor of $\frac{n \ln k}{\epsilon^2}$, i.e., that a improved lower bound of $\Omega(\frac{n}{\epsilon^2} \log \frac{k}{\delta})$ holds. This is primarily because we believe that the error probability $\delta$ of any typical, label-invariant PAC algorithm ought to be distributed roughly uniformly across misidentification of all the items, allowing us to use $\delta/k$ instead of $\delta$ on the right hand side of the change-of-measure inequalities of Lemma \ref{lem:gar16}, resulting in the improved quantity $\ln (k/2.4 \delta)$. This is perhaps in line with recent work in multi-armed bandits \citep{SimJamRec17:moderate} that points to an increased difficulty of PAC identification in the moderate-confidence regime.
\end{rem}

We now consider a variant of the BB-PL decision model which allows the learner to play sets of any size $1,2,\ldots, k$, instead of a fixed size $k$. In this setting, we are indeed able to design an $(\epsilon,\delta)$-{PAC} algorithm that enjoys an order-optimal $O(\frac{n}{\epsilon^2}\ln \frac{1}{\delta})$ sample-complexity.% guarantee.%, without the $\ln k$ dependency.

%\vspace*{-5pt}
\subsection{BB-PL2: A slightly different battling bandit decision model}
\label{sec:reswi_anyk}

The new winner information feedback model {BB-PL-2} is formally defined as follows: At each round $t$, here the learner is allowed to select a set $S_t \subseteq [n]$ of size $2,3,\ldots,$ upto $k$. Upon receiving any set $S_t$, the environment returns the index of the winning item as  $I \in [|S|]$ such that,
$%\begin{align}
\label{eq:prob_win2}
\P(I = i|S) = \frac{{\theta_{S(i)}}}{\sum_{j = 1}^{|S|}\theta_{S(j)}} ~~\forall i \in [|S|].
$%\end{align} 

%\noindent
%\textbf{Objective.} The goal again is same as before: Recovering the $(\epsilon,\delta)$-{PAC-WI} item.
\vspace*{8pt}
\noindent \textbf{On applying existing PAC-Dueling-Bandit strategies.} Note that given the flexibility of playing sets of any size, one might as well hope to apply the PAC-Dueling Bandit algorithm \emph{PLPAC}$(\epsilon,\delta)$ of \cite{Busa_pl} which plays only pairs of items per round. However, their algorithm is shown to have a sample complexity guarantee of $O\Big( \frac{n}{\epsilon^2} \ln \frac{n}{\epsilon\delta} \Big)$, which is suboptimal by an additive $O\Big(\frac{n}{\epsilon^2} \ln \frac{n}{\epsilon}\Big)$ as our results will show. A similar observation holds for the \emph{Beat-the-Mean} (BTM) algorithm of \citet{BTM}, which in fact has a even worse sample complexity guarantee of $O\Big( \frac{n}{\epsilon^2} \ln \big( \frac{n}{\epsilon^2\delta} \ln \frac{n}{\delta} \big) \Big)$.

\setcounter{algorithm}{2}
\vspace*{-10pt}
\begin{center}
\begin{algorithm}[H]
   \caption{\textbf{\algmed}}
   \label{alg:half_bb}
\begin{algorithmic}[1]
   \STATE {\bfseries Input:} 
   \STATE ~~~ Set of items: $[n]$, Maximum subset size: $n \geq k > 1$
   \STATE ~~~ Error bias: $\epsilon >0$, Confidence parameter: $\delta >0$
   \STATE {\bfseries Initialize:} 
   \STATE ~~~ $S \leftarrow [n]$, $\epsilon_0 \leftarrow \frac{\epsilon}{4}$, and $\delta_0 \leftarrow {\delta}$  
   \STATE ~~~ Divide $S$ into $G: = \lceil \frac{n}{k} \rceil$ sets $\cG_1, \cG_2, \cdots \cG_G$ such that $\cup_{j = 1}^{G}\cG_j = S$ and $\cG_{j} \cap \cG_{j'} = \emptyset, ~\forall j,j' \in [G]$, where $|G_j| = k,\, \forall j \in [G-1]$
   %\STATE ~~~ \textbf{if} $|\cG_{G}| < k$, \textbf{then} set $\cR_1 \leftarrow \cG_G$  and $G = G-1$
   %\REPEAT  
   \WHILE{$\ell = 1,2, \ldots$}
   \STATE $S \leftarrow \emptyset$, $\delta_\ell \leftarrow \frac{\delta_{\ell-1}}{2}, \epsilon_\ell \leftarrow \frac{3}{4}\epsilon_{\ell-1}$
   \FOR {$g = 1,2, \cdots G$}
   \STATE Play $\cG_g$ for $t:= \frac{k}{2\epsilon_\ell^2}\ln \frac{4}{\delta_\ell}$ rounds
   \STATE $w_i \leftarrow$ Number of times $i$ won in $t$ plays of $\cG_g$, $\forall i \in \cG_g$
	\STATE Set $h_{g} \leftarrow ~\text{Median}(\{w_i \mid i \in \cG_g\})$, and $S \leftarrow S \cup \{i \in \cG_g \mid w_i \ge w_{h_g}\}$
   \ENDFOR
%   \STATE $S \leftarrow S \cup \cR_\ell$
	\IF {$|S| == 1$}
  	    \STATE Break (exit the while loop)    
    \ELSE
        \STATE Divide $S$ into $G: = \big \lceil \frac{|S|}{k} \big \rceil$ sets $\cG_1, \cG_2, \cdots \cG_G$ such that $\cup_{j = 1}^{G}\cG_j = S$ and $\cG_{j} \cap \cG_{j'} = \emptyset, ~\forall j,j' \in [G]$, where $|G_j| = k,\, \forall j \in [G-1]$
    \ENDIF
   \ENDWHILE
%   \UNTIL{$(S == \emptyset)$}
   \STATE {\bfseries Output:} $r_*$ as the $\epsilon$-optimal item, where $S = \{r_*\}$
\end{algorithmic}
\end{algorithm}
\vspace{-5pt}
\end{center}
\vspace*{-10pt}

%\vspace*{5pt}

\noindent
\textbf{Algorithm \ref{alg:half_bb}: \algmed.}
We here propose a Median-Elimination-based approach \citep{Even+06} which is shown to run with \emph{optimal sample complexity} $O(\frac{n}{\epsilon^2}\ln \frac{1}{\delta})$ rounds (Theorem \ref{thm:half_bb}). %, improving upon the $\frac{n}{\epsilon^2} \ln \frac{n}{\epsilon}$ suboptimal factor of \cite{Busa_pl}.
(Note that an $\Omega(\frac{n}{\epsilon^2}\ln \frac{1}{\delta})$ fundamental limit on PAC sample complexity for BB-PL2-WI can easily be derived using an argument along the lines of Theorem \ref{thm:lb_plpac_win}; we omit the explicit derivation.)
The name \algmed\, for the algorithm is because it is based on the idea of dividing the set of items into two partitions with respect to the \emph{empirical median item} and retaining the `better half'. Specifically, it first divides the entire item set into groups of size $k$, and plays each group for a fixed number of times. After this step, only the items that won more than the empirical median $h_g$ are retained and rest are discarded. The algorithm recurses until it is left with a single item. The intuition here is that some \textit{$\epsilon$-best item} is always likely to beat the group median and can never get wiped off. 

%divides the entire set of $n$ items into $G: = \lceil \frac{n}{k} \rceil$ sets $\cG_1, \cG_2, \cdots \cG_G$ such that $\cup_{j = 1}^{G}\cG_j = S$ and $\cG_{j} \cap \cG_{j'} = \emptyset, \, |G_j| \le k, ~\forall j,j' \in [G]$. Each group $\cG_g, \, g \in [G]$ is then played for $t = \frac{k}{2\epsilon_\ell^2}\ln \frac{4}{\delta_\ell}$ many times

\begin{restatable}[\algmed:  Correctness and Sample Complexity with WI]{thm}{ubmed}
\label{thm:half_bb}
\algmed\, (Algorithm \ref{alg:half_bb}) is $(\epsilon,\delta)$-{PAC} with {sample complexity} $O\big(\frac{n}{\epsilon^2} \log \frac{1}{\delta}\big)$.
\end{restatable}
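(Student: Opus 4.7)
The plan is to follow the two-pronged structure of Theorem \ref{thm:div_bb} (for \algdiv), substituting a median-based retention rule for the single-empirical-winner rule, and deploying the rank-breaking coupling from Lemma \ref{lem:pl_simulator} for pairwise concentration. The heart of the argument is a per-group invariant: at iteration $\ell$, in each group $\cG_g$ with best-in-group item $i_g := \arg\max_{i \in \cG_g}\theta_i$, with probability at least $1 - \delta_\ell/G_\ell$, the retained subset $\{i \in \cG_g : w_i \geq w_{h_g}\}$ contains an item $c_g$ satisfying $p_{c_g, i_g} > 1/2 - \epsilon_\ell$. Calling $b \in \cG_g$ ``$\epsilon_\ell$-bad'' if $p_{b, i_g} \leq 1/2 - \epsilon_\ell$, I will prove the stronger claim that every bad item has $w_b < w_{i_g}$ with that probability. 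Once this ``no bad item outruns $i_g$'' event holds, either $i_g$ itself lies above the empirical median (choose $c_g = i_g$), or $i_g$ is pushed below the median only by non-bad items, in which case every item above the median is non-bad and any of them serves as $c_g$.

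To establish the bad-item bound, for each bad $b$ I write $P(w_b \geq w_{i_g}) \leq P(\hat p_{b,i_g} - p_{b,i_g} \geq \epsilon_\ell)$ and split this probability as $P(\hat p_{b,i_g} - p_{b,i_g} \geq \epsilon_\ell,\, n_{b,i_g}(t_\ell) \geq v) + P(n_{b,i_g}(t_\ell) < v)$. Lemma \ref{lem:pl_simulator} controls the first summand by $e^{-2v\epsilon_\ell^2}$; the ``coverage'' second summand, which ensures the lemma's precondition $n_{b,i_g}(t_\ell) \geq v$ actually triggers, is the main technical obstacle and must be handled separately. For it, I use that $E[n_{b,i_g}(t_\ell)] = t_\ell (\theta_b+\theta_{i_g})/\sum_{j\in\cG_g}\theta_j \geq t_\ell/k$ (since $\theta_{i_g}$ is at least the average group parameter), so a Chernoff bound on the i.i.d.\ indicators $\mathbf{1}(X_s \in \{b,i_g\})$ over the $t_\ell$ plays of $\cG_g$ yields $n_{b,i_g}(t_\ell) \geq v$ with the required probability whenever $v = \Theta(\epsilon_\ell^{-2}\log(kG_\ell/\delta_\ell))$. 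The explicit constant in $t_\ell = \frac{k}{2\epsilon_\ell^2}\ln\frac{4}{\delta_\ell}$ is calibrated precisely so that both the deviation and the coverage bounds contribute comparable amounts to the failure budget; a union bound over the $\leq k$ bad items and the $G_\ell$ groups then concludes the key lemma.

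With the per-iteration invariant in hand, I build a chain $a_1 := 1, a_2, \ldots, a_L := r_*$ by taking $a_{\ell+1}$ to be the retained item $c_g$ from the (unique) group containing $a_\ell$. Monotonicity of the PL pairwise probability ($\theta_{i_g}\geq\theta_{a_\ell}$ implies $p_{c_g,a_\ell} \geq p_{c_g,i_g}$) upgrades the per-group guarantee to $p_{a_{\ell+1},a_\ell} > 1/2 - \epsilon_\ell$ on the good event. Telescoping in log-parameter space gives $\ln(\theta_{r_*}/\theta_1) \geq -\sum_\ell \ln\tfrac{1+2\epsilon_\ell}{1-2\epsilon_\ell}$; the geometric schedule $\epsilon_\ell = (3/4)^\ell(\epsilon/4)$ yields $\sum_\ell \epsilon_\ell \leq 3\epsilon/4$, which makes the right-hand side exceed $-\ln\tfrac{1+2\epsilon}{1-2\epsilon}$ (up to higher-order terms in $\epsilon$), i.e., $p_{r_*,1} > 1/2-\epsilon$. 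A union bound over the $L = O(\log_2 n)$ iterations with $\delta_\ell = \delta/2^\ell$ bounds the total failure probability by $\sum_\ell \delta_\ell < \delta$. Finally, $|S_\ell|\leq n/2^{\ell-1}$ gives iteration-$\ell$ play count $G_\ell t_\ell \leq \frac{n}{2^\ell\epsilon_\ell^2}\ln\tfrac{4}{\delta_\ell} = \frac{16n}{(9/8)^\ell\epsilon^2}(\ell\ln 2+\ln(4/\delta))$, and summing this convergent geometric-type series over $\ell \geq 1$ yields the claimed $O\bigl(\tfrac{n}{\epsilon^2}\log\tfrac{1}{\delta}\bigr)$ sample complexity.
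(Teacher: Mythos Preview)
There is a genuine gap in the core concentration step. You aim to show that, within each group, \emph{every} $\epsilon_\ell$-bad item $b$ satisfies $w_b < w_{i_g}$ with probability at least $1-\delta_\ell/G_\ell$, and you plan to reach this by a union bound over the $\le k$ bad items (and over the $G_\ell$ groups). But with the algorithm's choice $t_\ell = \frac{k}{2\epsilon_\ell^2}\ln\frac{4}{\delta_\ell}$, Lemma~\ref{lem:pl_simulator} applied with $v \approx t_\ell/k$ only delivers a per-bad-item bound of order $\delta_\ell$, not $\delta_\ell/k$ (let alone $\delta_\ell/(kG_\ell)$). Your own parenthetical notes that the required threshold would be $v = \Theta\big(\epsilon_\ell^{-2}\log(kG_\ell/\delta_\ell)\big)$, yet $t_\ell/k = \frac{1}{2\epsilon_\ell^2}\ln\frac{4}{\delta_\ell}$ contains no $k$ or $G_\ell$ inside the logarithm; the assertion that ``the explicit constant \ldots is calibrated precisely'' for a union bound over $k$ items is simply false. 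Inflating $t_\ell$ to close this gap would insert a $\log k$ into the sample complexity and defeat the whole point of the theorem, which is precisely to avoid that factor relative to \algdiv.

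The paper sidesteps this with a different idea: it does \emph{not} try to make every bad item lose to $i_g$. It first establishes the per-item bound $\Pr(w_b \ge w_{i_g}) \le \delta_\ell/2$ (via Lemma~\ref{lem:pl_simulator} together with a single Chernoff bound on $w_{i_g}$ alone, Lemma~\ref{lem:hv_n1}), and then applies \emph{Markov's inequality} to the random count $\sum_{b\,\text{bad}} \mathbf 1(w_b \ge w_{i_g})$: its expectation is at most $|\cG_g|\,\delta_\ell/2$, so the count is below $|\cG_g|/2$ with probability at least $1-\delta_\ell$. When fewer than half of the bad items beat $i_g$, at least one non-bad item must sit at or above the empirical median and is therefore carried forward. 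This expectation-then-Markov step is exactly what converts a per-item bound of size $O(\delta_\ell)$ into a per-group bound of size $O(\delta_\ell)$ without paying a factor of $k$; it is the missing ingredient in your plan. (A secondary point: you do not need the guarantee simultaneously for all $G_\ell$ groups---only for the single group containing the currently tracked best item along your chain---so the extra $G_\ell$ in your union bound is unnecessary as well.)
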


\vspace*{-2pt}

\begin{proof}\textbf{(sketch)}.
The sample complexity bound follows by carefully summing the total number of times ($t = \frac{k}{2\epsilon_\ell^2}\ln \frac{1}{\delta_\ell}$) a set $\cG_g$ is played per iteration $\ell$, with the maximum number of possible iterations being $\lceil  \ln n \rceil$ (this is because the size of the set $S$ of remaining items gets halved at each iteration as it is pruned with respect to its median).
The key intuition in proving the correctness property of \algmed\, lies in showing that at any iteration $\ell$, \algmed\, always carries forward at least one `near-best' item to the next iteration $\ell+1$. %The following lemma used crucially for the analysis:

%\vspace*{-5pt}

\begin{restatable}[]{lem}{lemmed}
\label{lem:hv_pij} 
At any iteration $\ell$, for any set $\cG_g$, let $i_g \leftarrow \underset{i \in \cG_g}{{\arg\max}}~\theta_i$, and consider any suboptimal item $b \in \cG_g$ such that $p_{b i_g} < \frac{1}{2} - \epsilon_\ell$. Then with probability at least $\big(1-\frac{\delta_\ell}{4}\big)$, the empirical win count of $i_g$ lies above that of $b$, i.e. $w_{i_g} \ge w_b$ (equivalently $\hp_{i_gb} = \frac{w_{i_g}}{w_{i_g} + w_b} \ge \frac{1}{2}$).
\end{restatable}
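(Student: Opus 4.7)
The plan is to reduce the event $\{w_{i_g} < w_b\}$ to a concentration statement about the pairwise empirical win-probability estimate, and then invoke Lemma~\ref{lem:pl_simulator}. First, observe that the hypothesis $p_{b i_g} < 1/2 - \epsilon_\ell$ is equivalent (by the definition of pairwise PL probabilities) to $p_{i_g b} = \theta_{i_g}/(\theta_{i_g}+\theta_b) > 1/2 + \epsilon_\ell$. Writing $\hp_{i_g b}(t) := w_{i_g}/(w_{i_g}+w_b)$ (with the convention that this equals $1/2$ whenever $w_{i_g}+w_b = 0$, a case in which the conclusion holds trivially), the event $\{w_{i_g} < w_b\}$ is contained in $\{\hp_{i_g b}(t) - p_{i_g b} < -\epsilon_\ell\}$.

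The hypotheses of Lemma~\ref{lem:pl_simulator} are straightforward to verify for the $t$ rounds in which $\cG_g$ is played: the played subset is the deterministic $\cG_g$ throughout (so condition (a) holds trivially), winners are drawn from the PL distribution on $\cG_g$ (condition (b)), and $\{i_g,b\}\subseteq\cG_g$ almost surely (condition (c)). Letting $n_{i_g b}(t) := w_{i_g}+w_b$, I would then decompose, for a threshold $v$ to be chosen,
\[
Pr(w_{i_g}<w_b) \;\leq\; Pr\bigl(\hp_{i_g b}(t)-p_{i_g b}\leq -\epsilon_\ell,\ n_{i_g b}(t)\geq v\bigr) \;+\; Pr\bigl(n_{i_g b}(t)<v\bigr).
\]
Lemma~\ref{lem:pl_simulator} applied with $\eta = \epsilon_\ell$ immediately controls the first term by $e^{-2v\epsilon_\ell^2}$.

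To bound $Pr(n_{i_g b}(t)<v)$, I would use that since $\cG_g$ is fixed during these $t$ rounds, $n_{i_g b}(t) \sim \mathrm{Binomial}(t,q)$ with per-round success probability $q := (\theta_{i_g}+\theta_b)/\sum_{j\in\cG_g}\theta_j$. Because $i_g = \arg\max_{i\in\cG_g}\theta_i$, we have $\sum_{j\in\cG_g}\theta_j \leq k\theta_{i_g} \leq k(\theta_{i_g}+\theta_b)$ and hence $q \geq 1/k$. A multiplicative Chernoff bound then gives $Pr(n_{i_g b}(t) < tq/2) \leq e^{-tq/8}$. Choosing $v := tq/2 \geq t/(2k)$ and plugging in the algorithm's sample budget $t = \tfrac{k}{2\epsilon_\ell^2}\ln(4/\delta_\ell)$, both $e^{-2v\epsilon_\ell^2}$ and $e^{-tq/8}$ can be made $\leq \delta_\ell/8$ (up to absolute constants absorbed into $t$), yielding the stated failure probability of $\delta_\ell/4$.

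The main obstacle I anticipate is the interplay between the two concentration terms: the per-round probability $q$ of an ``informative'' round (one in which either $i_g$ or $b$ wins) can be as small as $1/k$ when $\cG_g$ contains many items with parameters close to $\theta_{i_g}$, which is precisely what forces the linear $k$-dependence in the sample budget $t$. Beyond this bookkeeping, the crucial structural fact the proof leverages is the IIA property of the Plackett-Luce model, encapsulated in Lemma~\ref{lem:pl_simulator}, which lets one estimate the pairwise preference $p_{i_g b}$ directly from subsetwise winner feedback without incurring any additional combinatorial factor over $\cG_g$.
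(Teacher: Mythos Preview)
Your proposal is correct and follows essentially the same approach as the paper: reduce $\{w_b > w_{i_g}\}$ to a deviation of $\hp_{i_g b}$ from $p_{i_g b}$, split according to whether $n_{i_g b}$ exceeds a threshold $\Theta(t/k)$, invoke Lemma~\ref{lem:pl_simulator} for the concentration piece, and use a multiplicative Chernoff bound to guarantee the threshold is met. The only cosmetic difference is that the paper first proves a standalone lemma showing $w_{i_g} \ge t/(2k)$ with high probability (using $\E[w_{i_g}] \ge t/k$ since $i_g$ is best) and then observes $n_{i_g b} \ge w_{i_g}$, whereas you apply Chernoff directly to $n_{i_g b} \sim \mathrm{Binomial}(t,q)$ with $q \ge 1/k$; both routes yield the same $t/(2k)$ threshold and the same (up to constants) failure probability.
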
 
\vspace*{-0pt}

Using the property of the median element $h_g$ along with Lemma \ref{lem:hv_pij} and Markov's inequality, we show that we do not lose out more than an additive factor of $\epsilon_\ell$ in terms of highest score $\theta_i$ of the remaining set of items between any two successive iterations $\ell$ and $\ell+1$. This finally leads to the desired $(\epsilon,\delta)$-PAC correctness of \algmed.
%The complete proof is given in Appendix \ref{app:wiub_med}.
\end{proof}

\begin{rem}
Theorem \ref{thm:half_bb} shows that the sample complexity guarantee of \algmed\ improves over the that of existing PLPAC algorithm for the same objective in dueling bandit setup ($k=2$), which was shown to be $O\big(\frac{n}{\epsilon^2} \log \frac{n}{\epsilon\delta}\big)$ (see Theorem $3$, \cite{Busa_pl}), and also the $O\Big( \frac{n}{\epsilon^2} \ln \big( \frac{n}{\epsilon^2\delta} \ln \frac{n}{\delta} \big) \Big)$ complexity of BTM algorithm \citep{BTM} for dueling feedback from any pairwise preference matrix with relaxed stochastic transitivity and stochastic triangle inequality (of which PL model is a special case).
\end{rem}

\vspace*{-15pt}

\section{Analysis with Top Ranking (TR) feedback}
\label{sec:res_fr}

We now proceed to analyze the {BB-PL} problem with \textit{Top-$m$ Ranking} ({TR}) feedback (Section \ref{sec:feed_mod}). We first show that unlike WI feedback, the sample complexity lower bound here scales as $\Omega\bigg( \frac{n}{m\epsilon^2} \ln \frac{1}{\delta}\bigg)$ (Theorem \ref{thm:lb_pacpl_rnk}), which is a factor ${m}$ smaller than that in Thm. \ref{thm:lb_plpac_win} for the WI feedback model. At a high level, this is because TR reveals the preference information of $m$ items per feedback step (round of battle), as opposed to just a single (noisy) information sample of the winning item {(WI)}. Following this, we also present two algorithms for this setting which are shown to enjoy an optimal (upto logarithmic factors) sample complexity guarantee of $O\bigg( \frac{n}{m\epsilon^2} \ln \frac{k}{\delta}\bigg)$ (Section \ref{sec:alg_fr}).

\vspace*{-5pt}

\subsection{Lower Bound for Top-$m$ Ranking (TR) feedback}
\label{sec:lb_fr}

%Recall the definifion of Consistent Algorithms  from Definition \ref{def:con_pac}. We now derive the sample complexity lower bound for the problem of BB-PL with top-$m$ ranking ({TR}) feedback model given as follows:

\begin{restatable}[Sample Complexity Lower Bound for TR]{thm}{lbrnk}
\label{thm:lb_pacpl_rnk}
Given $\epsilon \in (0,\frac{1}{\sqrt{8}}]$ and $\delta \in (0,1]$, and an $(\epsilon,\delta)$-PAC algorithm $A$ with top-$m$ ranking ({TR}) feedback ($2 \le m \le k$), there exists a PL instance $\nu$ such that the expected sample complexity of $A$ on $\nu$ is at least
$\Omega\bigg( \frac{n}{m\epsilon^2} \ln \frac{1}{2.4\delta}\bigg)
$.
\end{restatable}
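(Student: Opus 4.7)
The plan is to mirror the change-of-measure proof of Theorem~\ref{thm:lb_plpac_win}, re-using the same true instance $\bnu^1$ and family of alternative PL instances $\{\bnu^a\}_{a \in [n]\setminus\{1\}}$ introduced there. The action set is still $\cA = \{S \subseteq [n] : |S| = k\}$, and Lemma~\ref{lem:gar16} still applies; what changes is that the KL divergences appearing in the change-of-measure inequality are now between top-$m$ tuple distributions rather than winner distributions, and must be bounded afresh. The promised factor-$m$ improvement over the WI lower bound will come entirely from this new KL bound.

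First I would exploit the defining property of TR feedback (Section~\ref{sec:feed_mod}): observing a top-$m$ ranking on $S$ is equivalent to sequentially drawing $m$ PL winners from $S$ without replacement. The chain rule for KL divergence then yields
\[
KL(\bnu^1_S, \bnu^a_S) \;=\; \sum_{i=1}^m \E_{\bnu^1}\!\left[\,KL\!\left(p^1(\cdot \mid S_i),\; p^a(\cdot \mid S_i)\right)\right],
\]
where $S_i = S \setminus \{\bsigma(1),\ldots,\bsigma(i-1)\}$ is the residual subset just before the $i$th draw, and $p^\bullet(\cdot \mid S_i)$ denotes the PL winner distribution on $S_i$ under the respective instance. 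Each summand is a single-stage WI-type KL divergence on a subset of size $k - i + 1$, to which the very same computation done in the proof of Theorem~\ref{thm:lb_plpac_win} applies, giving $O(\epsilon^2/(k-i+1))$ whenever $\{1,a\}\subseteq S_i$ and $0$ otherwise. The remaining piece is an estimate of $Pr_{\bnu^1}(a \in S_i)$: by exchangeability (all items $j \ne 1$ carry the common parameter $\theta(1/2-\epsilon)$ under $\bnu^1$), this probability is $\Theta((k-i+1)/k)$, so the two factors telescope to
\[
KL(\bnu^1_S, \bnu^a_S) \;\lesssim\; \sum_{i=1}^m \frac{k-i+1}{k}\cdot\frac{\epsilon^2}{k-i+1} \;=\; \frac{m\epsilon^2}{k}
\]
whenever $a \in S$, and $0$ otherwise---a factor of $m$ larger than the corresponding WI estimate $O(\epsilon^2/k)$.

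The remainder of the argument is a direct transcription of the WI case. Apply Lemma~\ref{lem:gar16} for each pair $(\bnu^1,\bnu^a)$ to the $(\epsilon,\delta)$-PAC separating event, obtaining $\sum_S \E_{\bnu^1}[N_S(\tau)]\cdot KL(\bnu^1_S,\bnu^a_S) \geq \ln(1/(2.4\delta))$; then sum over $a \in [n]\setminus\{1\}$ and swap the order of summation. Since every size-$k$ subset contains at most $k-1$ suboptimal arms, $\sum_{a \in S,\,a\neq 1} KL(\bnu^1_S,\bnu^a_S) = O(m\epsilon^2)$, which combined with $\sum_S \E_{\bnu^1}[N_S(\tau)] = \E_{\bnu^1}[\tau]$ produces the claimed bound $\E_{\bnu^1}[\tau] = \Omega\!\left(\frac{n}{m\epsilon^2}\ln\frac{1}{2.4\delta}\right)$.

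The main technical obstacle is making the cancellation $Pr_{\bnu^1}(a \in S_i)\cdot KL_i = O(\epsilon^2/k)$ rigorous at every stage $i$: the distribution of $S_i$ depends on the full prior sample path, so the residency probability must be computed (or worst-case controlled) rather than read off from a naive ``uniformly-random-removal'' heuristic. The saving grace is the exchangeability of all suboptimal arms in $\bnu^1$, which lets one replace $Pr(a \in S_i)$ by the expected fraction of $S_i$ still occupied by suboptimal arms and bound it cleanly; the $O(\epsilon)$ asymmetry created by item $1$ gets absorbed into constants. Once this is handled, the chain-rule decomposition and the aggregation over alternatives translate essentially verbatim from the WI proof.
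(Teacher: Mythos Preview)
Your proposal is correct and matches the paper's proof essentially line-for-line: the same instances $\bnu^1,\{\bnu^a\}$, the same chain-rule decomposition of the top-$m$ KL into $m$ single-winner KLs on shrinking residual sets, the same per-stage bound $O(\epsilon^2/|S_i|)$ combined with $Pr_{\bnu^1}(a\in S_i)=\Theta(|S_i|/k)$ to obtain $KL(\bnu^1_S,\bnu^a_S)\le \frac{m}{k}(R-1/R)^2$, and the same summation over $a$ to conclude. One small slip: the per-stage KL is nonzero whenever $a\in S_i$ (not only when $\{1,a\}\subseteq S_i$), since the instances differ only in item $a$'s relative weight; the WI bound \eqref{eq:win_lb0} already covers both the $1\in S_i$ and $1\notin S_i$ cases uniformly, and your identification of the exchangeability argument as the delicate step is exactly where the paper's write-up is most terse.
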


\vspace*{-15pt}

\begin{rem}
The sample complexity lower for {PAC-WI} objective for BB-PL with top-$m$ ranking ({TR}) feedback model is $\frac{1}{m}$-times that of the {WI} model (Thm. \ref{thm:lb_plpac_win}). Intuitively, revealing a ranking on $m$ items in a $k$-set provides about $\ln \left({k \choose m} m!\right) = O(m \ln k)$ bits of information per round, which is about $m$ times as large as that of revealing a single winner, yielding an acceleration of $m$.  
\end{rem}

%Using Thm. \ref{thm:lb_pacpl_rnk} with $m = k$, we can derive the following sample complexity lower bound for the problem of BB-PL with full ranking ({FR}) feedback model given as follows:

\begin{cor}
\label{cor:lb_pacpl_rnk}
Given $\epsilon \in (0,\frac{1}{\sqrt{8}}]$ and $\delta \in (0,1]$, and an $(\epsilon,\delta)$-PAC algorithm $A$ with full ranking ({FR}) feedback ($m = k$), there exists a PL instance $\nu$ such that the expected sample complexity of $A$ on $\nu$ is at least
$\Omega\bigg( \frac{n}{k\epsilon^2} \ln \frac{1}{2.4\delta}\bigg)
$.
\end{cor}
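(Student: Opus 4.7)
The plan is to observe that the corollary is a direct specialization of Theorem \ref{thm:lb_pacpl_rnk} at $m = k$. By the definition of the feedback models in Section \ref{sec:feed_mod}, full ranking (FR) feedback on a size-$k$ subset $S$ coincides exactly with top-$m$ ranking (TR) feedback when $m = k$: in both cases the environment successively samples winners from $S$ under the Plackett-Luce model without replacement until the subset is exhausted, producing a permutation of $S$ with probability $\prod_{i=1}^{k} \theta_{\sigma(i)} / \sum_{j=i}^{k} \theta_{\sigma(j)}$. Consequently, every $(\epsilon,\delta)$-PAC algorithm $A$ for BB-PL under FR feedback with playable subset size $k$ is, \emph{verbatim}, an $(\epsilon,\delta)$-PAC algorithm under TR feedback with $m = k$.

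Given this identification, the plan is simply to instantiate the bound of Theorem \ref{thm:lb_pacpl_rnk} at $m = k$: the hard instance $\bnu^1$ and the family of alternatives $\{\bnu^a : a \neq 1\}$ constructed there (with PL parameters of the form $\theta(\tfrac12 \pm \epsilon)$ and their squared-shifted counterparts for the alternatives) remain valid PL instances irrespective of the value of $m$, and the change-of-measure argument of \Lem{gar16} combined with the per-round KL-divergence upper bound (which scales linearly in $m$, the amount of information revealed per round) yields the factor $1/m$ in the lower bound. Substituting $m = k$ into $\Omega\!\left(\tfrac{n}{m\epsilon^2}\ln\tfrac{1}{2.4\delta}\right)$ immediately gives the claimed $\Omega\!\left(\tfrac{n}{k\epsilon^2}\ln\tfrac{1}{2.4\delta}\right)$ bound.

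Since the entire construction and analysis are inherited from Theorem \ref{thm:lb_pacpl_rnk} with no modification, there is no genuine obstacle here; the corollary is a one-line consequence. The only thing worth double-checking is the logical containment of feedback models, namely that an FR-based algorithm cannot extract strictly more information than a TR-based algorithm with $m = k$, which is immediate because the two feedback distributions coincide and hence the sigma-algebras $\cF_t$ generated by their trajectories are identical.
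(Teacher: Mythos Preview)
Your proposal is correct and matches the paper's approach exactly: the corollary is stated immediately after Theorem~\ref{thm:lb_pacpl_rnk} without a separate proof, precisely because FR feedback is by definition TR feedback with $m=k$ (Section~\ref{sec:feed_mod}), so substituting $m=k$ into the bound of Theorem~\ref{thm:lb_pacpl_rnk} gives the result. There is nothing to add.
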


\subsection{Algorithms for Top-$m$ Ranking (TR) feedback model}
\label{sec:alg_fr}

This section presents two algorithms for $(\epsilon,\delta)$-{PAC} objective for \textit{BB-PL} with top-$m$ ranking feedback. We achieve this by generalizing our earlier two proposed algorithms (see Algorithm \ref{alg:trace_bb} and \ref{alg:div_bb}, Sec. \ref{sec:algo_wi} for {WI} feedback) to the top-$m$ ranking ({TR}) feedback mechanism.
% ========= Footnote ============
\footnote{Our third algorithm \algmed\, is not applicable to {TR} feedback as it allows the learner to play sets of sizes $1,2,3,\ldots \text{ upto } k$, whereas the {TR} feedback is defined only when the size of the subset played is at least $m$. The lower bound analysis of Theorem \ref{thm:lb_pacpl_rnk} also does not apply if sets of size less than $m$ is allowed.}

\vspace*{5pt}
\noindent
\textbf{Rank-Breaking.}
{The main trick we use in modifying the above algorithms for TR feedback is \textit{Rank Breaking} \citep{AzariRB+14}, which essentially extracts pairwise comparisons from multiwise (subsetwise) preference information}. Formally, given any set $S$ of size $k$, if $\bsigma \in \bSigma_{S_m},\, (S_m \subseteq S,\, |S_m|=m)$ denotes a possible top-$m$ ranking of $S$, the \textit{Rank Breaking} subroutine considers each item in $S$ to be beaten by its preceding items in $\bsigma$ in a pairwise sense. For instance, given a full ranking of a set of $4$ elements $S = \{a,b,c,d\}$, say $b \succ a \succ c \succ d$, Rank-Breaking generates the set of $6$ pairwise comparisons: $\{(b\succ a), (b\succ c), (b\succ d), (a\succ c), (a\succ d), (c\succ d)\}$. Similarly, given the ranking of only $2$ most preferred items say $b \succ a$, it yields the $5$ pairwise comparisons $(b, a\succ c),(b,a\succ d)$ and $(b\succ a)$ etc. See Algorithm \ref{alg:updt_win} for detailed description of the Rank-Breaking procedure.

\vspace*{-3pt}

\begin{restatable}[Rank-Breaking Update]{lem}{lemrb}
\label{lem:rb}
Consider any subset $S \subseteq [n]$ with $|S| = k$. Let $S$ be played for $t$ rounds of battle, and let $\bsigma_\tau \in \bSigma_{S^\tau_m}, \, (S^\tau_m \subseteq S,\, |S^\tau_m| = m)$, denote the TR feedback at each round $\tau \in [t]$. For each item $i \in S$, let $q_{i}: = \sum_{\tau = 1}^{t}\1(i \in S^\tau_m)$ be the number of times $i$ appears in the top-$m$ ranked output in $t$ rounds. Then, the most frequent item(s) in the top-$m$ positions must appear at least $\frac{mt}{k}$ times, i.e. $\max_{i \in S}q_i \ge \frac{mt}{k}$.
\end{restatable}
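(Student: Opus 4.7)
The statement is essentially a counting/averaging observation, so the plan is short and elementary. The approach is to compute the total number of top-$m$ appearances across all $t$ rounds in two ways and then apply a pigeonhole/averaging argument over the $k$ items of $S$.

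First, I would fix attention on the double sum $\sum_{i \in S} q_i$. By the definition of $q_i$ and swapping the order of summation,
\[
\sum_{i \in S} q_i \;=\; \sum_{i \in S} \sum_{\tau=1}^{t} \1(i \in S^\tau_m) \;=\; \sum_{\tau=1}^{t} \sum_{i \in S} \1(i \in S^\tau_m) \;=\; \sum_{\tau=1}^{t} |S^\tau_m|.
\]
Since every round $\tau$ produces a top-$m$ ranking of some $m$-subset $S^\tau_m \subseteq S$, we have $|S^\tau_m| = m$ for each $\tau$, and hence $\sum_{i \in S} q_i = mt$.

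Second, I would average: the maximum of a finite collection of non-negative numbers is at least their arithmetic mean, so
\[
\max_{i \in S} q_i \;\ge\; \frac{1}{|S|}\sum_{i \in S} q_i \;=\; \frac{mt}{k},
\]
which is the claim.

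There is no real obstacle here; the only subtlety worth flagging is that the identity $|S^\tau_m| = m$ uses the definition of the TR-$m$ feedback model, namely that each round returns an ordered tuple of exactly $m$ distinct items drawn from $S$ (no ties, no repetitions). Once that is noted, the proof is just two lines of counting.
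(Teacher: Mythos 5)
Your proof is correct and follows essentially the same argument as the paper's: both compute $\sum_{i \in S} q_i = mt$ by counting top-$m$ slots over the $t$ rounds and then apply an averaging/pigeonhole step over the $k$ items to conclude $\max_{i \in S} q_i \ge \frac{mt}{k}$. Your explicit flagging of the fact that $|S^\tau_m| = m$ with distinct items is a fair observation and matches the paper's remark that each round places exactly $m$ distinct items in the top-$m$ slots.
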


\vspace*{-19pt}
\begin{center}
\begin{algorithm}[H]
   \caption{\algupdt\, (for updating the pairwise win counts $w_{ij}$ for {TR} feedback)}
   \label{alg:updt_win}
\begin{algorithmic}[1]
   \STATE {\bfseries Input:} 
   STATE ~~~ Subset $S \subseteq [n]$, $|S| = k$ ($n\ge k$) %and subset size: $k > 1$
   \STATE ~~~ A top-$m$ ranking $\bsigma \in \bSigma_{S_m}$, $S_m \subseteq [n], \, |S_m| = m$
   \STATE ~~~ Pairwise (empirical) win-count $w_{ij}$ for each item pair $i,j \in S$
%   \STATE {\bfseries Initialize:} 
%   \STATE ~~~ $r_1 \leftarrow $  Select an item randomly from $[n]$
   \WHILE {$\ell = 1, 2, \ldots m$}
   	\STATE Update $w_{\sigma(\ell)i} \leftarrow w_{\sigma(\ell)i} + 1$, for all $i \in S \setminus\{\sigma(1),\ldots,\sigma(\ell)\}$
	\ENDWHILE
\end{algorithmic}
\end{algorithm}
\vspace{-5pt}
\end{center}
\vspace*{-5pt}

%\vspace*{-15pt}
\noindent
\textbf{Proposed Algorithms for TR feedback.}
The formal descriptions of our two algorithms, \algtrc\, and \algdiv\,, generalized to the setting of {TR} feedback, are given as Algorithm \ref{alg:trace_bb_mod}  and Algorithm \ref{alg:div_bb_mod} respectively.
They essentially maintain the empirical pairwise preferences $\hp_{ij}$ for each pair of items $i,j$ by applying \textit{Rank Breaking} on the {TR} feedback $\bsigma$ after each round of battle. 
Of course in general, \emph{Rank Breaking} may lead to arbitrarily inconsistent estimates of the underlying model parameters \citep{Az+12}. However, owing to the {\it IIA property} of the Plackett-Luce model, we get clean concentration guarantees on $p_{ij}$ using Lemma \ref{lem:pl_simulator}. This is precisely the idea used for obtaining the $\frac{1}{m}$ factor improvement in the sample complexity guarantees of our  proposed algorithms along with Lemma \ref{lem:rb} (see proofs of Theorem \ref{thm:trace_best_fr} and \ref{thm:div_bb_fr}).

\begin{restatable}[\algtrc:  Correctness and Sample Complexity with TR]{thm}{ubtrcfr}
\label{thm:trace_best_fr}
\hspace*{-5pt} With top-$m$ ranking (TR) feedback model, \algtrc\, (Algorithm \ref{alg:trace_bb_mod}) is $(\epsilon,\delta)$-{PAC} with sample complexity $O(\frac{n}{m\epsilon^2} \log\frac{n}{\delta})$.
\end{restatable}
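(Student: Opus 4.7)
\textbf{(sketch).}
The plan is to mirror the WI analysis (proof of Theorem \ref{thm:trace_best}), with one modification: each battle set $\cA$ of size $k$ is now played for only $t := \frac{2k}{m\epsilon^2}\ln\frac{2n}{\delta}$ rounds (an $m$-fold reduction), and pairwise win counts $w_{ij}$ are maintained via the \algupdt\, subroutine (Algorithm \ref{alg:updt_win}). The running winner is swapped exactly as in Algorithm \ref{alg:trace_bb} based on the pairwise estimate $\hp_{c_\ell r_\ell} = w_{c_\ell r_\ell}/(w_{c_\ell r_\ell}+w_{r_\ell c_\ell})$, where $c_\ell$ is taken to be the item in $\cA$ with the most top-$m$ appearances. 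I would prove the TR analogue of Lemma \ref{lem:c_vs_r}---that with probability at least $1-(k-1)\delta/(2n)$ the running winner $r_{\ell+1}$ cannot be meaningfully worse than $r_\ell$---and then union-bound across iterations exactly as in the WI case.

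The core of the per-iteration argument rests on two pieces which together yield the $m$-fold speed-up. First, by the equivalence of top-$m$ sampling with $m$ successive without-replacement PL winner draws from $\cA$ (Section \ref{sec:feed_mod}), the comparison outcome recorded by \algupdt\, for a fixed pair $(i,j) \subseteq \cA$ in a single TR round is precisely the identity of the first draw from $\{i,j\}$ in the top-$m$ sequence (when $\{i,j\}$ intersects it); by IIA this outcome is $\text{Bernoulli}(\theta_i/(\theta_i+\theta_j))$, and across rounds these Bernoulli trials are independent. Thus the IIA-coupling argument behind Lemma \ref{lem:pl_simulator} goes through with effective sample count $n_{ij}(t) := w_{ij}+w_{ji}$ equal to the number of rounds in which at least one of $i,j$ enters the top-$m$. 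Second, inspection of \algupdt\, shows $n_{ij}(t) \geq \max(q_i, q_j)$, where $q_i$ denotes the top-$m$ appearance count of $i$; Lemma \ref{lem:rb} then gives $\max_{i \in \cA} q_i \geq mt/k$, so the choice $c_\ell := \arg\max_{i \in \cA} q_i$ guarantees $n_{c_\ell r_\ell}(t) \geq q_{c_\ell} \geq mt/k = \frac{2}{\epsilon^2}\ln\frac{2n}{\delta}$ deterministically. Plugging $v := mt/k$ and $\eta = \epsilon/2$ into Lemma \ref{lem:pl_simulator} yields $\Pr(|\hp_{c_\ell r_\ell}-p_{c_\ell r_\ell}|\ge \epsilon/2) \le 2e^{-v\epsilon^2/2} \leq \delta/n$, which is precisely the per-iteration failure budget required.

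The remainder is routine: the case analysis of Lemma \ref{lem:c_vs_r}---$p_{c_\ell r_\ell} \le 1/2$ forces retention, while $p_{c_\ell r_\ell} \ge 1/2+\epsilon$ forces replacement---carries over unchanged, and a union bound over the $\lceil n/(k-1)\rceil$ iterations delivers the $(\epsilon,\delta)$-PAC correctness. The total sample cost is $\lceil n/(k-1)\rceil \cdot t = O(\frac{n}{m\epsilon^2}\log\frac{n}{\delta})$, as claimed. The main technical step, and the one I expect to require the most care, is the effective sample-size analysis that combines the combinatorial appearance bound (Lemma \ref{lem:rb}) with the IIA-driven coupling underlying Lemma \ref{lem:pl_simulator}, to convert a single TR observation into $\Theta(m)$ valid i.i.d.\ pairwise samples for the critical pair $(c_\ell, r_\ell)$; everything else is a direct transplant of the WI argument.
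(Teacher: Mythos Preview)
Your proposal is correct and follows essentially the same route as the paper: establish that $n_{c_\ell r_\ell} \ge mt/k$ via Lemma~\ref{lem:rb}, invoke the IIA-based concentration (Lemma~\ref{lem:pl_simulator}) with $v = mt/k$, and then transplant the WI case analysis and union bound. One small discrepancy worth flagging: you take $c_\ell$ to be the item with the most top-$m$ appearances, whereas Algorithm~\ref{alg:trace_bb_mod} actually selects $c_\ell$ from the set $B_\ell$ of items maximizing the Copeland-type pairwise score $\sum_{j}\1(w_{ij}\ge w_{ji})$; the paper bridges this via an extra lemma (Lemma~\ref{lem:rb_cl}) showing that any such $c_\ell$ necessarily has maximal appearance count $q_{c_\ell}$, so your deterministic lower bound $n_{c_\ell r_\ell}\ge q_{c_\ell}\ge mt/k$ still goes through for the algorithm as written once you add that observation. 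Your explicit justification of why the coupling behind Lemma~\ref{lem:pl_simulator} extends to rank-broken TR data (the first of $\{i,j\}$ to appear in the top-$m$ sequence is Bernoulli$(\theta_i/(\theta_i+\theta_j))$ independently of whether an appearance occurs) is in fact more careful than the paper's own appeal to Lemma~\ref{lem:pl_simulator}, which is stated only for WI.
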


\vspace*{-10pt}
\begin{restatable}[\algdiv: \hspace*{-5pt} Correctness and Sample Complexity with TR]{thm}{ubdivfr}
\label{thm:div_bb_fr}
\hspace*{-7pt} With top-$m$ ranking (TR) feedback model, \algdiv\, (Algorithm \ref{alg:div_bb_mod}) is $(\epsilon,\delta)$-{PAC} with sample complexity $O(\frac{n}{m\epsilon^2} \log \frac{k}{\delta})$.
\end{restatable}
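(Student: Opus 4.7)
The plan is to follow the structure of the WI-version argument (Theorem~\ref{thm:div_bb}), replacing the per-iteration play-count and the pairwise concentration argument so as to exploit (i) the Rank-Breaking subroutine (Algorithm~\ref{alg:updt_win}), which extracts pairwise comparisons from every TR observation, and (ii) the pigeonhole bound of Lemma~\ref{lem:rb}, which guarantees that the most-frequent top-$m$ item in any group is observed at least $mt/k$ times. Concretely, I would set the per-iteration play-count to $t_\ell = \Theta\!\left(\tfrac{k}{m\epsilon_\ell^2}\ln\tfrac{k}{\delta_\ell}\right)$ -- an $m$-fold reduction over the WI case -- and keep the outer recursion (geometric schedule $\epsilon_\ell = (3/4)\epsilon_{\ell-1}$, $\delta_\ell = \delta_{\ell-1}/2$, halving of the surviving set via group winners) unchanged.

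The central step is the TR analogue of Lemma~\ref{lem:div_bb}: for each group $\cG_g$ with best item $i_g$, the retained empirical winner $c_g$ satisfies $p_{c_g i_g} > \tfrac{1}{2} - \epsilon_\ell$ with probability at least $1 - \delta_\ell/k$. To prove this, I would first use Lemma~\ref{lem:rb} to observe that the most-frequent-in-top-$m$ item, namely $c_g$, has top-$m$ frequency $q_{c_g} \ge m t_\ell/k$; since every TR round in which $c_g$ appears in the top-$m$ contributes a rank-breaking sub-round from a subset containing both $c_g$ and $i_g$, this yields $n_{c_g i_g}(t_\ell) \ge m t_\ell/k$. The IIA property of the Plackett-Luce model then lets me invoke Lemma~\ref{lem:pl_simulator} on the rank-breaking-induced stream of pairwise winners with $v = m t_\ell/k$ and $\eta = \epsilon_\ell$, which gives $|\hp_{c_g i_g} - p_{c_g i_g}| \le \epsilon_\ell$ with probability $\ge 1 - e^{-2v\eta^2} = 1 - \delta_\ell/k$. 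Combining this concentration with the fact that $c_g$ being the empirical top-$m$ winner forces $\hp_{c_g i_g} \ge \tfrac{1}{2}$ (a short contradiction argument using the same concentration inequality in the reverse direction) yields $p_{c_g i_g} > \tfrac{1}{2} - \epsilon_\ell$.

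The rest is bookkeeping. Union-bounding over the $k-1$ competitors in each group and over the $G_\ell = \lceil n/k^\ell \rceil$ groups across $\lceil \ln_k n \rceil$ iterations, the total failure probability is at most $\sum_\ell \delta_\ell \le \delta$. By the iterative ``preference-gap'' argument of the WI case, the PL parameter of the surviving pool loses at most $\epsilon_\ell$ per iteration (in the pairwise-preference sense), and the geometric schedule ensures $\sum_\ell \epsilon_\ell \le \epsilon$, yielding the claimed $p_{r_* 1} > \tfrac{1}{2} - \epsilon$. Summing $t_\ell \cdot G_\ell = O\!\left(\tfrac{n}{m k^{\ell-1} \epsilon_\ell^2} \ln \tfrac{k}{\delta_\ell}\right)$ over $\ell$ -- a series dominated by its first term because $(16/9)^{\ell}/k^{\ell-1}$ decays geometrically for $k \ge 2$ -- gives the claimed sample complexity $O\!\left(\tfrac{n}{m\epsilon^2} \ln \tfrac{k}{\delta}\right)$.

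The main obstacle is the rigorous invocation of Lemma~\ref{lem:pl_simulator} on the rank-broken sample stream. A single TR round generates up to $m$ successive ``winner'' draws from nested subsets $S \supsetneq S\setminus\{\sigma(1)\} \supsetneq \cdots$, and one must verify that the subsequence of sub-rounds contributing to $n_{c_g i_g}$ -- those for which both $c_g$ and $i_g$ are still present in the current sub-subset -- satisfies the measurability condition (a) and the PL-winner condition (b) of Lemma~\ref{lem:pl_simulator}. This hinges on the equivalent sequential-sampling characterization of top-$m$ feedback (Section~\ref{sec:feed_mod}) together with the IIA property (Section~\ref{sec:iia}), which makes the restricted subsequence itself an adapted sequence of legitimate PL winners from subsets containing $\{c_g, i_g\}$; once this is justified, Lemma~\ref{lem:pl_simulator} applies verbatim and delivers the $m$-fold statistical acceleration responsible for the stated rate.
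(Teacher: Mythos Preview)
Your plan has a real gap at the per-group step. You identify $c_g$ with ``the most-frequent-in-top-$m$ item'' and assert this forces $\hat p_{c_g i_g}\ge\tfrac12$. But in Algorithm~\ref{alg:div_bb_mod}, $c_g$ is \emph{not} the top-$m$-frequency maximizer; it is any $i$ with $\hat p_{ij}+\tfrac{\epsilon_\ell}{2}\ge\tfrac12$ for all $j\in\cG_g$ (Line~17), so Lemma~\ref{lem:rb} does not speak to $n_{c_g i_g}$. Worse, even for the frequency maximizer the implication fails under rank breaking: with $A,B$ the numbers of rounds in which only $c_g$ (resp.\ only $i_g$) lands in the top-$m$ and $C,D$ the rounds in which both appear with $c_g$ (resp.\ $i_g$) ranked higher, one has $q_{c_g}-q_{i_g}=A-B$ while $w_{c_g i_g}-w_{i_g c_g}=(A+C)-(B+D)$; the condition $A\ge B$ does not force $A+C\ge B+D$. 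This is precisely where the WI argument (where $c_g=\arg\max_i w_i$ trivially gives $w_{c_g}\ge w_{i_g}$) stops transferring to TR.

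The paper's proof avoids this by anchoring on the \emph{true} best item $i_g$ rather than on the empirical winner. A multiplicative Chernoff bound (Lemma~\ref{lem:divbat_n1}) gives $q_{i_g}\ge\tfrac{mt}{2k}$ with probability at least $1-\tfrac{\delta_\ell}{2k}$, whence $n_{i_g j}\ge\tfrac{mt}{2k}$ for \emph{every} $j\in\cG_g$; Lemma~\ref{lem:pl_simulator} then concentrates all the $\hat p_{i_g j}$ simultaneously (Lemma~\ref{lem:divbat_n2}). This both certifies that $i_g$ itself meets the selection rule and shows that every non-$\epsilon_\ell$-optimal $j$ has $\hat p_{j i_g}+\tfrac{\epsilon_\ell}{2}<\tfrac12$, hence cannot be returned as $c_g$. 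The ingredients missing from your proposal are this Chernoff lower bound on $q_{i_g}$ and the matching selection criterion in Algorithm~\ref{alg:div_bb_mod}; the deterministic pigeonhole Lemma~\ref{lem:rb}, which you lean on, is used by the paper for \algtrc\ (Theorem~\ref{thm:trace_best_fr}) but cannot substitute for the Chernoff step here.
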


\begin{rem}
The sample complexity bounds of the above two algorithms are $\frac{1}{m}$ fraction lesser than their corresponding counterparts for WI feedback, as follows comparing Theorem \ref{thm:trace_best} vs. \ref{thm:trace_best_fr}, or Theorem \ref{thm:div_bb} vs. \ref{thm:div_bb_fr}, which admit a faster learning rate with TR feedback. Similar to the case with WI feedback, sample complexity of \algdiv\, is still orderwise optimal for any $\delta \le \frac{1}{k}$, as follows from the lower bound guarantee (Theorem \ref{thm:lb_pacpl_rnk}). However, we believe that the above lower bound can be tightened by a factor of $\ln k$ for 'moderate' $\delta \gtrapprox \frac{1}{k}$, for reasons similar to those stated in Remark \ref{rem:lb_loose}.
\end{rem}

%\input{experiments.tex}

%\textbf{Algorithm \ref{alg:div_bb_mod}: Generalizing \algdiv \, for top-$m$ ranking (TR) feedback.}

\vspace*{-20pt}
\begin{center}
\begin{algorithm}[H]
   \caption{\textbf{\algtrc} (for {TR} feedback) }
   \label{alg:trace_bb_mod}
\begin{algorithmic}[1]
   \STATE {\bfseries Input:} 
   \STATE ~~~ Set of items: $[n]$, and subset size: $k > 2$ ($n \ge k \ge m$)
   \STATE ~~~ Error bias: $\epsilon >0$, and confidence parameter: $\delta >0$
   \STATE {\bfseries Initialize:} 
   \STATE ~~~ $r_1 \leftarrow $  Any (random) item from $[n]$, $\cA \leftarrow $ Randomly select $(k-1)$ items from $[n]\setminus \{r_1\}$
   \STATE ~~~ Set $\cA \leftarrow \cA \cup \{r_1\}$, and $S \leftarrow [n]\setminus \cA$  
   %\REPEAT  
   \WHILE {$\ell = 1, 2, \ldots$}
    \STATE Initialize pairwise (empirical) win-count $w_{ij} \leftarrow 0$, for each item pair $i,j \in \cA$
   	\FOR {$\tau = 1, 2, \ldots t\,(:= \frac{2k}{m\epsilon^2}\ln \frac{2n}{\delta})$}
   	\STATE Play the set $\cA$ (one round of battle)
   	\STATE Receive TR feedback: $\bsigma \in \bSigma_{\cA^\tau_m}$, where $\cA^\tau_m \subseteq \cA$ such that $|\cA^\tau_m| = m$ 
   	\STATE Update pairwise win-counts $w_{ij}$ of each item pair $i,j \in \cA$ using \algupdt$(\cA,\bsigma)$
   	\ENDFOR
   \STATE $B_\ell \leftarrow \text{argmax}\{ i \in \cA \mid  \sum_{j \in \cA\setminus\{i'\}}~\1\big(w_{ij} \ge w_{ji}\big) \}$, %(break ties at random) i \in \underset{i' \in \cA}{\text{argmax}}
   \STATE $\hp_{ij} \leftarrow \frac{w_{ij}}{w_{ij} + w_{ji}}, \, \forall i,j \in \cA, i \neq j$
   \STATE \textbf{if} $\exists c_\ell \in B_\ell \text{ such that } \hp_{c_\ell,r_\ell} > \frac{1}{2} + \frac{\epsilon}{2} $, \textbf{then} set $r_{\ell+1} \leftarrow c_\ell$; \textbf{else} set $r_{\ell+1} \leftarrow r_\ell$ 	
	\IF {$(S == \emptyset)$}
	\STATE Break (go out of the while loop)
	\ELSIF {$|S| < k-1$}
	\STATE $\cA \leftarrow$ Randomly select $(k-1-|S|)$ items from $\cA \setminus \{r_\ell\}$
	\STATE $\cA \leftarrow \cA \cup \{r_\ell\} \cup S$; and $S \leftarrow \emptyset$
	\ELSE
	\STATE $\cA	\leftarrow $ Randomly select $(k-1)$ items from $S$
	\STATE $\cA \leftarrow \cA \cup \{r_\ell\}$ and $S\leftarrow S \setminus \cA$
	\ENDIF
	\ENDWHILE
%   \UNTIL{$(S == \emptyset)$}
   \STATE {\bfseries Output:} $r_* = r_\ell$ as the $\epsilon$-optimal item
\end{algorithmic}
\end{algorithm}
%\vspace{2pt}
\end{center}

\begin{center}
\begin{algorithm}[H]
   \caption{\textbf{\algdiv} (for {TR} feedback) }
   \label{alg:div_bb_mod}
\begin{algorithmic}[1]
   \STATE {\bfseries Input:} 
   \STATE ~~~ Set of items: $[n]$, and subset size: $k > 2$ ($n \ge k \ge m$)
   \STATE ~~~ Error bias: $\epsilon >0$, and confidence parameter: $\delta >0$
   \STATE {\bfseries Initialize:} 
   \STATE ~~~ $S \leftarrow [n]$, $\epsilon_0 \leftarrow \frac{\epsilon}{8}$, and $\delta_0 \leftarrow \frac{\delta}{2}$  
   \STATE ~~~ Divide $S$ into $G: = \lceil \frac{n}{k} \rceil$ sets $\cG_1, \cG_2, \cdots \cG_G$ such that $\cup_{j = 1}^{G}\cG_j = S$ and $\cG_{j} \cap \cG_{j'} = \emptyset, ~\forall j,j' \in [G], \, |G_j| = k,\, \forall j \in [G-1]$.
    \textbf{If} $|\cG_{G}| < k$, \textbf{then} set $\cR_1 \leftarrow \cG_G$  and $G = G-1$.
   %\REPEAT  
   \WHILE{$\ell = 1,2, \ldots$}
   \STATE Set $S \leftarrow \emptyset$, $\delta_\ell \leftarrow \frac{\delta_{\ell-1}}{2}, \epsilon_\ell \leftarrow \frac{3}{4}\epsilon_{\ell-1}$
   \FOR {$g = 1,2, \cdots G$}
    \STATE Initialize pairwise (empirical) win-count $w_{ij} \leftarrow 0$, for each item pair $i,j \in \cG_g$
	\FOR {$\tau = 1, 2, \ldots t\,\,(:= \frac{4k}{m\epsilon_\ell^2}\ln \frac{2k}{\delta_\ell})$}
   	\STATE Play the set $\cG_g$ (one round of battle)
   	\STATE Receive feedback: The top-$m$ ranking $\bsigma \in \bSigma_{\cG^\tau_{gm}}$, where $\cG^\tau_{gm} \subseteq \cG_g$, $|\cG^\tau_{gm}| = m$ 
   	\STATE Update win-count $w_{ij}$ of each item pair $i,j \in \cG_g$ using \algupdt$(\cG_g,\bsigma)$
   	\ENDFOR 
%   	\STATE $B_\ell \leftarrow \text{argmax}\{ i \in \cA \mid  \sum_{j \in \cA\setminus\{i'\}}~\1\big(w_{ij} \ge w_{ji}\big) \}$  
	\STATE Define $\hat p_{i,j} = \frac{w_{ij}}{w_{ij}+w_{ji}}, \, \forall i,j \in \cG_g$
   	\STATE If $\exists i \in \cG_g$ such that $\hp_{i j} + \frac{\epsilon_\ell}{2} \ge \frac{1}{2}, \, \forall j \in \cG_g$, then set $c_g \leftarrow i$, else select $c_g \leftarrow$ uniformly at random from $\cG_g$, and set $S \leftarrow S \cup \{c_g\}$
%   \STATE Set $c_g \leftarrow \underset{i \in \cG_g}{\text{argmax}}\sum_{j \in \cG_g\setminus\{i\}}~\1\big(w_{ij} \ge w_{ji}\big)$ (break ties arbitrarily), 
   \ENDFOR
   \STATE $S \leftarrow S \cup \cR_\ell$
   \IF{$(|S| == 1)$}
   \STATE Break (go out of the while loop)
   \ELSIF{$|S|\le k$}
   \STATE $S' \leftarrow $ Randomly sample $k-|S|$ items from $[n] \setminus S$, and $S \leftarrow S \cup S'$, $\epsilon_\ell \leftarrow \frac{2\epsilon}{3}$, $\delta_\ell \leftarrow {\delta}$  
%	\STATE Initialize pairwise (empirical) win-count $w_{ij} \leftarrow 0$, for each item pair $i,j \in S$
%	\FOR {$\tau = 1, 2, \ldots t\,(:= \frac{k}{m\epsilon_\ell^2}\ln \frac{2k}{\delta_\ell})$}
%   	\STATE Play the set $S$ (one round of battle)
%   	\STATE Receive TR feedback: $\bsigma \in \bSigma_{S^\tau_{m}}$, where $S^\tau_{m} \subseteq S$ such that $|S^\tau_{m}| = m$ 
%   	\STATE Update win-count $w_{ij}$ of each item pair $i,j \in S$ using \algupdt$(S,\bsigma)$
%   	\ENDFOR
%   	\STATE Define $\hat p_{i,j} = \frac{w_i}{w_i+w_j}, \, \forall i,j \in S$
%   	\STATE \textbf{If} $\exists i \in \cG_g$ such that $\hp_{i j} + \frac{\epsilon}{2} \ge \frac{1}{2}, \, \forall j \in \cG_g$, \textbf{then} set $c_g \leftarrow i$, \textbf{else} select $c_g \leftarrow$ uniformly at random from $\cG_g$, and  Break (go out of the while loop)
  \ELSE
   \STATE Divide $S$ into $G: = \big \lceil \frac{|S|}{k} \big \rceil$ sets $\cG_1, \cdots \cG_G$ such that $\cup_{j = 1}^{G}\cG_j = S$, $\cG_{j} \cap \cG_{j'} = \emptyset, ~\forall j,j' \in [G], \, |G_j| = k,\, \forall j \in [G-1]$. \textbf{If} $|\cG_{G}| < k$, \textbf{then} set $\cR_{\ell+1} \leftarrow \cG_G$  and $G = G-1$.
   \ENDIF
   \ENDWHILE
%   \UNTIL{$(S == \emptyset)$}
   \STATE {\bfseries Output:} $r_*$ as the $\epsilon$-optimal item, where $S = \{r_*\}$ (i.e. $r_*$ is the only item remaining in $S$)
\end{algorithmic}
\end{algorithm}
\vspace{-10pt}
\end{center}

\vspace*{-10pt}
\section{Conclusion and Future Directions}
\label{sec:conclusion}

We have developed foundations for probably approximately correct (PAC) online learning with subset choices: introducing the problem of \emph{Battling-Bandits (BB)} with subset choice models -- a novel generalization of the well-studied Dueling-Bandit problem, where the objective is to find the `best item' by successively choosing subsets of $k$ alternatives from $n$ 
items, and subsequently receiving a set-wise feedback information in an online fashion. We have specifically studied the  Plackett-Luce (PL) choice model along with winner information {(WI)} and top ranking {(TR)} feedback, with the goal of finding an \emph{$(\epsilon,\delta)$-{PAC} item}: an $\epsilon$-approximation of the best item with probability at least $(1-\delta)$. Our results show that with just the {WI} feedback, playing a battling game is just as good as that of a dueling game $(k=2)$, as in this case  the required sample complexity of the PAC learning problem is independent of the subset set $k$. However with {TR} feedback, the battling framework provides a $\frac{1}{m}$-times faster learning rate, leading to an improved performance guarantee owing to the information gain with top-$m$ ranking feedback, as intuitively well justified as well.
%For {WI}, our lower bound guarantee of $\Omega(n \ln T)$ shows that with this feedback model, we can not hope to get a learning rate better than Dueling-Bandits, however with FR, one can aggregate information $\frac{1}{k}$  times faster than above, as follows from our derived lower bound of $\Omega(\frac{n}{k}\ln T)$.

\vspace{5pt}
\noindent
\textbf{Future Directions.} Our proposed framework of {\it Battling Bandits} opens up a set of new directions to pursue - with different feedback mechanisms, choice models (e.g. Multinomial Probit, Mallows, nested logit, generalized extreme-value models etc.), other learning objectives, etc. It is an interesting open problem to analyse the trade-off between the subset size $k$ and the learning rate for other choice models with different feedback mechanisms. Another relevant direction to pursue within battling bandits could be to extend it to more general settings such as revenue maximization \citep{Agrawal+16}, learning with cost budgets \citep{BMAB16,BMAB17}, feature-based preference information and adversarial choice feedback \citep{Adv_DB}.

\newpage

\bibliographystyle{plainnat}
\bibliography{pl-battling-bandits}

\newpage

\newpage
\onecolumn

\appendix
{
\section*{\centering \Large{Supplementary for PAC Battling Bandits in the Plackett-Luce Model}}
%\vspace*{1cm}
}
  
%%%%%%%%%%%%%%%%% WI Results %%%%%%%%%%%%%%%%%%  

\section{Appendix for Section \ref{sec:iia}}
\label{app:iia}

\subsection{Proof of Lemma \ref{lem:pl_simulator}}

\plsimulator*

\begin{proof}
	We prove the lemma by using a coupling argument. Consider the following `simulator' or probability space for the Plackett-Luce choice model that specifically depends on the item pair $i,j$, constructed as follows. Let $Z_1, Z_2, \ldots$ be a sequence of iid Bernoulli random variables with success parameter $\theta_i/(\theta_i + \theta_j)$. A counter is first initialized to $0$. At each time $t$, given $S_1, i_1, \ldots, S_{t-1}, i_{t-1}$ and $S_t$,  an independent coin is tossed with probability of heads $(\theta_i + \theta_j)/\sum_{k \in S_t} \theta_k$. If the coin lands tails, then $i_t$ is drawn as an independent sample from the Plackett-Luce distribution over $S_t \setminus \{i,j\}$, else, the counter is incremented by $1$, and $i_t$ is returned as $i$ if $Z_C = 1$ or $j$ if $Z_C = 0$ where $C$ is the present value of the counter.
	
	It may be checked that the construction above indeed yields the correct joint distribution for the sequence $i_1, S_1, \ldots, i_T, S_T$ as desired, due to the independence of irrelevant alternatives (IIA) property of the Plackett-Luce choice model: 
	\[ Pr(i_t = i | i_t \in \{i,j\}, S_t) = \frac{Pr(i_t = i | S_t)}{Pr(i_t \in \{i,j\} | S_t)} = \frac{\theta_i/\sum_{k \in S_t} \theta_k}{(\theta_i + \theta_j)/\sum_{k \in S_t} \theta_k} = \frac{\theta_i}{\theta_i + \theta_j}. \]
	Furthermore, $i_t \in \{i,j\}$ if and only if $C$ is incremented at round $t$, and $i_t = i$ if and only if $C$ is incremented at round $t$ and $Z_C = 1$. We thus have
\begin{align*}
	Pr &\left( \frac{n_i(T)}{n_{ij}(T)} - \frac{\theta_i}{\theta_i + \theta_j} \ge \eta, \; n_{ij}(T) \geq v \right) = Pr\left(\frac{\sum_{\ell=1}^{n_{ij}(T)} Z_\ell}{n_{ij}(T)} - \frac{\theta_i}{\theta_i + \theta_j}\ge\eta, \; n_{ij}(T) \geq v \right) \\
	&= \sum_{m = v}^T Pr\left(\frac{\sum_{\ell=1}^{n_{ij}(T)} Z_\ell}{n_{ij}(T)} - \frac{\theta_i}{\theta_i + \theta_j}\ge\eta, \; n_{ij}(T) = m \right) \\
	&= \sum_{m = v}^T Pr\left(\frac{\sum_{\ell=1}^{m} Z_\ell}{m} - \frac{\theta_i}{\theta_i + \theta_j}\ge\eta, \; n_{ij}(T) = m \right) \\
	&\stackrel{(a)}{=} \sum_{m = v}^T Pr\left(\frac{\sum_{\ell=1}^{m} Z_\ell}{m} - \frac{\theta_i}{\theta_i + \theta_j}\ge\eta \right) \, Pr \left( n_{ij}(T) = m \right) \\
	&\stackrel{(b)}{\leq} \sum_{m = v}^T Pr \left( n_{ij}(T) = m \right) \, e^{-2m \eta^2} \leq e^{-2v \eta^2},
\end{align*}	
where $(a)$ uses the fact that $S_1, \dots, S_T, X_1, \ldots, X_T$ are independent of $Z_1, Z_2, \ldots,$, and so $n_{ij}(T) \in \sigma(S_1, \dots, S_T, X_1, \ldots, X_T)$ is independent of $Z_1, \ldots, Z_m$ for any fixed $m$, and $(b)$ uses Hoeffding's concentration inequality for the iid sequence $Z_i$. 

Similarly, one can also derive
\[
Pr\left( \frac{n_i(T)}{n_{ij}(T)} - \frac{\theta_i}{\theta_i + \theta_j} \le -\eta, \; n_{ij}(T) \geq v \right) \le e^{-2v \eta^2}, 
\]
which concludes the proof.
\end{proof}

%\begin{proof}
%The proof follows from independence of irrelevant attributes (IIA) property of PL model. Clearly $\forall i \in [n], t \in [T]$, $X_i^t \in \{0,1\}$. Furthermore, we have that 
%\begin{align*}
%Pr(X_i^t = 1) = Pr(\{i_t = i\} \vert \{i_t \in \{i,j\}\}) = \frac{\theta_i}{\sum_{j \in S_t}\theta_j}\times \frac{\sum_{j \in S_t}\theta_j}{\theta_i + \theta_j} = \frac{\theta_i}{\theta_i + \theta_j},
%\end{align*}
%and the proof follows.
%\end{proof}

\section{Appendix for Section \ref{sec:res_wi}}
\label{app:reswi}
  
\subsection{Proof of Theorem \ref{thm:lb_plpac_win}}
\label{app:wilb}

\lbwin*

\begin{proof}
We will apply Lemma \ref{lem:gar16} to derive the desired lower bounds of Theorem \ref{thm:lb_plpac_win} for BB-PL with {WI} feedback model. 

Let us consider a bandit instance with the arm set containing all subsets of size $k$: $\cA = \{S = (S(1), \ldots, S(k)) \subseteq [n] ~|~ S(i) < S(j), \, \forall i < j\}$. 
Let $\bnu^1$ be the true distribution associated with the bandit arms, given by the Plackett-Luce parameters:
\begin{align*}
\textbf{True Instance} ~(\bnu^1): \theta_j^1 = \theta\bigg( \frac{1}{2} - \epsilon\bigg), \forall j \in [n]\setminus \{1\}, \text{ and } \theta_1^1 = \theta\bigg( \frac{1}{2} + \epsilon \bigg),
\end{align*}

for some $\theta \in \R_+, ~\epsilon > 0$. Now for every suboptimal item $a \in [n]\setminus \{1\}$, consider the modified instances $\bnu^a$ such that:
\begin{align*}
\textbf{Instance--a} ~(\bnu^a): \theta^a_j = \theta\bigg( \frac{1}{2} - \epsilon \bigg)^2, \forall j \in [n]\setminus \{a,1\}, \, \theta_1^a = \theta\bigg( \frac{1}{4} - \epsilon^2 \bigg), \text{ and } \theta_a^a = \theta\bigg( \frac{1}{2} + \epsilon \bigg)^2.
\end{align*}

For problem instance $\bnu^a, \, a \in [n]\setminus\{1\}$, the probability distribution associated with arm $S \in \cA$ is given by
\[
\nu^a_S \sim Categorical(p_1, p_2, \ldots, p_k), \text{ where } p_i = Pr(i|S), ~~\forall i \in [k], \, \forall S \in \cA,
\]
where $Pr(i|S)$ is as defined in Section \ref{sec:feed_mod}. 
Note that the only $\epsilon$-optimal arm for \textbf{Instance-a} is arm $a$. Now applying Lemma \ref{lem:gar16}, for some event $\cE \in \cF_\tau$ we get,

\begin{align}
\label{eq:FI_a}
\sum_{\{S \in \cA : a \in S\}}\E_{\bnu^1}[N_S(\tau_A)]KL(\bnu^1_S, \bnu^a_S) \ge {kl(Pr_{\nu}(\cE),Pr_{\nu'}(\cE))}.
\end{align}

The above result holds from the straightforward observation that for any arm $S \in \cA$ with $a \notin S$, $\bnu^1_S$ is same as $\bnu^a_S$, hence $KL(\bnu^1_S, \bnu^a_S)=0$, $\forall S \in \cA, \,a \notin S$. 
For notational convenience, we will henceforth denote $S^a = \{S \in \cA : a \in S\}$. 
%Also recall that, for any set $S$ we denote by $m_S(i) = \sum_{j = 1}^{k}\1(S(j)==i)$ denotes the multiplicities of item $i$ in set $S$.

Now let us analyse the right hand side of \eqref{eq:FI_a}, for any set $S \in S^a$. We further denote $r = \1(1 \in S)$, $q = (k-1-r)$, and $R = \frac{\frac{1}{2}+\epsilon}{\frac{1}{2}-\epsilon}$.
Note that
\begin{align*}
\nu^1_S(i) = 
\begin{cases} 
\frac{\theta(\frac{1}{2}+\epsilon)}{r\theta(\frac{1}{2}+\epsilon) + (k-r)\theta(\frac{1}{2}-\epsilon)} = \frac{R}{rR + (k-r)}, \forall i \in [k], \text{ such that } S(i) = 1,\\
\frac{\theta(\frac{1}{2}-\epsilon)}{r\theta(\frac{1}{2}+\epsilon) + (k-r)\theta(\frac{1}{2}-\epsilon)} = \frac{1}{rR + (k-r)}, \text{ otherwise. }
\end{cases}
\end{align*}

On the other hand, for problem \textbf{Instance-a}, we have that: 

\begin{align*}
\nu^a_S(i) = 
\begin{cases} 
\frac{R}{rR + R^2 + q}, \forall i \in [k], \text{ such that } S(i) = 1,\\
\frac{R^2}{rR + R^2 + q}, \forall i \in [k], \text{ such that } S(i) = a,\\
\frac{1}{rR + R^2 + q}, \text{ otherwise. }
\end{cases}
\end{align*}

Now using the following upper bound on $KL(\p_1,\p_2) \le \sum_{x \in \X}\frac{p_1^2(x)}{p_2(x)} -1$, $\p_1$ and $\p_2$ be two probability mass functions on the discrete random variable $\X$ \citep{klub16} we get:

\begin{align*}
KL(\bnu^1_S, \bnu^a_S) & \le \frac{rR + R^2 + q}{(rR + k-r)^2} (rR + \frac{1}{R} + q)-1.
\end{align*}

Replacing $q$ by $(k-1-r)$ and re-arranging terms, we get

 \begin{align}
 \label{eq:win_lb0}
\nonumber  KL(\bnu^1_S, \bnu^a_S) & \le \frac{(rR + (k-r) + (R^2-1))(rR + (k-r) + (R^{-2}-1))}{(rR + k-r)^2}-1 \\ 
 & = \frac{(rR + k-r-1)}{(rR + k-r)^2}\Big( R - \frac{1}{R}\Big)^2 \le \frac{1}{k}\Big( R - \frac{1}{R}\Big)^2 ~~[\text{since } s \ge 0, \text{ and } R > 1].
 \end{align}

Note that the only $\epsilon$-optimal arm for any \textbf{Instance-a} is arm $a$, for all $a \in [n]$.
Now, consider $\cE_0 \in \cF_\tau$ be an event such that the algorithm $A$ returns the element $i = 1$, and let us analyse the left hand side of \eqref{eq:FI_a} for $\cE = \cE_0$. Clearly, $A$ being an $(\epsilon,\delta)$-PAC algorithm, we have $Pr_{\bnu^1}(\cE_0) > 1-\delta$, and $Pr_{\bnu^a}(\cE_0) < \delta$, for any suboptimal arm $a \in [n]\setminus\{1\}$. Then we have 

\begin{align}
\label{eq:win_lb2}
kl(Pr_{\bnu^1}(\cE_0),Pr_{\bnu^a}(\cE_0)) \ge kl(1-\delta,\delta) \ge \ln \frac{1}{2.4\delta}
\end{align}

where the last inequality follows from \citet[Equation $(3)$]{Kaufmann+16_OnComplexity}.

Now applying \eqref{eq:FI_a} for each modified bandit \textbf{Instance-$\bnu^a$}, and summing over all suboptimal items $a \in [n]\setminus \{1\}$ we get,

\begin{align}
\label{eq:win_lb2.5}
\sum_{a = 2}^{n}\sum_{\{S \in \cA \mid a \in S\}}\E_{\bnu^1}[N_S(\tau_A)]KL(\bnu^1_S,\bnu^a_S) \ge (n-1)\ln \frac{1}{2.4\delta}.
\end{align}

Moreover, using \eqref{eq:win_lb0}, the term of the right hand side of \eqref{eq:win_lb2.5} can be further upper bounded as

\begin{align}
\label{eq:win_lb3}
\nonumber \sum_{a = 2}^{n}&\sum_{\{S \in \cA \mid a \in S\}} \E_{\bnu^1}[N_S(\tau_A)]KL(\bnu^1_S,\bnu^a_S) \le \sum_{S \in \cA}\E_{\bnu^1}[N_S(\tau_A)]\sum_{\{a \in S \mid a \neq 1\}}\frac{1}{k}\Bigg( R - \frac{1}{R} \Bigg)^2\\
\nonumber  & = \sum_{S \in \cA}\E_{\bnu^1}[N_S(\tau_A)]\frac{k - \big(\1(1 \in S)\big)}{k}\Bigg( R - \frac{1}{R} \Bigg)^2\\
& \le \sum_{S \in \cA}\E_{\bnu^1}[N_S(\tau_A)](256\epsilon^2) ~~\Bigg[\text{since } \Bigg( R - \frac{1}{R} \Bigg) = \frac{8\epsilon}{(1-4\epsilon^2)} \le 16\epsilon, \forall \epsilon \in [0,\frac{1}{\sqrt{8}}] \Bigg].
\end{align}

Finally noting that $\tau_A = \sum_{S \in \cA}[N_S(\tau_A)]$, combining \eqref{eq:win_lb3} and \eqref{eq:win_lb2.5}, we get 

\begin{align*}
(256\epsilon^2)\E_{\bnu^1}[\tau_A] =  \sum_{S \in \cA}\E_{\bnu^1}[N_S(\tau_A)](256\epsilon^2) \ge (n-1)\ln \frac{1}{2.4\delta}.
\end{align*}
Thus above construction shows the existence of a problem instance $\bnu = \bnu^1$, such that $\E_{\bnu^1}[\tau_A] = \Omega(\frac{n}{\epsilon^2}\ln \frac{1}{2.4\delta})$, which concludes the proof.

\end{proof}  

\subsection{Proof of Theorem \ref{thm:trace_best}}
\label{app:wiub_trc}

\ubtrc*

\begin{proof}
We start by analyzing the required sample complexity first. Note that the `while loop' of Algorithm \ref{alg:trace_bb} always discards away $k-1$ items per iteration. Thus, $n$ being the total number of items the loop can be executed is at most for $\lceil \frac{n}{k-1} \rceil $ many number of iterations. Clearly, the sample complexity of each iteration being $t = \frac{2k}{\epsilon^2}\ln \frac{n}{2\delta}$, the total sample complexity of the algorithm thus becomes $\big( \lceil \frac{n}{k-1} \rceil \big)\frac{2k}{\epsilon^2}\ln \frac{n}{2\delta} \le \big(  \frac{n}{k-1} + 1 \big)\frac{2k}{\epsilon^2}\ln \frac{n}{2\delta} = \big( n + \frac{n}{k-1} + k \big)\frac{2}{\epsilon^2}\ln \frac{n}{2\delta} = O(\frac{n}{\epsilon^2}\ln \frac{n}{\delta})$.

We now prove the $(\epsilon,\delta)$-{PAC} correctness of the algorithm. As argued before, the `while loop' of Algorithm \ref{alg:trace_bb} can run for maximum $\lceil \frac{n}{k-1} \rceil $ many number of iterations. We denote the iterations by $\ell = 1,2, \ldots, \lceil \frac{n}{k-1} \rceil $, and the corresponding set $\cA$ of iteration $\ell$ by $\cA_\ell$. 

Note that our idea is to retain the estimated best item in `running winner' $r_\ell$ and compare it with the `empirical best item' $c_\ell$ of $\cA_\ell$ at every iteration $\ell$. The crucial observation lies in noting that at any iteration $\ell$, $r_\ell$ gets updated as follows:

\lemtrc*

\begin{proof}
Consider any set $\cA_\ell$, by which we mean the state of $\cA$ in the algorithm at iteration $\ell$. The crucial observation to make is that since $c_\ell$ is the empirical winner of $t$ rounds of battle, then $w_{c_\ell} \ge \frac{t}{k}$. Thus $w_{c_\ell} + w_{r_\ell} \ge \frac{t}{k}$. Let $n_{ij}:= w_i + w_j$ denotes the total number of pairwise comparisons between item $i$ and $j$ in $t$ rounds, for any $i,j \in \cA_\ell$. Then clearly, $0 \le n_{ij} \le t$ and $n_{ij} = n_{ji}$. Specifically we have $\hp_{r_\ell c_\ell} = \frac{w_{r_\ell}}{w_{r_\ell}+w_{c_\ell}} = \frac{w_{r_\ell}}{n_{r_\ell c_\ell}}$.
%{\color{red}Need to state the ``Simulator Lemma'' formally}. 
We prove the claim by analyzing the following cases: %First let us consider: 

\textbf{Case 1.} (If $p_{c_\ell r_\ell} \le \frac{1}{2}$, \algtrc\, retains $r_{\ell+1} \leftarrow r_\ell$): Note that \algtrc\, replaces $r_{\ell+1}$ by $c_\ell$ only if $\hp_{c_\ell,r_\ell} > \frac{1}{2} + \frac{\epsilon}{2} $, but this happens with probability: %Applying Hoeffding's inequality \citep{Hoeffdings} along with and Lemma \ref{lem:pl_simulator} we get that, 

\begin{align*}
& Pr\Bigg( \bigg\{ \hp_{c_\ell r_\ell} > \frac{1}{2} + \frac{\epsilon}{2} \bigg\} \Bigg) \\
& = Pr\Bigg( \bigg\{ \hp_{c_\ell r_\ell} > \frac{1}{2} + \frac{\epsilon}{2} \bigg\} \cap \bigg\{ n_{c_\ell r_\ell} \ge \frac{t}{k} \bigg\}\Bigg) \hspace*{5pt}+ \\
& \hspace*{1.7in} \cancelto{0}{Pr\Bigg(\bigg\{ n_{c_\ell r_\ell} < \frac{t}{k} \bigg\}\Bigg)}Pr\Bigg( \bigg\{ \hp_{c_\ell r_\ell} > \frac{1}{2} + \frac{\epsilon}{2} \bigg\} \Big | \bigg\{ n_{c_\ell r_\ell} < \frac{t}{k} \bigg\}\Bigg)\\
& \le Pr\Bigg( \bigg\{ \hp_{c_\ell r_\ell} - p_{c_\ell r_\ell} > \frac{\epsilon}{2} \bigg\} \cap \bigg\{ n_{c_\ell r_\ell} \ge \frac{t}{k} \bigg\} \Bigg) \le \exp\Big( -2\dfrac{t}{k}\bigg(\frac{\epsilon}{2}\bigg)^2 \Big) = \frac{\delta}{2n},
\end{align*}
where the first inequality follows as $p_{c_\ell r_\ell} \le \frac{1}{2}$, and the second inequality is by applying Lemma \ref{lem:pl_simulator} with $\eta = \frac{\epsilon}{2}$ and $v = \frac{t}{k}$.
We now proceed to the second case:

\textbf{Case 2.} (If $p_{c_\ell r_\ell} \ge \frac{1}{2} + \epsilon$, \algtrc\,  sets $r_{\ell+1} \leftarrow c_\ell$): Recall again that \algtrc\, retains $r_{\ell+1}  \leftarrow r_\ell$ only if $\hp_{c_\ell,r_\ell} \le \frac{1}{2} + \frac{\epsilon}{2} $. This happens with probability:

\begin{align*}
& Pr\Bigg( \bigg\{ \hp_{c_\ell r_\ell} \le \frac{1}{2} + \frac{\epsilon}{2} \bigg\} \Bigg) \\
& = Pr\Bigg( \bigg\{ \hp_{c_\ell r_\ell} \le \frac{1}{2} + \frac{\epsilon}{2} \bigg\} \cap \bigg\{ n_{c_\ell r_\ell} \ge \frac{t}{k} \bigg\}\Bigg) \hspace*{5pt}+ \\
& \hspace*{1.7in} \cancelto{0}{Pr\Bigg(\bigg\{ n_{c_\ell r_\ell} < \frac{t}{k} \bigg\}\Bigg)}Pr\Bigg( \bigg\{ \hp_{c_\ell r_\ell} \le \frac{1}{2} + \frac{\epsilon}{2} \bigg\} \Big | \bigg\{ n_{c_\ell r_\ell} < \frac{t}{k} \bigg\}\Bigg)\\
& = Pr\Bigg( \bigg\{ \hp_{c_\ell r_\ell} \le \frac{1}{2} + \epsilon - \frac{\epsilon}{2} \bigg\} \cap \bigg\{ n_{c_\ell r_\ell} \ge \frac{t}{k} \bigg\} \Bigg) \\
& \le Pr\Bigg( \bigg\{ \hp_{c_\ell r_\ell} - p_{c_\ell r_\ell} \le - \frac{\epsilon}{2} \bigg\} \cap \bigg\{ n_{c_\ell r_\ell} \ge \frac{t}{k} \bigg\} \Bigg) \le \exp\Big( -2\dfrac{t}{k}\bigg(\frac{\epsilon}{2}\bigg)^2 \Big) = \frac{\delta}{2n},
\end{align*}
where the first inequality holds as $p_{c_\ell r_\ell} \ge \frac{1}{2} + \epsilon$, and the second one by applying Lemma \ref{lem:pl_simulator} with $\eta = \frac{\epsilon}{2}$ and $v = \frac{t}{k}$. The proof follows combining the above two cases.
\end{proof}

Given Algorithm \ref{alg:trace_bb} satisfies Lemma \ref{lem:c_vs_r}, and taking union bound over $(k-1)$ elements in $\cA_\ell \setminus\{r_\ell\}$, we get that with probability at least $(1-\frac{(k-1)\delta}{2n})$,

\begin{align}
\label{eq:r_vs_c}
p_{r_{\ell+1}r_\ell} \ge \frac{1}{2} \text{ and, } p_{r_{\ell+1}c_\ell} \ge \frac{1}{2} - \epsilon.
\end{align} 

Above suggests that for each iteration $\ell$, the estimated `best' item $r_\ell$ only gets improved as $p_{r_{\ell+1}r_\ell} \ge \frac{1}{2}$. Let, $\ell_*$ denotes the specific iteration such that $1 \in \cA_\ell$ for the first time, i.e. $\ell_* = \min\{ \ell \mid 1 \in \cA_\ell \}$. Clearly $\ell_* \le \lceil \frac{n}{k-1} \rceil$. 
Now \eqref{eq:r_vs_c} suggests that with probability at least $(1-\frac{(k-1)\delta}{2n})$, $p_{r_{\ell_*+1}1} \ge \frac{1}{2} - \epsilon$. Moreover \eqref{eq:r_vs_c} also suggests that for all $\ell > \ell_*$, with probability at least $(1-\frac{(k-1)\delta}{2n})$, $p_{r_{\ell+1}r_\ell} \ge \frac{1}{2}$, which implies for all $\ell > \ell_*$, $p_{r_{\ell+1}1} \ge \frac{1}{2} - \epsilon$ as well -- This holds due to the following transitivity property of the Plackett-Luce model: For any three items $i_1,i_2,i_3 \in [n]$, if $p_{i_1i_2}\ge \frac{1}{2}$ and $p_{i_2i_3}\ge \frac{1}{2}$, then we have $p_{i_1i_3}\ge \frac{1}{2}$ as well. 

This argument finally leads to $p_{r_*1} \ge \frac{1}{2} - \epsilon$. Since failure probability at each iteration $\ell$ is at most $\frac{(k-1)\delta}{2n}$, and Algorithm \ref{alg:trace_bb} runs for maximum $\lceil \frac{n}{k-1} \rceil$ many number of iterations, using union bound over $\ell$, the total failure probability of the algorithm is at most $\lceil \frac{n}{k-1} \rceil \frac{(k-1)\delta}{2n} \le (\frac{n}{k-1}+1)\frac{(k-1)\delta}{2n} = \delta\Big( \frac{n+k-1}{2n} \Big) \le \delta$ (since $k \le n$). This concludes the correctness of the algorithm showing that it indeed satisfies the $(\epsilon,\delta)$-PAC objective.%{\color{red}{CHECK: should we start with $\frac{\delta}{2}$ instead??}}. 
\end{proof}

\vspace*{-20pt}

\setcounter{algorithm}{1}

\begin{center}
\begin{algorithm}[H]
   \caption{\textbf{\algdiv}}
   \label{alg:div_bb}
\begin{algorithmic}[1]
   \STATE {\bfseries Input:} 
   \STATE ~~~ Set of items: $[n]$, Subset size: $n \geq k > 1$
   \STATE ~~~ Error bias: $\epsilon >0$, Confidence parameter: $\delta >0$
   \STATE {\bfseries Initialize:} 
   \STATE ~~~ $S \leftarrow [n]$, $\epsilon_0 \leftarrow \frac{\epsilon}{8}$, and $\delta_0 \leftarrow \frac{\delta}{2}$  
   \STATE ~~~ Divide $S$ into $G: = \lceil \frac{n}{k} \rceil$ sets $\cG_1, \cG_2, \ldots, \cG_G$ such that $\cup_{j = 1}^{G}\cG_j = S$ and $\cG_{j} \cap \cG_{j'} = \emptyset, ~\forall j,j' \in [G]$, where $|G_j| = k,\, \forall j \in [G-1]$
   \STATE ~~~ \textbf{If} $|\cG_{G}| < k$, \textbf{then} set $\cR_1 \leftarrow \cG_G$  and $G = G-1$
   %\REPEAT  
   \WHILE{$\ell = 1,2, \ldots$}
   \STATE Set $S \leftarrow \emptyset$, $\delta_\ell \leftarrow \frac{\delta_{\ell-1}}{2}, \epsilon_\ell \leftarrow \frac{3}{4}\epsilon_{\ell-1}$
   \FOR {$g = 1,2, \ldots, G$}
   \STATE Play the set $\cG_g$ for $t:= \frac{k}{2\epsilon_\ell^2}\ln \frac{k}{\delta_\ell}$ rounds
   \STATE $w_i \leftarrow$ Number of times $i$ won in $t$ plays of $\cG_g$, $\forall i \in \cG_g$
	%\STATE $i_{g} \leftarrow ~Median(\{w_{\cA}(i) \mid i \in \cG_g\})$
   %\STATE $S \leftarrow S \cup \{i \in \cG_g \mid w_{g}(i) \ge w_{g}(i_g)\}$
   \STATE Set $c_g \leftarrow \underset{i \in \cA}{{\arg\max}}~w_i$ and $S \leftarrow S \cup \{c_g\}$
   \ENDFOR
   \STATE $S \leftarrow S \cup \cR_\ell$
      \IF{$(|S| == 1)$}
   \STATE Break (go out of the while loop)
   \ELSIF{$|S|\le k$}
   \STATE $S' \leftarrow $ Randomly sample $k-|S|$ items from $[n] \setminus S$, and $S \leftarrow S \cup S'$, $\epsilon_\ell \leftarrow \frac{2\epsilon}{3}$, $\delta_\ell \leftarrow {\delta}$  
   \ELSE
   \STATE Divide $S$ into $G: = \lceil \frac{|S|}{k} \rceil$ sets $\cG_1, \cG_2, \ldots, \cG_G$, such that $\cup_{j = 1}^{G}\cG_j = S$, and $\cG_{j} \cap \cG_{j'} = \emptyset, ~\forall j,j' \in [G]$, where $|G_j| = k,\, \forall j \in [G-1]$
   \STATE \textbf{If} $|\cG_{G}| < k$, \textbf{then} set $\cR_{\ell+1} \leftarrow \cG_G$  and $G = G-1$
   \ENDIF
   \ENDWHILE
%   \UNTIL{$(S == \emptyset)$}
   \STATE {\bfseries Output:} $r_*$ as the $\epsilon$-optimal item, where $S = \{r^*\}$
\end{algorithmic}
\end{algorithm}
\vspace{-2pt}
\end{center}

\vspace{-15pt}

\subsection{Proof of Theorem \ref{thm:div_bb}}
\label{app:wiub_div}

\ubdiv*

\begin{proof}
For the notational convenience we will use $\tp_{ij} = p_{ij} - \frac{1}{2}, \, \forall i,j \in [n]$. We start by proving the following lemma which would be used crucially in the analysis:

\begin{lem}
\label{lem:pl_sst}
For any three items $a,b,c \in [n]$ such that $\theta_a > \theta_b > \theta_c$. If $\tp_{ba} > -\epsilon_1$, and $\tp_{cb} > -\epsilon_2$, where $\epsilon_1,\epsilon_2 > 0$, and $(\epsilon_1+\epsilon_2) < \frac{1}{2}$, then $\tp_{ca} > -(\epsilon_1+\epsilon_2)$.
\end{lem}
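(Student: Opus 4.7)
\textbf{Proof plan for Lemma \ref{lem:pl_sst}.} The plan is to translate the pairwise preference bounds into multiplicative bounds on the ratios of Plackett--Luce score parameters, chain them, and then translate the resulting ratio bound back into a preference bound on $\tp_{ca}$. Since $p_{ij} = \theta_i/(\theta_i+\theta_j)$, the hypothesis $\tp_{ba} > -\epsilon_1$ is equivalent to $\theta_b/\theta_a > (1-2\epsilon_1)/(1+2\epsilon_1)$, and similarly $\tp_{cb} > -\epsilon_2$ is equivalent to $\theta_c/\theta_b > (1-2\epsilon_2)/(1+2\epsilon_2)$. Multiplying these gives
\[
\frac{\theta_c}{\theta_a} \;>\; \frac{(1-2\epsilon_1)(1-2\epsilon_2)}{(1+2\epsilon_1)(1+2\epsilon_2)},
\]
and the desired conclusion $\tp_{ca} > -(\epsilon_1+\epsilon_2)$ is, after the same rearrangement, equivalent to $\theta_c/\theta_a > (1-2(\epsilon_1+\epsilon_2))/(1+2(\epsilon_1+\epsilon_2))$.

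The main (and only nontrivial) step is therefore the elementary inequality, which I would state and prove as an isolated claim: for all $a,b \ge 0$ with $a+b < 1$,
\[
\frac{(1-a)(1-b)}{(1+a)(1+b)} \;\ge\; \frac{1-(a+b)}{1+(a+b)}.
\]
Since both denominators are positive under $a+b<1$, this reduces by cross-multiplication to $(1-a)(1-b)(1+a+b) \ge (1-a-b)(1+a)(1+b)$. Expanding both sides using $(1-a)(1-b) = 1-(a+b)+ab$ and $(1+a)(1+b) = 1+(a+b)+ab$, and pairing the terms $(1\mp(a+b))(1\pm(a+b)) = 1-(a+b)^2$ on each side, the difference collapses to $2ab(a+b) \ge 0$, which is immediate. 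Applying this with $a=2\epsilon_1$ and $b=2\epsilon_2$ (so that $a+b=2(\epsilon_1+\epsilon_2) < 1$ by hypothesis) yields the required ratio bound, and hence $\tp_{ca} > -(\epsilon_1+\epsilon_2)$, completing the proof.

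I do not expect any genuine obstacle here; the only mild care needed is to ensure the denominators stay positive so that cross-multiplication preserves the inequality, which is precisely what the hypothesis $\epsilon_1+\epsilon_2 < 1/2$ guarantees. Note that the ordering assumption $\theta_a > \theta_b > \theta_c$ is not actually used in the algebra (it only guarantees that the $\epsilon_i$ are meaningful slack quantities rather than vacuous); the inequality $\tp_{ca} > -(\epsilon_1+\epsilon_2)$ follows from the two hypotheses alone.
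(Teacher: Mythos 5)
Your proposal is correct and follows essentially the same route as the paper: convert each hypothesis to a ratio bound $\theta_b/\theta_a > (1-2\epsilon_1)/(1+2\epsilon_1)$ (and likewise for $c,b$), multiply, and check that the product of fractions dominates $(1-2(\epsilon_1+\epsilon_2))/(1+2(\epsilon_1+\epsilon_2))$. The only cosmetic difference is that you verify the key fraction inequality by a clean cross-multiplication reducing to $2ab(a+b)\ge 0$, whereas the paper absorbs the $4\epsilon_1\epsilon_2$ cross-term directly; both are valid.
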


\begin{proof}
Note that $\tp_{ba} > -\epsilon_1 \implies \frac{\theta_b - \theta_a}{2(\theta_b+\theta_a)} > -\epsilon_1 \implies \frac{\theta_b}{\theta_a} > \frac{(1-2\epsilon_1)}{(1+2\epsilon_1)}$. 

Similarly we have $\tp_{cb} > -\epsilon_2 \implies \frac{\theta_c}{\theta_b} > \frac{(1-2\epsilon_2)}{(1+2\epsilon_2)}$. Combining above we get 

\begin{align*}
\frac{\theta_c}{\theta_a} > \frac{(1-2\epsilon_1)}{(1+2\epsilon_1)} \frac{(1-2\epsilon_2)}{(1+2\epsilon_2)} >  & \frac{1-2(\epsilon_1+\epsilon_2) + \epsilon_1\epsilon_2}{1+2(\epsilon_1+\epsilon_2) + \epsilon_1\epsilon_2} > \frac{1-2(\epsilon_1+\epsilon_2)}{1+2(\epsilon_1+\epsilon_2)}, ~\bigg[\text{since,} (\epsilon_1+\epsilon_2) < \frac{1}{2}  \bigg]\\
& \implies \tp_{ca} = \frac{\theta_c - \theta_a}{2(\theta_c+\theta_a)} > -(\epsilon_1+\epsilon_2),
\end{align*}
which concludes the proof.
\end{proof}

We now analyze the required sample complexity of \algdiv. %Clearly $\epsilon_\ell$ and $\delta$
For clarity of notations we will denote the set $S$ at iteration $\ell$ by $S_\ell$.
Note that at any iteration $\ell$, any set $\cG_g$ is played for exactly $t= \frac{k}{2\epsilon_\ell^2}\ln \frac{k}{\delta_\ell}$ many number of rounds. Also since the algorithm discards away exactly $k-1$ items from each set $\cG_g$, hence the maximum number of iterations possible is $\lceil  \ln_k n \rceil$. Now at any iteration $\ell$, since $G = \Big\lfloor \frac{|S_\ell|}{k} \Big\rfloor < \frac{|S_\ell|}{k}$, the total sample complexity for iteration $\ell$ is at most $\frac{|S_\ell|}{k}t \le \frac{n}{2k^{\ell-1}\epsilon_\ell^2}\ln \frac{k}{\delta_\ell}$, as $|S_\ell| \le \frac{n}{k^\ell}$ for all $\ell \in [\lfloor  \ln_k n \rfloor]$. Also note that for all but last iteration $\ell \in [\lfloor  \ln_k n \rfloor]$, $\epsilon_\ell = \frac{\epsilon}{8}\bigg( \frac{3}{4} \bigg)^{\ell-1}$, and $\delta_\ell = \frac{\delta}{2^{\ell+1}}$.
Moreover for the last iteration $\ell = \lceil  \ln_k n \rceil$, the sample complexity is clearly $t= \frac{2k}{\epsilon^2}\ln \frac{2k}{\delta}$, as in this case $\epsilon_\ell = \frac{\epsilon}{2}$, and $\delta_\ell = \frac{\delta}{2}$, and $|S| = k$.
Thus the total sample complexity of Algorithm \ref{alg:div_bb} is given by 

\begin{align*}
\sum_{\ell = 1}^{\lceil  \ln_k n \rceil}\frac{|S_\ell|}{2\epsilon_\ell^2}\ln \frac{k}{\delta_\ell}  
& \le \sum_{\ell = 1}^{\infty}\frac{n}{2k^\ell\bigg(\frac{\epsilon}{8}\big(\frac{3}{4}\big)^{\ell-1}\bigg)^2}k\ln \frac{k 2^{\ell+1}}{\delta} + \frac{2k}{\epsilon^2}\ln \frac{2k}{\delta}\\ 
& \le \frac{64n}{2\epsilon^2}\sum_{\ell = 1}^{\infty}\frac{16^{\ell-1}}{(9k)^{\ell-1}}\Big( \ln \frac{k}{\delta} + {(\ell+1)} \Big) + \frac{2k}{\epsilon^2}\ln \frac{2k}{\delta} \\
& \le \frac{32n}{\epsilon^2}\ln \frac{k}{\delta}\sum_{\ell = 1}^{\infty}\frac{4^{\ell-1}}{(9k)^{\ell-1}}\Big( {3\ell} \Big) + \frac{2k}{\epsilon^2}\ln \frac{2k}{\delta} = O\bigg(\frac{n}{\epsilon^2}\ln \frac{k}{\delta}\bigg) ~[\text{for any } k > 1].\end{align*}
%{\color{red} Need to adjust $\epsilon$ a bit carefully for $k < 4$}

Above proves the sample complexity bound of Theorem \ref{thm:div_bb}. We next prove the $(\epsilon,\delta)$-{PAC} property of \algdiv. 
%As argued before, the `while loop' (Line $7$-$22$) of Algorithm \ref{alg:trace_bb} can run for maximum $***\lceil \frac{n}{k-1} \rceil***$ many nu mber of iterations, say $\ell = 1,2, \ldots, \lceil \frac{n}{k-1} \rceil $, and let us denote the corresponding set $\cA$. 
The crucial observation lies in the fact that, at any iteration $\ell$, for any set $\cG_g$ ($g = 1,2,\ldots, G$), the item $c_g$ retained by the algorithm is likely to be not more than $\epsilon_\ell$-worse than the best item (the one with maximum score parameter $\theta$) of the set $\cG_g$, with probability at least $(1-\delta_\ell)$. More precisely, we claim the following:

\lemdiv*

\begin{proof}
Let us define $\hp_{ij} = \frac{w_i}{w_i + w_j}, \, \forall i,j \in \cG_g, i \neq j$. Then clearly $\hp_{c_gi_g} \ge \frac{1}{2}$, as 
$c_g$ is the empirical winner in $t$ rounds, i.e. $c_g \leftarrow \underset{i \in \cG_g}{{\arg\max}}~w_i$. Moreover $c_g$ being the empirical winner of $\cG_g$ we also have $w_{c_g} \ge \frac{t}{k}$, and thus $w_{c_\ell} + w_{r_\ell} \ge \frac{t}{k}$ as well. Let $n_{ij}:= w_i + w_j$ denotes the number of pairwise comparisons of item $i$ and $j$ in $t$ rounds, $i,j \in \cG_g$. Clearly $0 \le n_{ij} \le t$. Then let us analyze the probability of a `bad event' where $c_g$ is indeed such that $p_{c_gi_g} < \frac{1}{2} - \epsilon_\ell$ but we have $c_g$ beating $i_g$ empirically:

\begin{align*}
& Pr\Bigg( \bigg\{ \hp_{c_gi_g} \ge \frac{1}{2} \bigg\} \Bigg)\\
& = Pr\Bigg( \bigg\{ \hp_{c_gi_g} \ge \frac{1}{2} \bigg\} \hspace*{-2pt} \cap \hspace*{-2pt} \bigg\{ n_{c_gi_g} \ge \frac{t}{k} \bigg\} \Bigg) \hspace*{-3pt}+ \hspace*{-3pt}\cancelto{0}{Pr\Bigg(\bigg\{ n_{c_gi_g} < \frac{t}{k} \bigg\}\Bigg)}Pr\Bigg( \bigg\{ \hp_{c_gi_g} \ge \frac{1}{2} \bigg\} \Big | \bigg\{ n_{c_gi_g} \hspace*{-2pt}<\hspace*{-2pt} \frac{t}{k} \bigg\}\Bigg)\\
& =  Pr\Bigg( \bigg\{ \hp_{c_gi_g} - \epsilon_\ell \ge \frac{1}{2} - \epsilon_\ell \bigg\} \cap \bigg\{ n_{c_gi_g} \ge \frac{t}{k} \bigg\} \Bigg)\\
& \le Pr\Bigg( \bigg\{ \hp_{c_gi_g} - p_{c_g i_g} \ge {\epsilon_\ell} \bigg\} \cap \bigg\{ n_{c_gi_g} \ge \frac{t}{k} \bigg\} \Bigg)\\
& \le \exp\Big( -2\dfrac{t}{k}\big({\epsilon_\ell}\big)^2 \Big) = \frac{\delta_\ell}{k}.
\end{align*}

where the first inequality holds as $p_{c_gi_g} < \frac{1}{2} - \epsilon_\ell$, and the second inequality is by applying Lemma \ref{lem:pl_simulator} with $\eta = \epsilon_\ell$ and $v = \frac{t}{k}$.
Now taking union bound over all $\epsilon_\ell$-suboptimal elements $i'$ of $\cG_g$ (i.e. $p_{i'i_g} < \frac{1}{2} - \epsilon_\ell$), we get: 
\[Pr\Bigg(\bigg\{\exists i' \in \cG_g \mid p_{i'i_g} \hspace*{-2pt} < \hspace*{-2pt} \frac{1}{2} - \epsilon_\ell, \text{and } c_g = i' \bigg\}\Bigg) \hspace*{-1pt}\le\hspace*{-1pt} \frac{\delta_\ell}{k}\Bigg \vert\bigg\{\exists i' \in \cG_g \mid p_{i'i_g} \hspace*{-2pt} < \hspace*{-2pt} \frac{1}{2} - \epsilon_\ell, \text{and } c_g = i' \bigg\}\Bigg\vert \le \delta_\ell, 
\]
as $|\cG_g| = k$, and the claim follows henceforth.
\end{proof}

\begin{rem}
For the last iteration $\ell = \lceil \ln_k n \rceil$, since $\epsilon_\ell = \frac{\epsilon}{2}$, and $\delta_\ell = \frac{\delta}{2}$, applying Lemma \ref{lem:div_bb} on $S$, we get that $Pr\Bigg( p_{r_*i_g} < \frac{1}{2} - \frac{\epsilon}{2} \Bigg) \leq \frac{\delta}{2}$.  
\end{rem}

Now for each iteration $\ell$, let us define $g_\ell \in [G]$ to be the set that contains \textit{best item} of the entire set $S$, i.e. $\arg\max_{i \in S}\theta_i \in \cG_{g_\ell}$. Then applying Lemma \ref{lem:div_bb}, with probability at least $(1-\delta_\ell)$,\, $\tp_{c_{g_\ell}i_{g_\ell}} > -\epsilon_\ell$. Then, for each iteration $\ell$, applying Lemma \ref{lem:pl_sst} and Lemma \ref{lem:div_bb} to $\cG_{g_\ell}$, we finally get $\tp_{r_*1} > -\Big( \frac{\epsilon}{8} + \frac{\epsilon}{8}\Big(\frac{3}{4}\Big) + \cdots + \frac{\epsilon}{8}\big(\frac{3}{4}\big)^{\lfloor \ln_k n \rfloor} \Big) + \frac{\epsilon}{2} \ge -\frac{\epsilon}{8}\Big( \sum_{i = 0}^{\infty}\big( \frac{3}{4} \big)^{i} \Big) + \frac{\epsilon}{2} = \epsilon$.
(Note that, for above analysis to go through, it is in fact sufficient to consider only the set of iterations $\{\ell \ge \ell_0 \mid \ell_0 = \min\{l \mid 1 \notin \cR_l, \, l \ge 1\} \}$ because prior considering item $1$, it does not matter even if the algorithm mistakes in any of the iteration $\ell < \ell_0$). Thus assuming the algorithm does not fail in any of the iteration $\ell$, we finally have that $p_{r_*1} > \frac{1}{2} - \epsilon$.

Finally since at each iteration $\ell$, the algorithm fails with probability at most $\delta_\ell$, the total failure probability of the algorithm is at most $\Big( \frac{\delta}{4} + \frac{\delta}{8} + \cdots + \frac{\delta}{2^{\lceil \ln_k n  \rceil}} \Big) + \frac{\delta}{2} \le \delta$.
This concludes the correctness of the algorithm showing that it indeed satisfies the $(\epsilon,\delta)$-PAC objective.
\end{proof}

\subsection{Proof of Theorem \ref{thm:half_bb}}
\label{app:wiub_med}

\ubmed*

\begin{proof}
For clarity of notation, we will denote the set of remaining items $S$ at iteration $\ell$ by $S_\ell$.
We start by observing that at each iteration $\ell = 1,2,\ldots$, the size of the set of remaining items $S_{\ell+1}$ gets halved compared to that of the previous iteration $S_{\ell}$, since the algorithm discards away all the elements below the \emph{median item} $h_g$, as follows from the definition of median. This implies that the maximum number of iterations possible is $\ell = \lceil \ln n \rceil$, after which $|S| = 1$ and the algorithm returns $r_*$. 

We first analyze the sample complexity of the algorithm. Clearly each iteration $\ell$ uses a sample complexity of $t = \frac{k}{2\epsilon_\ell^2}\ln \frac{1}{\delta_\ell}$, and as argued before $\ell$ can be at most $\lceil \ln n \rceil$ which makes the total sample complexity of the algorithm:

\begin{align*}
\sum_{\ell = 1}^{\lceil  \ln n \rceil}\frac{|S_\ell|}{2\epsilon_\ell^2}\ln \frac{1}{\delta_\ell}  
& \le \sum_{\ell = 1}^{\infty}\frac{n}{2k^\ell\bigg(\frac{\epsilon}{4}\big(\frac{3}{4}\big)^{\ell-1}\bigg)^2}k\ln 4\Big( \frac{ 2^{\ell}}{\delta}\Big)
\le \frac{16n}{2\epsilon^2}\sum_{\ell = 1}^{\infty}\frac{16^{\ell-1}}{(9k)^{\ell-1}}\Big( \ln \frac{4}{\delta} + \ell \Big)\\
& \le \frac{8n}{\epsilon^2}\ln \frac{4}{\delta}\sum_{\ell = 1}^{\infty}\frac{16^{\ell-1}}{9k^{\ell-1}}\Big( {2\ell} \Big)= O\bigg(\frac{n}{\epsilon^2}\ln \frac{1}{\delta}\bigg) ~[\text{for any} k > 1].
\end{align*}

This ensures the sample complexity of Theorem \ref{thm:half_bb} holds good.

We are now only left with verifying the $(\epsilon,\delta)$-{PAC} property of the algorithm where lies the main difference of the analysis of \algmed\, from \algdiv. Consider any iteration $\ell \in [\lceil \ln n \rceil]$. The crucial observation is that, with high probability of at least $\big( 1- \delta_\ell \big)$ for any such $\ell$, and any set $\cG_g$ ($g = 1,2,\ldots, G$), some $\epsilon_\ell$-approximation of the \textit{`best-item'} (the one with the highest score parameter $\theta_i$) of $\cG_g$ must lie above the median in terms of the empirical win count $w_i$, and hence must be retained by the algorithm till the next iteration $\ell+1$. We prove this formally below. 

Our first claim starts by showing that for any set $\cG_g$, the empirical win count estimate $w_i$ of the best item $i_g := \arg \max_{i \in \cG_g}\theta_i$ (i.e. the one with highest score parameter $\theta_i$) can not be too small, as shown in Lemma \ref{lem:hv_n1}: 

\begin{lem}
\label{lem:hv_n1} 
Consider any particular set $\cG_g$ at any iteration $\ell \in \lceil  \ln n \rceil$. If $i_g := \arg \max_{i \in \cG_g}\theta_i$, then with probability at least $\Big(1-\frac{\delta_\ell}{4}\Big)$, the empirical win count $w_{i_g} > (1-\eta)\frac{t}{k}$, for any $\eta \in \big(\frac{3}{16},1 \big]$.
\end{lem}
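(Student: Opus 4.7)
The plan is a direct application of a multiplicative Chernoff lower-tail bound, exploiting the fact that the Plackett-Luce probability of $i_g$ winning inside $\cG_g$ is at least $1/k$. First, I would note that since at each of the $t$ rounds that $\cG_g$ is played the winner is drawn independently from the Plackett-Luce distribution over $\cG_g$, the count $w_{i_g}$ is a sum of $t$ i.i.d.\ Bernoulli indicators with common success probability $p_{i_g}^\star := \theta_{i_g}/\sum_{j \in \cG_g}\theta_j$. Because $i_g = \arg\max_{j \in \cG_g}\theta_j$ and $|\cG_g| = k$, we have $p_{i_g}^\star \ge 1/k$, and hence $\mu := \E[w_{i_g}] = t\,p_{i_g}^\star \ge t/k$.

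Next, I would rewrite the deviation event to match a standard Chernoff form. Setting $\gamma := 1 - (1-\eta)/(k\,p_{i_g}^\star)$, one has $\gamma \ge \eta$ (since $k\,p_{i_g}^\star \ge 1$) and $(1-\eta)\,t/k = (1-\gamma)\,\mu$. The multiplicative Chernoff lower-tail bound for sums of independent Bernoullis then gives
\[
Pr\!\big(w_{i_g} \le (1-\eta)\,t/k\big) = Pr\!\big(w_{i_g} \le (1-\gamma)\mu\big) \le \exp\!\big(-\gamma^2 \mu/2\big) \le \exp\!\big(-\eta^2\, t/(2k)\big).
\]
Substituting the algorithm's choice $t = (k/(2\epsilon_\ell^2))\ln(4/\delta_\ell)$ collapses the right-hand side to $(\delta_\ell/4)^{\eta^2/(4\epsilon_\ell^2)}$, which is at most $\delta_\ell/4$ whenever $\eta \ge 2\epsilon_\ell$.

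Finally, I would close the argument by checking that the hypothesis $\eta > 3/16$ is strong enough for every iteration $\ell \ge 1$ of the outer loop. Since $\epsilon \le 1/2$ (from the problem setup) and $\epsilon_0 = \epsilon/4$, the update $\epsilon_\ell = (3/4)\,\epsilon_{\ell-1}$ yields $\epsilon_\ell \le \epsilon_1 = (3/4)(\epsilon/4) \le 3/32$ for all $\ell \ge 1$; hence $2\epsilon_\ell \le 3/16 < \eta$, and the bound $(\delta_\ell/4)^{\eta^2/(4\epsilon_\ell^2)} \le \delta_\ell/4$ follows. There is no real obstacle in this proof: once one notes the Bernoulli structure of the per-round plays and the pigeonhole bound $p_{i_g}^\star \ge 1/k$, everything reduces to a one-line invocation of Chernoff, and the only bookkeeping is tying the numerical constant $3/16$ in the statement back to the initialization $\epsilon_0 = \epsilon/4$ together with $\epsilon \le 1/2$.
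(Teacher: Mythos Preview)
Your proposal is correct and follows essentially the same approach as the paper: observe that $w_{i_g}$ is a sum of $t$ i.i.d.\ Bernoulli indicators with success probability at least $1/k$ (since $i_g$ maximizes $\theta$ over $\cG_g$), apply the multiplicative Chernoff lower-tail bound, and then verify numerically that $\eta > 3/16$ together with $\epsilon_\ell \le 3/32$ forces the exponent to be at least $1$. Your reparameterization via $\gamma$ and your bookkeeping of the exponent $\eta^2/(4\epsilon_\ell^2)$ are in fact slightly more careful than the paper's own computation, but the argument is the same in substance.
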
 

\begin{proof}
The proof follows from an straightforward application of Chernoff-Hoeffding's inequality \cite{CI_book}. 
Note that the algorithm plays each set $\cG_g$ for $t = \frac{k}{2\epsilon_\ell^2}\ln \frac{1}{\delta_\ell}$ number of times. Fix any iteration $\ell$ and a set $\cG_g$, $g \in 1,2, \ldots, G$. Suppose $i_\tau$ denotes the winner of $\tau$-{th} play of $\cG_g$, $\tau \in [t]$. Then clearly, for any item $i \in \cG_g$, $w_i = \sum_{\tau = 1}^{t}\1(i_\tau == i)$, where $\1(i_\tau == i)$ is a Bernoulli random variable with parameter $\frac{\theta_i}{\sum_{j \in \cG_g}\theta_j}$, by definition of {WI} feedback model. 
Also for $i = i_g$, we have $Pr(\{i_\tau = i_g\}) = \frac{\theta_{i_g}}{\sum_{j \in \cG_g}\theta_j} \ge \frac{1}{k}, \, \forall \tau \in [t]$, as follows from the definition $i_g := \arg \max_{i \in \cG_g}\theta_i$. Hence $\E[w_{i_g}] = \sum_{\tau = 1}^{t}\E[\1(i_\tau == i)] \ge \frac{t}{k}$. 
Now applying multiplicative Chernoff-Hoeffdings bound for $w_{i_g}$, we get that for any $\eta \in (\frac{3}{16},1]$, 

\begin{align*}
Pr\Big( w_{i_g} \le (1-\eta)\E[w_{i_g}] \Big) & \le \exp\bigg(- \frac{\E[w_{i_g}]\eta^2}{2}\bigg) \le \exp\bigg(- \frac{t\eta^2}{2k}\bigg) \\
& \le \exp\bigg(- \frac{\eta^2}{\epsilon_\ell^2} \ln \bigg( \frac{4}{\delta_\ell} \bigg) \bigg) \le \exp\bigg(- \ln \bigg( \frac{4}{\delta_\ell} \bigg) \bigg) = \frac{\delta_\ell}{4},
\end{align*}
where the second last inequality holds as $\eta > \frac{3}{16}$ and $\epsilon_\ell \le \frac{3}{16}$, for any iteration $\ell \in \lceil \ln n \rceil$; in other words for any $\eta \ge \frac{1}{4}$, we have $\frac{\eta}{\epsilon_\ell} > 1$ which leads to the second last inequality, and the proof follows henceforth.
\end{proof}

In particular, fixing $\eta = \frac{1}{2}$ in Lemma \ref{lem:hv_n1}, we get that with probability at least $\big(1-\frac{\delta_\ell}{4}\big)$,  $w_{i_g} > (1-\frac{1}{2})\E[w_{i_g}] > \frac{t}{2k}$. We now prove that for any set $\cG_g$, given its best item $i_g$ is selected as the winner for at least $\frac{t}{2k}$ times out of $t$ plays of $\cG_g$, the empirical estimate of $p_{i_gb}$, defined as $\hp_{i_gb} = \frac{w_{i_g}}{w_{i_g} + w_b}$, for any suboptimal element $b \in \cG_g$ (such that $p_{i_gb} > \frac{1}{2} + \epsilon$) can not be too misleading where empirical win count of $b$ exceeds that of $i_g$, i.e. $w_b > w_{i_g}$. The formal claim is as follows:

\lemmed*

\begin{proof}
First note since $w_{i_g} \ge \frac{t}{2k}$, this implies $w_{i_g} + w_{b} \ge \frac{t}{2k}$ as well. Let us define  $n_{ij} = w_i + w_j$ to be the number of pairwise comparisons of item $i$ and $j$ in $t$ rounds, for any $i,j \in \cG_g$, and $\hp_{ij} = \frac{w_i}{w_i + w_j}$ to be the empirical estimate of pairwise probability of item $i$ and $j$. Then,

\begin{align*}
Pr\Bigg( \bigg\{ w_b \ge w_{i_g} \bigg\} \cap \bigg\{ n_{i_gb} \ge \frac{t}{2k} \bigg\} \Bigg) & = Pr\Bigg( \bigg\{ \hp_{b i_g} \ge \frac{1}{2} \bigg\} \cap \bigg\{ n_{i_g b} \ge \frac{t}{2k} \bigg\} \Bigg)\\
& = Pr\Bigg( \bigg\{ \hp_{b i_g} - \epsilon_\ell \ge \frac{1}{2} - \epsilon_\ell \bigg\} \cap \bigg\{ n_{i_g b} \ge \frac{t}{2k} \bigg\} \Bigg)\\
& \le Pr\Bigg( \bigg\{ \hp_{b i_g} - p_{b i_g} \ge \epsilon_\ell \bigg\} \cap \bigg\{ n_{i_g b} \ge \frac{t}{2k} \bigg\} \Bigg)\\
& \le \exp\Big( -2\dfrac{t}{2k}\big({\epsilon_\ell}\big)^2 \Big) \le \frac{\delta_\ell}{4},
\end{align*}
where the second last inequality holds since $p_{b i_g} < \frac{1}{2} - \epsilon_\ell$. The last inequality follows by applying Lemma \ref{lem:pl_simulator} with $\eta = \epsilon_\ell$ and $v = \frac{t}{2k}$.

Using the results from  Lemma \ref{lem:hv_n1} we further get that for any such suboptimal element $b \in \cC_g$ with $p_{b i_g} < \frac{1}{2} - \epsilon_\ell$,

\begin{align*}
& Pr\big(\big\{ w_b \ge w_{i_g} \big\} \big) \\
& = Pr\Bigg( \bigg\{ w_b \ge w_{i_g} \bigg\} \cap \bigg\{ w_{i_g} < \frac{t}{2k} \bigg\} \Bigg) + Pr\Bigg( \bigg\{ w_b \ge w_{i_g} \bigg\} \cap \bigg\{ w_{i_g} \ge \frac{t}{2k} \bigg\} \Bigg)\\
& \le Pr\big(\big\{ w_{i_g} < \frac{t}{2k} \big\} \big) + Pr\Bigg( \bigg\{ w_b \ge w_{i_g} \bigg\} \cap \bigg\{ n_{i_gb} \ge \frac{t}{2k} \bigg\} \Bigg)\\
& \le \frac{\delta_\ell}{4} + \frac{\delta_\ell}{4} ~~\Big[ \text{Applying Lemma \ref{lem:hv_n1} with } \eta = \frac{1}{2}\Big] \le \frac{\delta_\ell}{2}
\end{align*}
\end{proof}

Now for any particular $\cG_g$, and for all suboptimal element $b \in \cG_g$, let us define an indicator random variable $F_b: = \1(w_b > w_{i_g})$. Note that by above claim we have $\E[F_b] = Pr(F_b)  = Pr(w_b > w_{i_g}) \le \frac{\delta_\ell}{2} $. Moreover if $\cB = \{b \in \cG_g \mid p_{b i_g} < \frac{1}{2} - \epsilon\}$ denotes the set of all $\epsilon_\ell$-suboptimal elements of $\cG_g$ (with respect to the best item $i_g$ of $\cG_g$), then clearly $|\cB| < |\cG_g|$, and thus we have $\E[\sum_{b \in \cB}F_b ] \le |\cG_g|\frac{\delta_\ell}{2}$. Now using Markov's inequality \cite{CI_book} we get: 

\begin{align*}
%Given \bigg | w_{i_g} \ge \frac{t}{2k} required??
Pr\bigg[ \sum_{b \in \cB}F_b  \ge \frac{|\cG_g|}{2}  \bigg] \le \frac{\E[\sum_{b \in \cB}F_b ]}{|\cG_g|/2} \le \frac{|\cG_g| \delta_\ell/2}{|\cG_g|/2} = \delta_\ell.
\end{align*}
1
Above immediately implies that at any iteration $\ell$, and for any set $\cG_g$ in $\ell$, more than $\frac{|\cG_g|}{2}$ of the suboptimal elements of $\cG_g$ can not beat the best item $i_g$ in terms of empirical win count $w_i$. Thus there has to at least one non-suboptimal element $i' \in \cG_g$ ($i'$ could be $i_g$ itself), i.e. $p_{i_gi'} < \frac{1}{2} - \epsilon$, and $i'$ beats the median item $h_g$ with $w_{i'} \ge w_{h_g}$. Hence $i'$ would be retained by the algorithm in set $S$ till the next iteration $\ell+1$.

The above argument precisely shows that the best item of  the set $S$ at the beginning of iteration $\ell+1$, can not be $\epsilon_\ell$ worse than that of iteration $\ell$, for any $\ell = [\lceil \ln n \rceil]$. More formally,
if $i_\ell$ and $i_{\ell+1}$ respectively denote the best item of set $S$ at the beginning of iteration $\ell$ and $\ell+1$ respectively, i.e. $i_{\ell} := \arg\max_{i \in S_\ell}\theta_i$, and $i_{\ell+1} := \arg\max_{i \in S_{\ell+1}}\theta_i$,
then by Lemma \ref{lem:hv_pij}, with probability at least $(1-\delta_\ell)$, $p_{i_{\ell+1}i_\ell} > \frac{1}{2} -\epsilon_\ell$. Note that, at the beginning $i_{1} = 1$, which is the true best item (condorcet winner) $i^* = 1$ of $[n]$, as defined in Section \ref{sec:prb_setup}. Now applying Lemma \ref{lem:div_bb} and \ref{lem:pl_sst} for each iteration $\ell$, we get that the final item $r_*$ returned by the algorithm would satisfy $\tp_{r_*1} > -\Big( \frac{\epsilon}{4} + \frac{\epsilon}{4}\Big(\frac{3}{4}\Big) + \cdots + \frac{\epsilon}{4}\big(\frac{3}{4}\big)^{\lfloor \ln n \rfloor} \Big) \ge -\frac{\epsilon}{4}\Big( \sum_{i = 0}^{\infty}\big( \frac{3}{4} \big)^{i} \Big) = - \epsilon$.
Thus assuming the algorithm does not fail in any of the iteration $\ell$, we have that $p_{r_*1} > \frac{1}{2} - \epsilon$.

Finally at each iteration $\ell$, since the algorithm can fail with probability at most $\delta_\ell$, the total failure probability of the algorithm is at most $\Big( \frac{\delta}{2} + \frac{\delta}{4} + \cdots + \frac{\delta}{2^{\lceil \ln n  \rceil}} \Big)  \le \delta$. 
This concludes the proof as \algmed\, indeed satisfies the $(\epsilon,\delta)$-PAC objective.
\end{proof}

%%%%%%%%%%%%%%%%% FR Results %%%%%%%%%%%%%%%%%%

\section{Appendix for Section \ref{sec:res_fr}}
\label{app:resfr}

\subsection{Proof of Theorem \ref{thm:lb_pacpl_rnk}}
\label{app:frlb}

\lbrnk*

\begin{proof}
In this case too, we will use Lemma \ref{lem:gar16} to derive the desired lower bounds of Theorem \ref{thm:lb_plpac_win} for BB-PL with \textbf{TR} feedback model. 

Let us consider a bandit instance with the arm set containing all subsets of size $k$: $\cA = \{S = (S(1), \ldots, S(k)) \subseteq [n] ~|~ S(i) < S(j), \, \forall i < j\}$. 
Let $\bnu^1$ be the true distribution associated with the bandit arms, given by the Plackett-Luce parameters:
\begin{align*}
\textbf{True Instance} ~(\bnu^1): \theta_j^1 = \theta\bigg( \frac{1}{2} - \epsilon\bigg), \forall j \in [n]\setminus \{1\}, \text{ and } \theta_1^1 = \theta\bigg( \frac{1}{2} + \epsilon \bigg),
\end{align*}

for some $\theta \in \R_+, ~\epsilon > 0$. Now for every suboptimal item $a \in [n]\setminus \{1\}$, consider the modified instances $\bnu^a$ such that:
\begin{align*}
\textbf{Instance--a} ~(\bnu^a): \theta^a_j = \theta\bigg( \frac{1}{2} - \epsilon \bigg)^2, \forall j \in [n]\setminus \{a,1\}, \, \theta_1^a = \theta\bigg( \frac{1}{4} - \epsilon^2 \bigg), \text{ and } \theta_a^a = \theta\bigg( \frac{1}{2} + \epsilon \bigg)^2.
\end{align*}

%Note that for any problem instance say $\bnu^a, \, a \in [n]$, each arm $S \subseteq [n]$ is associated to ${k \choose m} (m!)$ number of possible outcomes, each representing one possible ranking of set of $m$ items of $S$, say $S_m$. 

%--------------------------------------------------------

It is now interesting to note that how top-$m$ ranking feedback affects the KL-divergence analysis, precisely the KL-divergence shoots up by a factor of $m$ which in fact triggers an $\frac{1}{m}$ reduction in regret learning rate.
Note that for top-$m$ ranking feedback for any problem \textbf{Instance-a} (for any $a \in [n]$), each $k$-set $S \subseteq [n]$ is associated to ${k \choose m} (m!)$ number of possible outcomes, each representing one possible ranking of set of $m$ items of $S$, say $S_m$. Also the probability of any permutation $\bsigma \in \bSigma_{S_m}$ is given by
$
\bnu^a_S(\bsigma) = Pr_{\btheta^a}(\bsigma|S),
$
where $Pr_{\btheta^a}(\bsigma|S)$ is as defined for top-$m$ (TR-$m$) ranking feedback (as in Sec. \ref{sec:feed_mod}). More formally,
%We denote $k' = |S| \in [k]$. 
%For ease of analysis let us first assume $1 \notin S$  In this case we get
for any problem \textbf{Instance-a}, we have that: 

\begin{align*}
\bnu^a_S(\bsigma) 
 = \prod_{i = 1}^{m}\frac{{\theta_{\sigma(i)}^a}}{\sum_{j = i}^{m}\theta_{\sigma(j)}^a + \sum_{j \in S \setminus \sigma(1:m)}\theta_{\sigma(j)}^a}, ~~~ \forall a \in [n],
\end{align*}

The important thing to note is that for any such top-$m$ ranking of $\bsigma \in \bSigma_S^m$, $KL(\bnu^1_S(\bsigma), \bnu^a_S(\bsigma)) = 0$ for any set $S \not\owns a$. Hence while comparing the KL-divergence of instances $\btheta^1$ vs $\btheta^a$, we need to focus only on sets containing $a$. Applying \emph{Chain-Rule} of KL-divergence, we now get

\begin{align}
\label{eq:lb_witf_kl}
\nonumber KL(\bnu^1_S, \bnu^a_S) = KL(\bnu^1_S(\sigma_1),& \bnu^a_S(\sigma_1)) + KL(\bnu^1_S(\sigma_2 \mid \sigma_1), \bnu^a_S(\sigma_2 \mid \sigma_1)) + \cdots \\ 
& + KL(\bnu^1_S(\sigma_m \mid \sigma(1:m-1)), \bnu^a_S(\sigma_m \mid \sigma(1:m-1))),
\end{align}
where we abbreviate $\sigma(i)$ as $\sigma_i$ and $KL( P(Y \mid X),Q(Y \mid X)): = \sum_{x}Pr\Big( X = x\Big)\big[ KL( P(Y \mid X = x),Q(Y \mid X = x))\big]$ denotes the conditional KL-divergence. 
Moreover it is easy to note that for any $\sigma \in \Sigma_{S_m}$ such that $\sigma(i) = a$, we have $KL(\bnu^1_S(\sigma_{i+1} \mid \sigma(1:i)), \bnu^a_S(\sigma_{i+1} \mid \sigma(1:i))) := 0$, for all $i \in [m]$. We also denote the set of possible top-$i$ rankings of set $S$, by $\Sigma_{S_i}$, for all $i \in [m]$.
Now as derived in \eqref{eq:win_lb0} in the proof of Theorem \ref{thm:lb_plpac_win}, we have 

\[
KL(\bnu^1_S(\sigma_1), \bnu^a_S(\sigma_1)) \le \frac{1}{k}\Big( R - \frac{1}{R}\Big)^2.
\]

To bound the remaining terms of \eqref{eq:lb_witf_kl},  note that for all $i \in [m-1]$
\begin{align*}
KL(\bnu^1_S(\sigma_{i+1} & \mid \sigma(1:i)),  \bnu^a_S(\sigma_{i+1} \mid \sigma(1:i))) \\
& = \sum_{\sigma' \in \Sigma_{S_i}}Pr_{\bnu^1}(\sigma')KL(\bnu^1_S(\sigma_{i+1} \mid \sigma(1:i))=\sigma', \bnu^a_S(\sigma_{i+1} \mid \sigma(1:i))=\sigma')\\
& = 	\sum_{\sigma' \in \Sigma_{S_i}\mid a \notin \sigma'}\Bigg[\prod_{j = 1}^{i}\Bigg(\dfrac{\theta_{\sigma'_j}^1}{\theta_S^1 - \sum_{j'=1}^{j-1}\theta^1_{\sigma'_{j'}}}\Bigg)\Bigg]\dfrac{1}{k-i}\Big( R - \frac{1}{R}\Big)^2 = \dfrac{1}{k}\Big( R - \frac{1}{R}\Big)^2\\	
\end{align*}
where $\theta_S^1 = \sum_{j' \in S}\theta^1_{j'}$.
Thus applying above in \eqref{eq:lb_witf_kl} we get:

\begin{align}
\label{eq:lb_witf_kl2}
\nonumber KL& (\bnu^1_S, \bnu^a_S)\\
\nonumber & = KL(\bnu^1_S(\sigma_1), \bnu^a_S(\sigma_1)) + \cdots + KL(\bnu^1_S(\sigma_m \mid \sigma(1:m-1)), \bnu^a_S(\sigma_m \mid \sigma(1:m-1))) \\
& \le \frac{m}{k}\Big( R - \frac{1}{R}\Big)^2 \le \frac{m}{k}256\epsilon^2 ~~\Bigg[\text{since } \Bigg( R - \frac{1}{R} \Bigg) = \frac{8\epsilon}{(1-4\epsilon^2)} \le 16\epsilon, \forall \epsilon \in [0,\frac{1}{\sqrt{8}}] \Bigg]
\end{align}

Eqn. \eqref{eq:lb_witf_kl2} gives the main result to derive Theorem \ref{thm:lb_pacpl_rnk} as it shows an $m$-factor blow up in the KL-divergence terms owning to top-$m$ ranking feedback. 

%%%%%%%%%%%%%%%%%%%%%%%%%%%%%%%%%%

%Now let us analyse the right hand side of \eqref{eq:FI_a}, for any set $S \in S^a$. We further denote $r = \1(1 \in S)$, $q = (k-1-r)$, and $R = \frac{\frac{1}{2}+\epsilon}{\frac{1}{2}-\epsilon}$.

%Note that the only $\epsilon$-optimal arm for any \textbf{Instance-a} is arm $a$, for all $a \in [n]$.

Now, consider $\cE_0 \in \cF_\tau$ be an event such that the algorithm $A$ returns the element $i = 1$, and let us analyse the left hand side of \eqref{eq:FI_a} for $\cE = \cE_0$. Clearly, $A$ being an $(\epsilon,\delta)$-PAC  algorithm, we have $Pr_{\bnu^1}(\cE_0) > 1-\delta$, and $Pr_{\bnu^a}(\cE_0) < \delta$, for any suboptimal arm $a \in [n]\setminus\{1\}$. Then we have:

\begin{align}
\label{eq:win_lb2_tr}
kl(Pr_{\bnu^1}(\cE_0),Pr_{\bnu^a}(\cE_0)) \ge kl(1-\delta,\delta) \ge \ln \frac{1}{2.4\delta}
\end{align}

where the last inequality follows due to Equation $(3)$ of \cite{Kaufmann+16_OnComplexity}.

Now applying \eqref{eq:FI_a} and \eqref{eq:win_lb2_tr} for each modified bandit \textbf{Instance-$\bnu^a$}, and summing over all suboptimal items $a \in [n]\setminus \{1\}$ we get,

\begin{align}
\label{eq:win_lb2.5_tr}
\sum_{a = 2}^{n}\sum_{\{S \in \cA \mid a \in S\}}\E_{\bnu^1}[N_S(\tau_A)]KL(\bnu^1_S,\bnu^a_S) \ge (n-1)\ln \frac{1}{2.4\delta}.
\end{align}

Moreover, using \eqref{eq:lb_witf_kl2}, the term in the right hand side of \eqref{eq:win_lb2.5_tr} can be further upper bounded as:

\begin{align}
\label{eq:win_lb3_tr}
\nonumber & \sum_{a = 2}^{n}\sum_{\{S \in \cA \mid a \in S\}} \E_{\bnu^1}[N_S(\tau_A)]KL(\bnu^1_S,\bnu^a_S) \le \sum_{S \in \cA}\E_{\bnu^1}[N_S(\tau_A)]\sum_{\{a \in S \mid a \neq 1\}}\frac{m}{k}256\epsilon^2\\
& = \sum_{S \in \cA}\E_{\bnu^1}[N_S(\tau_A)]{\big( k - \1(1 \in S)\big)}\Big( \frac{m}{k}256\epsilon^2 \Big) \le \sum_{S \in \cA}\E_{\bnu^1}[N_S(\tau_A)]m\Big( 256\epsilon^2 \Big)
\end{align}

Finally noting that $\tau_A = \sum_{S \in \cA}[N_S(\tau_A)]$, combining \eqref{eq:win_lb2.5_tr} and \eqref{eq:win_lb3_tr}, we get 

\begin{align*}
m\Big( 256\epsilon^2 \Big)\E_{\bnu^1}[\tau_A] =  \sum_{S \in \cA}\E_{\bnu^1}[N_S(\tau_A)]m\Big( 256\epsilon^2 \Big) \ge (n-1)\ln \frac{1}{2.4\delta}.
\end{align*}
Thus above construction shows the existence of a problem instance $\bnu = \bnu^1$, such that $\E_{\bnu^1}[\tau_A] = \Omega(\frac{n}{m\epsilon^2}\ln \frac{1}{2.4\delta})$, which concludes the proof.

\end{proof}

\subsection{Proof of Lemma \ref{lem:rb}}
\label{app:rb}

\lemrb*

\begin{proof}
Let us denote $\hat i := \arg \max_{i \in S}q_i$ to be the item (note that it need not be unique) that appears in the top-$m$ set for maximum number of times in $t$ rounds of battle.
Note that, after the battle of any round $\tau \in [t]$, $\sigma_\tau$ chooses exactly $m$ distinct items in the top-$m$ set $S^\tau_m \subseteq S$. Thus $t$ rounds of feedback places exactly $mt$ items in the top-$m$ slots, i.e. $\sum_{i \in S}q_i = mt$.
Now at any round $\tau$, since an item $i \in S$ can appear in $S^\tau_m$ at most once, and $\sum_{i \in S}q_i = mt$, item $\hat i$ must be selected for at least $\frac{mt}{k}$ many rounds in the top-$m$ set implying that $q_{\hat i} \ge \frac{mt}{k}$ (as we have $|S| = k$). 
\end{proof}

\subsection{Proof of Theorem \ref{thm:trace_best_fr}}
\label{app:frub_trc}

%\textbf{Algorithm \ref{alg:trace_bb_mod}: Generalizing \algtrc \, for top-$m$ ranking (TR) feedback.}

%\setcounter{algorithm}{4}
\ubtrcfr*

\begin{proof}
We start by analyzing the required sample complexity first. Note that the `while loop' of Algorithm \ref{alg:trace_bb_mod} always discards away $k-1$ items per iteration. Thus, $n$ being the total number of items, the `while loop' can be executed for at most $\lceil \frac{n}{k-1} \rceil $ many number of iterations. Clearly, the sample complexity of each iteration being $t = \frac{2k}{m\epsilon^2}\ln \frac{n}{2\delta}$, the total sample complexity of the algorithm becomes $\big( \lceil \frac{n}{k-1} \rceil \big)\frac{2k}{m\epsilon^2}\ln \frac{n}{2\delta} \le \big( \frac{n}{k-1} + 1 \big)\frac{2k}{m\epsilon^2}\ln \frac{n}{2\delta} = \big( n + \frac{n}{k-1} + k \big)\frac{2}{m\epsilon^2}\ln \frac{n}{2\delta} = O(\frac{n}{m\epsilon^2}\ln \frac{n}{\delta})$.

We now proceed to prove the $(\epsilon,\delta)$-{PAC} correctness of the algorithm. 
As argued before, the `while loop' of Algorithm \ref{alg:trace_bb_mod} can run for maximum $\lceil \frac{n}{k-1} \rceil $ many number of iterations, say $\ell = 1,2, \ldots, \lceil \frac{n}{k-1} \rceil $, and let us denote the corresponding set $\cA$ of iteration $\ell$ as $\cA_\ell$. Same as before, our idea is to retain the estimated best item as the \emph{`running winner'} in $r_\ell$ and compare it with the \emph{`empirical best item'} of $\cA_\ell$ at every $\ell$.
We start by noting the following important property of item $c_\ell$ for any iteration $\ell$:

\begin{lem}
\label{lem:rb_cl}
Suppose $q_{i}: = \sum_{\tau = 1}^{t}\1(i \in \cA^\tau_{\ell m})$ denotes the number of times item $i$ appeared in the top-$m$ ranking in $t$ iterations, and let $B_\ell \subseteq \cA_\ell$ is defined as $B_\ell := \{i \in \cA_\ell \mid q_i = \max_{j \in \cA_\ell}q_j\}$, that denotes the subset of items in $\cA_\ell$ which are selected in the top $m$ ranking for maximum number of times in $t$ rounds of battle on set $\cA_\ell$. Then $c_\ell \in B_\ell$.
\end{lem}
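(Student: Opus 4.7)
The plan is to reduce the lemma to an algebraic comparison between the algorithmic criterion $f(i) := \sum_{j \in \cA_\ell \setminus \{i\}} \1(w_{ij} \ge w_{ji})$ and the top-$m$ appearance count $q_i$, and then invoke the pigeonhole bound of Lemma \ref{lem:rb} to pin down a minimum value for $q_{c_\ell}$.

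First I would unpack the Rank-Breaking update (Algorithm \ref{alg:updt_win}) round by round. If in round $\tau$ item $i$ sits at position $\ell' \in [m]$ of the observed ranking $\sigma_\tau$, the rule increments $w_{ij}$ by $1$ for every $j$ ranked strictly below $i$ in the output (including each of the $k-m$ items outside the top-$m$), and increments $w_{ji}$ by $1$ for each $j$ lying strictly above $i$ in the top-$m$; if $i$ is absent from the top-$m$ while $j$ is present, then $w_{ji}$ is incremented by $1$. Introducing $a_{ij}$ as the number of rounds in which both $i,j \in \cA^{\tau}_{\ell m}$ with $i$ preceding $j$ in $\sigma_\tau$, the aggregated counts satisfy the clean identities
\[
w_{ij} \;=\; q_i - a_{ji}, \qquad w_{ji} \;=\; q_j - a_{ij},
\]
so that $w_{ij} - w_{ji} \;=\; (q_i - q_j) + (a_{ij} - a_{ji})$.

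Next I would combine these identities with Lemma \ref{lem:rb}, which furnishes an item $i^\star \in \arg\max_{j \in \cA_\ell} q_j$ with $q_{i^\star} \ge mt/k$. Summing the first identity over $j \ne i^\star$ gives the row-sum expression $\sum_{j \ne i^\star} w_{i^\star j} = (k-1)q_{i^\star} - \sum_{j \ne i^\star} a_{j i^\star}$, which dominates the corresponding quantity for any strictly smaller-$q$ competitor. Translating this aggregate dominance into a per-pair dominance through the head-to-head constraint $a_{ij}+a_{ji} \le \min(q_i,q_j)$ then forces $i^\star \in \arg\max_i f(i)$, so the algorithm's selection $c_\ell \in \arg\max f$ lies in $\arg\max_j q_j$, which is precisely the lemma's $B_\ell$.

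The main obstacle is bridging the gap between the aggregate inequality on row sums and the thresholded per-pair statistic $f(i)$: a small deficit $q_{i^\star} - q_j$ could, in principle, be over-compensated by an adversarial head-to-head pattern with $a_{j i^\star} \gg a_{i^\star j}$, which would flip a single indicator in $f$. Closing this gap requires a case analysis on the magnitude of $q_{i^\star} - q_j$ using the bound $|a_{ij}-a_{ji}| \le \min(q_i,q_j)$, together with an exchange argument that shows any swap of a winning pair away from $i^\star$ must be compensated by a losing pair elsewhere in the tally. Ties in the maximum value of $q$ are absorbed into the set-valued definition of $B_\ell$, matching the flexibility in the algorithm's choice of $c_\ell$ from $\arg\max f$.
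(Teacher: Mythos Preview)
Your identities $w_{ij}=q_i-a_{ji}$ and $w_{ij}-w_{ji}=(q_i-q_j)+(a_{ij}-a_{ji})$ are correct. The paper's route, however, is much shorter and structurally different from yours: it argues by contradiction, picks any $\hat i\in B_\ell$, and simply asserts that $q_{\hat i}>q_{c_\ell}$ forces $w_{\hat i c_\ell}>w_{c_\ell\hat i}$ (``since item $\hat i$ is ranked higher than item $c_\ell$ for at least $q_{\hat i}-q_{c_\ell}>0$ many rounds''). It then reuses this implication pairwise---$q_{\hat i}>q_j\Rightarrow w_{\hat i j}>w_{j\hat i}$ for every $j\notin B_\ell$, and $q_j>q_{c_\ell}\Rightarrow w_{jc_\ell}>w_{c_\ell j}$ for every $j\in B_\ell$---to conclude $f(\hat i)>f(c_\ell)$ indicator by indicator, contradicting $c_\ell\in\arg\max f$. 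There are no row sums, no appeal to Lemma~\ref{lem:rb}, and no exchange argument.

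The obstacle you flag is genuine, and your proposed case analysis plus exchange argument cannot close it, because the implication the paper leans on---and that your argument would ultimately also need---is false in general. Take $k=3$, $m=2$, $t=3$ with top-$2$ feedback $(b,a),(b,a),(a,c)$: then $q_a=3>q_b=2>q_c=1$, yet rank-breaking yields $w_{ba}=2>w_{ab}=1$, $w_{ac}=3>w_{ca}=0$, $w_{bc}=2>w_{cb}=1$, so $f(b)=2$, $f(a)=1$, $f(c)=0$, and the unique maximizer of $f$ is $b\notin B_\ell=\{a\}$. Your own formula makes the failure transparent: $w_{ab}-w_{ba}=(q_a-q_b)+(a_{ab}-a_{ba})=1+(0-2)=-1$. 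The head-to-head term $a_{ij}-a_{ji}$ really can swamp the $q$-gap, so no exchange over the thresholded indicators will rescue the statement; the lemma as literally written does not hold for arbitrary feedback sequences, and the paper's short proof glosses over precisely the point you were right to worry about.
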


\begin{proof}
We prove the claim by contradiction. Suppose, $c_\ell \notin B_\ell$ and consider any item $\hat i \in B_\ell$. Then by definition, $q_{\hat i} > q_{c_\ell}$. But in that case following our rank breaking update (see Algorithm \ref{alg:updt_win}) implies that $w_{\hat i c_\ell} > w_{c_\ell \hat i}$, since item $\hat i$ is ranked higher than item $c_\ell$ for at least $(q_{\hat i} - q_{c_\ell})>0$ many rounds of battle. Now consider any other item $j \in \cA_\ell$. Note that $j$ can belong to either of these two cases:

\textbf{Case 1. ($j \notin B_\ell$)} Following the same argument as above (i.e. for $\hat i$ vs $c_\ell)$, we again have $w_{\hat i j} > w_{j \hat i}$, whereas for $c_\ell$ vs $j$, either $w_{c_\ell j} > w_{j c_\ell}$, or $w_{c_\ell j} < w_{j c_\ell}$, both cases are plausible. Thus we get: $\1(w_{\hat i j} > w_{j \hat i}) = 1 \ge \1(w_{c_\ell j} > w_{j c_\ell})$.

\textbf{Case 2. ($j \in B_\ell$)} In this case since $j \in B_\ell$, again following the same argument as for $\hat i$ vs $c_\ell$, we here have $w_{j c_\ell} > w_{c_\ell j}$; whereas for $\hat i$ vs $j$, either $w_{\hat i j} > w_{j \hat i}$, or $w_{\hat i j} < w_{j \hat i}$, both cases are plausible. Thus we get: $\1(w_{\hat i j} > w_{j \hat i}) \ge 0 = \1(w_{c_\ell j} > w_{j c_\ell})$.

Combining the results of Case $1$ and $2$ along with $w_{\hat i c_\ell} > w_{c_\ell \hat i}$, we get that $\sum_{j \in \cA\setminus\{\hat i\}}~\1\big(w_{\hat ij} \ge w_{j\hat i}\big) > \sum_{j \in \cA\setminus\{c_\ell\}}~\1\big(w_{c_\ell j} \ge w_{jc_\ell}\big)$. But this violates the fact that $c_\ell$ is defined as $c_\ell := \underset{i \in \cA_\ell}{\text{argmax}} \sum_{j \in \cA\setminus\{i\}}~\1\big(w_{ij} \ge w_{ji}\big)$  which leads to a contradiction. Then our initial assumption has to be wrong and $c_\ell \in B_\ell$, which concludes the proof.
\end{proof}

The next crucial observation lies in noting that, the estimated best item $r$ (`running winner') gets updated as per the following lemma:

\begin{lem}
\label{lem:c_vs_r2}
At any iteration $\ell = 1,2 \ldots, \bigg \lfloor \frac{n}{k-1} \bigg \rfloor$, for any set $\cA_\ell$, nwith probability at least $(1-\frac{\delta}{2n})$, Algorithm \ref{alg:trace_bb} retains $r_{\ell+1} \leftarrow r_\ell$ if $p_{c_\ell r_\ell } \le \frac{1}{2}$, and set $r_{\ell+1} \leftarrow c_\ell$ if $p_{c_\ell r_\ell} \ge \frac{1}{2} + \epsilon$.
\end{lem}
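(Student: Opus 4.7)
The plan is to mirror the proof of Lemma \ref{lem:c_vs_r} (the WI counterpart) with two modifications tailored to top-$m$ ranking feedback: (a) derive a sharper lower bound on the effective number of pairwise comparisons $n_{c_\ell r_\ell}$ extracted via Rank-Breaking, one that scales with $m$, and (b) invoke Lemma \ref{lem:pl_simulator}, justified by the IIA property of Plackett-Luce, to control the deviation of the rank-broken estimator $\hp_{c_\ell r_\ell}$ from the true $p_{c_\ell r_\ell}$. The per-iteration horizon $t = \frac{2k}{m\epsilon^2}\ln \frac{2n}{\delta}$ is chosen precisely so that a Hoeffding-style concentration on $v = \frac{mt}{k}$ samples returns a failure probability of exactly $\frac{\delta}{2n}$.

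The first step is to establish $n_{c_\ell r_\ell} \ge \frac{mt}{k}$. By Lemma \ref{lem:rb_cl}, $c_\ell \in B_\ell$, so by Lemma \ref{lem:rb} we have $q_{c_\ell} \ge \frac{mt}{k}$; that is, $c_\ell$ appears among the top-$m$ ranked items in at least $\frac{mt}{k}$ of the $t$ rounds. Inspecting the Rank-Breaking subroutine (Algorithm \ref{alg:updt_win}), whenever $c_\ell$ occupies position $p$ of the top-$m$ ranking in some round, exactly one of $w_{c_\ell r_\ell}$ or $w_{r_\ell c_\ell}$ gets incremented in that round: the former if $r_\ell$ is absent from the top-$m$ or appears at some position $p' > p$ (so the step-$p$ update counts $r_\ell$ among $c_\ell$'s losers), and the latter if $r_\ell$ occupies some higher position $p' < p$ (so the step-$p'$ update counts $c_\ell$ among $r_\ell$'s losers). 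Hence every round contributing to $q_{c_\ell}$ contributes one unit to $n_{c_\ell r_\ell} = w_{c_\ell r_\ell} + w_{r_\ell c_\ell}$, giving $n_{c_\ell r_\ell} \ge q_{c_\ell} \ge \frac{mt}{k}$.

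The second step applies Lemma \ref{lem:pl_simulator} with $\eta = \frac{\epsilon}{2}$ and $v = \frac{mt}{k}$. Its hypotheses are met because a single round of top-$m$ feedback is generated by $m$ successive Plackett-Luce winner draws from shrinking subsets, and by IIA each such draw, conditioned on its winner lying in $\{c_\ell, r_\ell\}$, is distributed as $\mathrm{Bernoulli}\bigl(\tfrac{\theta_{c_\ell}}{\theta_{c_\ell}+\theta_{r_\ell}}\bigr)$ independently of history. Substituting our choice of $t$ yields $e^{-2v\eta^2} = e^{-\ln(2n/\delta)} = \frac{\delta}{2n}$. Splitting into the two cases of the lemma as in the WI proof: if $p_{c_\ell r_\ell} \le \frac{1}{2}$, a spurious replacement $r_{\ell+1}\leftarrow c_\ell$ would require $\hp_{c_\ell r_\ell} - p_{c_\ell r_\ell} \ge \frac{\epsilon}{2}$, an event of probability at most $\frac{\delta}{2n}$; symmetrically, if $p_{c_\ell r_\ell} \ge \frac{1}{2}+\epsilon$, spurious retention requires $\hp_{c_\ell r_\ell} - p_{c_\ell r_\ell} \le -\frac{\epsilon}{2}$, also bounded by $\frac{\delta}{2n}$. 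The principal obstacle, and the only genuinely new ingredient beyond the WI argument, is the structural count bound $n_{c_\ell r_\ell} \ge \frac{mt}{k}$ -- everything else essentially rewrites the WI analysis with the boosted sample count. Care must be taken to confirm that the Rank-Breaking-induced sequence of $\{c_\ell,r_\ell\}$-restricted winners is statistically compatible with the simulator construction underlying Lemma \ref{lem:pl_simulator}, but IIA renders this verification routine.
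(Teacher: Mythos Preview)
Your proof is correct and follows essentially the same route as the paper's: you invoke Lemma \ref{lem:rb_cl} and Lemma \ref{lem:rb} to get $q_{c_\ell}\ge \frac{mt}{k}$, argue via the Rank-Breaking mechanics that $n_{c_\ell r_\ell}\ge \frac{mt}{k}$, and then run the two-case analysis with Lemma \ref{lem:pl_simulator} at $\eta=\frac{\epsilon}{2}$, $v=\frac{mt}{k}$, exactly as the paper does. If anything, your write-up is slightly more explicit than the paper's about why each round contributing to $q_{c_\ell}$ contributes one unit to $n_{c_\ell r_\ell}$, and about why the successive-winner decomposition of TR feedback places the rank-broken $\{c_\ell,r_\ell\}$ comparisons within the scope of Lemma \ref{lem:pl_simulator}.
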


\begin{proof} The main observation lies in proving that at any iteration $\ell$, $w_{c_\ell r_\ell} + w_{r_\ell c_\ell} \ge \frac{mt}{k}$. We argue this as follows:
Firstly note that by Lemma \ref{lem:rb} and \ref{lem:rb_cl}, $c_\ell \in \cB_\ell$ (Lemma \ref{lem:rb_cl}) and hence it must have appeared in top-$m$ positions for at least $\frac{mt}{k}$ times (Lemma \ref{lem:rb}). But the rank breaking update ensures that every element in top-$m$ position gets updated for exactly $k$ times (it loses to all elements preceding it in the top-$m$ ranking and wins over the rest).  
Define $n_{ij} = w_{ij} + w_{ji}$ to be the number of times item $i$ and $j$ are compared after rank-breaking, $i,j \in \cA_\ell$. Clearly $0 \le n_{ij} \le tk$ and $n_{ij} = n_{ji}$. 
Now using above argument we have that $n_{c_\ell r_\ell} = w_{c_\ell r_\ell} + w_{r_\ell c_\ell} \ge \frac{mt}{k}$.
%\textcolor{red}{CHECK} Moreover since $c_\ell := \underset{i \in \cA_\ell}{\text{argmax}} \sum_{j \in \cA\setminus\{i\}}~\1\big(w_{ij} \ge w_{ji}\big)$ (Line 15), we have that $w_{c_\ell r_\ell} \ge  w_{r_\ell c_\ell}$, this further implies $\hp_{c_\ell r_\ell} \ge \frac{1}{2}$.(We here assume $c_\ell \neq \r_\ell$, as otherwise the claim follows trivially). {\color{red}Need to state the ``Simulator Lemma'' formally}. 
We are now proof the claim using the following two case analyses: %First let us consider: 

%%%%%%%%%%%%%%%%%%%%%%%%%%%%%%%%%%%%%%%%%%%%%%

\textbf{Case 1.} (If $p_{c_\ell r_\ell} \le \frac{1}{2}$, \algtrc\, retains $r_{\ell+1} \leftarrow r_\ell$): Note that \algtrc\, replaces $r_{\ell+1}$ by $c_\ell$ only if $\hp_{c_\ell,r_\ell} > \frac{1}{2} + \frac{\epsilon}{2} $, but this happens with probability: %Applying Hoeffding's inequality \citep{Hoeffdings} along with and Lemma \ref{lem:pl_simulator} we get that, 

\begin{align*}
& Pr\Bigg( \bigg\{ \hp_{c_\ell r_\ell} > \frac{1}{2} + \frac{\epsilon}{2} \bigg\} \Bigg) \\
& = Pr\Bigg( \bigg\{ \hp_{c_\ell r_\ell} > \frac{1}{2} + \frac{\epsilon}{2} \bigg\} \cap \bigg\{ n_{c_\ell r_\ell} \ge \frac{mt}{k} \bigg\}\Bigg) \hspace*{5pt}+\\
& \hspace*{1.5in} \cancelto{0}{Pr\Bigg(\bigg\{ n_{c_\ell r_\ell} < \frac{mt}{k} \bigg\}\Bigg)}Pr\Bigg( \bigg\{ \hp_{c_\ell r_\ell} > \frac{1}{2} + \frac{\epsilon}{2} \bigg\} \Big | \bigg\{ n_{c_\ell r_\ell} < \frac{mt}{k} \bigg\}\Bigg)\\
& \le Pr\Bigg( \bigg\{ \hp_{c_\ell r_\ell} - p_{c_\ell r_\ell} > \frac{\epsilon}{2} \bigg\} \cap \bigg\{ n_{c_\ell r_\ell} \ge \frac{mt}{k} \bigg\} \Bigg) \le \exp\Big( -2\frac{mt}{k}\bigg(\frac{\epsilon}{2}\bigg)^2 \Big) = \frac{\delta}{2n},
\end{align*}
where the first inequality follows as $p_{c_\ell r_\ell} \le \frac{1}{2}$, and the second inequality is simply by applying Lemma \ref{lem:pl_simulator} with $\eta = \frac{\epsilon}{2}$ and $v = \frac{mt}{k}$.
We now proceed to the second case:

\textbf{Case 2.} (If $p_{c_\ell r_\ell} \ge \frac{1}{2} + \epsilon$, \algtrc\,  sets $r_{\ell+1} \leftarrow c_\ell$): Again recall that \algtrc\, retains $r_{\ell+1}  \leftarrow r_\ell$ only if $\hp_{c_\ell,r_\ell} \le \frac{1}{2} + \frac{\epsilon}{2} $. In this case, that happens with probability:

\begin{align*}
& Pr\Bigg( \bigg\{ \hp_{c_\ell r_\ell} \le \frac{1}{2} + \frac{\epsilon}{2} \bigg\} \Bigg) \\
& = Pr\Bigg( \bigg\{ \hp_{c_\ell r_\ell} \le \frac{1}{2} + \frac{\epsilon}{2} \bigg\} \cap \bigg\{ n_{c_\ell r_\ell} \ge \frac{mt}{k} \bigg\}\Bigg) \hspace*{5pt}+\\
& \hspace*{1.5in} \cancelto{0}{Pr\Bigg(\bigg\{ n_{c_\ell r_\ell} < \frac{mt}{k} \bigg\}\Bigg)}Pr\Bigg( \bigg\{ \hp_{c_\ell r_\ell} \le \frac{1}{2} + \frac{\epsilon}{2} \bigg\} \Big | \bigg\{ n_{c_\ell r_\ell} < \frac{mt}{k} \bigg\}\Bigg)\\
& = Pr\Bigg( \bigg\{ \hp_{c_\ell r_\ell} \le \frac{1}{2} + \epsilon - \frac{\epsilon}{2} \bigg\} \cap \bigg\{ n_{c_\ell r_\ell} \ge \frac{mt}{k} \bigg\} \Bigg) \\
& \le Pr\Bigg( \bigg\{ \hp_{c_\ell r_\ell} - p_{c_\ell r_\ell} \le - \frac{\epsilon}{2} \bigg\} \cap \bigg\{ n_{c_\ell r_\ell} \ge \frac{mt}{k} \bigg\} \Bigg) \le \exp\Big( -2\frac{mt}{k}\bigg(\frac{\epsilon}{2}\bigg)^2 \Big) = \frac{\delta}{2n},
\end{align*}
where the first inequality holds as $p_{c_\ell r_\ell} \ge \frac{1}{2} + \epsilon$, and the second one is simply by applying Lemma \ref{lem:pl_simulator} with $\eta = \frac{\epsilon}{2}$ and $v = \frac{mt}{k}$. Combining the above two cases concludes the proof.

\end{proof}

The rest of the proof follows exactly same as that of Theorem \ref{thm:trace_best}. We include the details for completeness.
Given Algorithm \ref{alg:trace_bb_mod} satisfies Lemma \ref{lem:c_vs_r2}, and taking union bound over $(k-1)$ elements in $\cA_\ell \setminus\{r_\ell\}$, we get that with probability at least $\bigg(1-\frac{(k-1)\delta}{2n}\bigg)$,

\begin{align}
\label{eq:r_vs_c2}
p_{r_{\ell+1}r_\ell} \ge \frac{1}{2} \text{ and, } p_{r_{\ell+1}c_\ell} \ge \frac{1}{2} - \epsilon.
\end{align} 

Above clearly suggests that for each iteration $\ell$, the estimated `best' item $r_\ell$ only gets improved as $p_{r_{\ell+1}r_\ell} \ge 0$. Let, $\ell_*$ denotes the specific iteration such that $1 \in \cA_\ell$ for the first time, i.e. $\ell_* = \min\{ \ell \mid 1 \in \cA_\ell \}$. Clearly $\ell_* \le \lceil \frac{n}{k-1} \rceil$. 

Now \eqref{eq:r_vs_c2} suggests that with probability at least $(1-\frac{(k-1)\delta}{2n})$, $p_{r_{\ell_*+1}1} \ge \frac{1}{2} - \epsilon$. Moreover \eqref{eq:r_vs_c2} also suggests that for all $\ell > \ell_*$, with probability at least $(1-\frac{(k-1)\delta}{2n})$, $p_{r_{\ell+1}r_\ell} \ge \frac{1}{2}$, which implies for all $\ell > \ell_*$, $p_{r_{\ell+1}1} \ge \frac{1}{2} - \epsilon$ as well. 

Note that above holds due to the following transitivity property of the Plackett-Luce model: For any three items $i_1,i_2,i_3 \in [n]$, if $p_{i_1i_2}\ge \frac{1}{2}$ and $p_{i_2i_3}\ge \frac{1}{2}$, then we have $p_{i_1i_3}\ge \frac{1}{2}$ as well. This argument finally leads to $p_{r_*1} \ge \frac{1}{2} - \epsilon$. Since failure probability at each iteration $\ell$ is at most $\frac{(k-1)\delta}{2n}$, and Algorithm \ref{alg:trace_bb_mod} runs for maximum $\lceil \frac{n}{k-1} \rceil$ many number of iterations, using union bound over $\ell$, the total failure probability of the algorithm is at most $\lceil \frac{n}{k-1} \rceil \frac{(k-1)\delta}{2n} \le (\frac{n}{k-1}+1)\frac{(k-1)\delta}{2n} = \delta\Big( \frac{n+k-1}{2n} \Big) \le \delta$ (since $k \le n$). This concludes the correctness of the algorithm showing that it indeed returns an $\epsilon$-best element $r_*$ such that $p_{r_*1} \ge \frac{1}{2} - \epsilon$ with probability at least $1-\delta$. 
%{\color{red}{CHECK: should we start with $\frac{\delta}{2}$ instead??}}. 
\end{proof}

\subsection{Proof of Theorem \ref{thm:div_bb_fr}}
\label{app:frub_div}

\ubdivfr*

\begin{proof}
For the notational convenience we will use $\tp_{ij} = p_{ij} - \frac{1}{2}, \, \forall i,j \in [n]$. 

We first analyze the required sample complexity of the algorithm. %Clearly $\epsilon_\ell$ and $\delta$
For clarity of notation, we will denote the set $S$ at iteration $\ell$ by $S_\ell$.
Note that at any iteration $\ell$, any set $\cG_g$ is played for exactly $t= \frac{4k}{m\epsilon_\ell^2}\ln \frac{2k}{\delta_\ell}$ many number of times. Also since the algorithm discards away exactly $k-1$ items from each set $\cG_g$, hence the maximum number of iterations possible is $\lceil  \ln_k n \rceil$. Now at any iteration $\ell$, since $G = \Big\lfloor \frac{|S_\ell|}{k} \Big\rfloor < \frac{|S_\ell|}{k}$, the total sample complexity for iteration $\ell$ is at most $\frac{|S_\ell|}{k}t \le \frac{4n}{mk^{\ell-1}\epsilon_\ell^2}\ln \frac{2k}{\delta_\ell}$, as $|S_\ell| \le \frac{n}{k^\ell}$ for all $\ell \in [\lfloor  \ln_k n \rfloor]$. Also note that for all but last iteration $\ell \in [\lfloor  \ln_k n \rfloor]$, $\epsilon_\ell = \frac{\epsilon}{8}\bigg( \frac{3}{4} \bigg)^{\ell-1}$, and $\delta_\ell = \frac{\delta}{2^{\ell+1}}$.
Moreover for the last iteration $\ell = \lceil  \ln_k n \rceil$, the sample complexity is clearly $t= \frac{4k}{m(\epsilon/2)^2}\ln \frac{4k}{\delta}$, as in this case $\epsilon_\ell = \frac{\epsilon}{2}$, and $\delta_\ell = \frac{\delta}{2}$, and $|S| = k$.
Thus the total sample complexity of Algorithm \ref{alg:div_bb_mod} is given by 

\begin{align*}
\sum_{\ell = 1}^{\lceil  \ln_k n \rceil}\frac{|S_\ell|}{m(\epsilon_\ell/2)^2}& \ln \frac{2k}{\delta_\ell}  
 \le \sum_{\ell = 1}^{\infty}\frac{4n}{mk^\ell\bigg(\frac{\epsilon}{8}\big(\frac{3}{4}\big)^{\ell-1}\bigg)^2}k\ln \frac{k 2^{\ell+1}}{\delta} + \frac{16k}{m\epsilon^2}\ln \frac{4k}{\delta}\\ 
& \le \frac{256n}{m\epsilon^2}\sum_{\ell = 1}^{\infty}\frac{16^{\ell-1}}{(9k)^{\ell-1}}\Big( \ln \frac{k}{\delta} + {(\ell+1)} \Big) + \frac{16k}{m\epsilon^2}\ln \frac{4k}{\delta} \\
& \le \frac{256n}{m\epsilon^2}\ln \frac{k}{\delta}\sum_{\ell = 1}^{\infty}\frac{4^{\ell-1}}{(9k)^{\ell-1}}\Big( {3\ell} \Big) + \frac{16k}{m\epsilon^2}\ln \frac{4k}{\delta} = O\bigg(\frac{n}{m\epsilon^2}\ln \frac{k}{\delta}\bigg) ~[\text{for any } k > 1].\end{align*}
%{\color{red} Need to adjust $\epsilon$ a bit carefully for $k < 4$}

Above proves the sample complexity bound of Theorem \ref{thm:div_bb_fr}. We now proceed to prove the $(\epsilon,\delta)$-{PAC} correctness of the algorithm. We start by making the following observations:

\begin{lem}
\label{lem:divbat_n1} 
Consider any particular set $\cG_g$ at any iteration $\ell \in \lfloor \frac{n}{k} \rfloor$ and define $q_{i}: = \sum_{\tau = 1}^{t}\1(i \in \cG^\tau_{g m})$ as the number of times any item $i \in \cG_g$ appears in the top-$m$ rankings when items in the set $\cG_g$ is made to battle for $t$ rounds. Then if $i_g := \arg \max_{i \in \cG_g}\theta_i$, then with probability at least $\Big(1-\frac{\delta_\ell}{2k}\Big)$, one can show that $q_{i_g} > (1-\eta)\frac{mt}{k}$, for any $\eta \in \big(\frac{3}{32\sqrt 2},1 \big]$.
\end{lem}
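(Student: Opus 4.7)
My plan is to prove the lemma in three stages: first, establish a lower bound on $\E[q_{i_g}]$ that reflects the fact that $i_g$ has the largest PL score in $\cG_g$; second, apply a multiplicative Chernoff bound to the i.i.d.\ sum $q_{i_g} = \sum_{\tau=1}^{t} \1(i_g \in \cG^\tau_{gm})$; third, substitute the prescribed $t = \frac{4k}{m\epsilon_\ell^2}\ln\frac{2k}{\delta_\ell}$ and verify that the resulting deviation probability is at most $\delta_\ell/(2k)$ for $\eta > 3/(32\sqrt{2})$.

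The crux is the lower bound $Pr(i_g \in \cG^\tau_{gm}) \ge m/k$. I would prove this by a permutation-swap argument on the closed-form PL ranking probability. Given any full ranking $\sigma$ of $\cG_g$ that places $i_g$ at position $j$, let $\sigma'$ be the ranking obtained by swapping $i_g$ with the item at position $j+1$. After cancellation of the common factors at positions $\neq j, j+1$, the ratio $Pr(\sigma)/Pr(\sigma')$ reduces to a ratio of partial $\theta$-sums in which the numerator contains $\theta_{i_g}$ and the denominator contains $\theta_{\sigma(j+1)}$. Since $\theta_{i_g} \ge \theta_{\sigma(j+1)}$, the ratio is at least $1$, giving $Pr(\mathrm{rank}(i_g) = j) \ge Pr(\mathrm{rank}(i_g) = j+1)$. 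The rank distribution of $i_g$ is therefore non-increasing on $\{1,\ldots,k\}$, which (together with its probabilities summing to $1$) forces $Pr(\mathrm{rank}(i_g) \le m) \ge m/k$. Since the top-$m$ feedback across rounds $\tau$ is i.i.d.\ given the fixed set $\cG_g$, this yields $\E[q_{i_g}] \ge mt/k$ and independence of the $\1(i_g \in \cG^\tau_{gm})$'s.

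For the remaining stages, the multiplicative Chernoff bound on the i.i.d.\ Bernoulli indicators gives $Pr(q_{i_g} \le (1-\eta)\frac{mt}{k}) \le Pr(q_{i_g} \le (1-\eta)\E[q_{i_g}]) \le \exp(-\eta^2 \E[q_{i_g}]/2) \le \exp(-\eta^2 mt/(2k))$. Substituting $t$ collapses the exponent to $-(2\eta^2/\epsilon_\ell^2)\ln(2k/\delta_\ell)$, which is at most $\delta_\ell/(2k)$ exactly when $\eta \ge \epsilon_\ell/\sqrt{2}$. Since $\epsilon_\ell \le \epsilon_1 = (3/4)(\epsilon/8) = 3\epsilon/32 \le 3/32$ (using $\epsilon \le 1/\sqrt{8} \le 1$), this is implied by $\eta > 3/(32\sqrt{2})$, matching the stated range.

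The main obstacle is the stage-one monotonicity claim, which is intuitively obvious but requires careful bookkeeping in the PL ranking density since the swap changes the ``remaining mass'' denominators at positions $j$ and $j+1$ in asymmetric ways. A cleaner-but-less-elementary alternative is the Gumbel random-utility representation: couple two worlds using the same Gumbel noise, one with $\theta_{i_g}$ at its true value and one with all scores replaced by their minimum; monotone coupling then shows $Pr(i_g \in \mathrm{top}\text{-}m) \ge m/k$ by symmetry in the auxiliary world. I expect the swap argument to give the cleanest exposition and the exact constants needed downstream.
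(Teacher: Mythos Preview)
Your proposal is correct and follows the same three-stage skeleton as the paper: lower-bound $\E[q_{i_g}]$ by $mt/k$, apply a multiplicative Chernoff bound to the i.i.d.\ indicator sum, and then substitute $t = \frac{4k}{m\epsilon_\ell^2}\ln\frac{2k}{\delta_\ell}$ together with $\epsilon_\ell \le 3/32$ to obtain failure probability at most $\delta_\ell/(2k)$ for $\eta > 3/(32\sqrt{2})$. The Chernoff step and the constant-checking are identical to the paper's.

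The one place you diverge is in how you justify $Pr(i_g \in \text{top-}m) \ge m/k$. The paper uses the sequential-sampling description of Plackett--Luce: at each of the $m$ successive winner draws, if $i_g$ has not yet been drawn it wins with conditional probability at least $1/(\text{remaining size})$, since $\theta_{i_g}$ is maximal; this gives $Pr(i_g \notin \text{top-}m) \le \prod_{j=1}^{m}\frac{k-j}{k-j+1} = \frac{k-m}{k}$. (The paper actually records the intermediate step as $\sum_{j=1}^{m}Pr(\sigma(j)=i_g) = \sum_{j=0}^{m-1}\frac{1}{k-j}$, which is not literally an equality, though the final inequality is correct.) Your adjacent-swap argument instead shows that the rank distribution of $i_g$ is non-increasing on $\{1,\ldots,k\}$, which yields the same $m/k$ bound directly. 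Both are short; your route is cleaner and sidesteps the paper's slightly garbled intermediate claim, while the paper's sequential-sampling view stays closer to how the PL model is exploited elsewhere in the analysis (e.g., Lemma~\ref{lem:pl_simulator}).
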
 

\begin{proof}
%The proof follows from an straightforward application of Chernoff-Hoeffding's inequality \cite{CI_book}. 
Fix any iteration $\ell$ and a set $\cG_g$, $g \in 1,2, \ldots, G$. Define $i^\tau: = \1(i \in \cG_{g m}^\tau)$ as the indicator variable if $i^{th}$ element appeared in the top-$m$ ranking at iteration $\tau \in [t]$.  Recall the definition of {TR} feedback model (Sec. \ref{sec:feed_mod}). Using this we get $\E[i_g^\tau] = Pr(\{i_g \in \cG_{g m}^\tau\}) = Pr\big( \exists j \in [m] ~|~ \sigma(j) = i_g \big) = \sum_{j = 1}^{m}Pr\big( \sigma(j) = i_g \Big) = \sum_{j = 0}^{m-1}\frac{1}{k-j} \ge \frac{m}{k}$, as $Pr(\{i_g | S\}) = \frac{\theta_{i_g}}{\sum_{j \in S}\theta_j} \ge \frac{1}{|S|}$ for any $S \subseteq [\cG_g]$, as $i_g := \arg \max_{i \in \cG_g}\theta_i$ is the best item of set $\cG_g$. Hence $\E[q_{i_g}] = \sum_{\tau = 1}^{t}\E[i_g^\tau] \ge \frac{mt}{k}$. 

%Then clearly, for any item $i \in \cG_g$, $w_i = \sum_{\tau = 1}^{t}\1(i_\tau == i)$, where $\1(i_\tau == i)$ is a Bernoulli random variable with parameter $\frac{\theta_i}{\sum_{j \in \cG_g}\theta_j}$,

Now applying Chernoff-Hoeffdings bound for $w_{i_g}$, we get that for any $\eta \in (\frac{3}{32},1]$, 

\begin{align*}
Pr\Big( q_{i_g} \le (1-\eta)\E[q_{i_g}] \Big) & \le \exp(- \frac{\E[q_{i_g}]\eta^2}{2}) \le \exp(- \frac{mt\eta^2}{2k}) \\
& = \exp\bigg(- \frac{2\eta^2}{\epsilon_\ell^2} \ln \bigg( \frac{2k}{\delta_\ell} \bigg) \bigg) = \exp\bigg(- \frac{(\sqrt 2\eta)^2}{\epsilon_\ell^2} \ln \bigg( \frac{2k}{\delta_\ell} \bigg) \bigg) \\
& \le \exp\bigg(- \ln \bigg( \frac{2k}{\delta_\ell} \bigg) \bigg) \le \frac{\delta_\ell}{2k} ,
\end{align*}
where the second last inequality holds as $\eta \ge \frac{3}{32\sqrt 2}$ and $\epsilon_\ell \le \frac{3}{32}$, for any iteration $\ell \in \lceil \ln n \rceil$; in other words for any $\eta \ge \frac{3}{32\sqrt 2}$, we have $\frac{\sqrt 2\eta}{\epsilon_\ell} \ge 1$ which leads to the second last inequality. Thus we finally derive that 
with probability at least $\Big(1-\frac{\delta_\ell}{2k}\Big)$, one can show that $q_{i_g} > (1-\eta)\E[q_{i_g}] \ge (1-\eta)\frac{tm}{k}$, and
the proof follows henceforth.
\end{proof}

In particular, fixing $\eta = \frac{1}{2}$ in Lemma \ref{lem:hv_n1}, we get that with probability at least $\big(1-\frac{\delta_\ell}{2}\big)$,  $q_{i_g} > (1-\frac{1}{2})\E[w_{i_g}] > \frac{mt}{2k}$. 
Note that, for any round $\tau \in [t]$, whenever an item $i \in \cG_g$ appears in the top-$m$ set $\cG_{gm}^\tau$, then the rank breaking update ensures that every element in the top-$m$ set gets compared with rest of the $k-1$ elements of $\cG_g$. Based on this observation, we now prove that for any set $\cG_g$, its best item $i_g$ is retained as the winner $c_g$ with probability at least $\big( 1 - \frac{\delta_\ell}{2}\big)$. More formally, first thing to observe is:

\begin{lem}
\label{lem:divbat_n2} 
Consider any particular set $\cG_g$ at any iteration $\ell \in \lfloor \frac{n}{k} \rfloor$. If $i_g \leftarrow \arg \max_{i \in \cG_g}\theta_i$, then with probability at least $\Big(1-{\delta_\ell}\Big)$, $\hp_{i_g j} + \frac{\epsilon_\ell}{2} \ge \frac{1}{2}$ for all $\epsilon_\ell$-optimal item $\forall j \in \cG_g$ such that $p_{i_g j} \in \big( \frac{1}{2}, \frac{1}{2} + \epsilon_\ell \big]$, and $\hp_{i_g j} - \frac{\epsilon_\ell}{2} \ge \frac{1}{2}$ for all non $\epsilon_\ell$-optimal item $ j \in \cG_g$ such that $p_{i_g j} > \frac{1}{2} + \epsilon_\ell$.
\end{lem}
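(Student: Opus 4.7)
\textbf{Proof plan for Lemma \ref{lem:divbat_n2}.} The plan is to show that the empirical pairwise frequencies $\hat p_{i_g j}$ concentrate within $\epsilon_\ell/2$ of the true $p_{i_g j}$ for every $j \in \cG_g\setminus\{i_g\}$, and then handle the two cases $p_{i_g j}\in(\tfrac12,\tfrac12+\epsilon_\ell]$ and $p_{i_g j}>\tfrac12+\epsilon_\ell$ separately at the end.

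The main technical step is to show that the pairwise sample count $n_{i_g j}:=w_{i_g j}+w_{j i_g}$ is large enough, with high probability, to make Lemma \ref{lem:pl_simulator} applicable. First I would examine the rank-breaking update rule (Algorithm \ref{alg:updt_win}): in each round $\tau$, whenever the best item $i_g$ appears in the top-$m$ ranking $\bsigma_\tau$, it is compared with \emph{every} other item of $\cG_g$ not preceding it in $\bsigma_\tau$ (and in particular with $j$). Therefore the pairwise counter $n_{i_g j}$ increments by at least $1$ at every round in which $i_g\in \cG^\tau_{gm}$, which yields the clean lower bound $n_{i_g j}\ge q_{i_g}$. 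Invoking Lemma \ref{lem:divbat_n1} with $\eta=\tfrac12$ then gives $q_{i_g}\ge \tfrac{mt}{2k}$ with probability at least $1-\tfrac{\delta_\ell}{2k}$, so on this event $n_{i_g j}\ge \tfrac{mt}{2k}$ for every $j\in\cG_g$ simultaneously.

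Next I would apply Lemma \ref{lem:pl_simulator} to each pair $(i_g,j)$ with $\eta=\tfrac{\epsilon_\ell}{2}$ and $v=\tfrac{mt}{2k}$; plugging in $t=\tfrac{4k}{m\epsilon_\ell^2}\ln\tfrac{2k}{\delta_\ell}$ (as set in Algorithm \ref{alg:div_bb_mod}) makes the exponent equal $\ln\tfrac{2k}{\delta_\ell}$, giving
\[
\Pr\!\left(\bigl|\hat p_{i_g j}-p_{i_g j}\bigr|>\tfrac{\epsilon_\ell}{2},\ n_{i_g j}\ge\tfrac{mt}{2k}\right)\le \tfrac{\delta_\ell}{k}.
\]
A union bound over the $k-1$ items $j\in\cG_g\setminus\{i_g\}$, combined with the $\tfrac{\delta_\ell}{2k}$ slack from Lemma \ref{lem:divbat_n1}, yields that with probability at least $1-\delta_\ell$, $|\hat p_{i_g j}-p_{i_g j}|\le \tfrac{\epsilon_\ell}{2}$ simultaneously for all $j\in\cG_g\setminus\{i_g\}$.

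Finally I would translate this uniform concentration into the two claimed statements. If $p_{i_g j}\in(\tfrac12,\tfrac12+\epsilon_\ell]$, the concentration gives $\hat p_{i_g j}\ge p_{i_g j}-\tfrac{\epsilon_\ell}{2}>\tfrac12-\tfrac{\epsilon_\ell}{2}$, i.e.\ $\hat p_{i_g j}+\tfrac{\epsilon_\ell}{2}\ge\tfrac12$; if instead $p_{i_g j}>\tfrac12+\epsilon_\ell$, the same concentration gives $\hat p_{i_g j}> \tfrac12+\tfrac{\epsilon_\ell}{2}$, i.e.\ $\hat p_{i_g j}-\tfrac{\epsilon_\ell}{2}\ge\tfrac12$. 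The main obstacle is the first one---carefully justifying the bookkeeping $n_{i_g j}\ge q_{i_g}$ from the rank-breaking routine, since top-$m$ feedback mixes cases where $i_g$ is ranked, $j$ is ranked, or both are ranked; after that the probabilistic argument is a routine Hoeffding/union-bound calculation with the exact sample size $t$ chosen to absorb the $\frac{m}{k}$ factor gained from Lemma \ref{lem:rb}.
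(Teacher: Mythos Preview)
Your proposal is correct and follows essentially the same route as the paper: use Lemma~\ref{lem:divbat_n1} (with $\eta=\tfrac12$) together with the rank-breaking bookkeeping to guarantee $n_{i_g j}\ge \tfrac{mt}{2k}$, then apply Lemma~\ref{lem:pl_simulator} with $\eta=\tfrac{\epsilon_\ell}{2}$, $v=\tfrac{mt}{2k}$ and union-bound over the $k-1$ items. The only cosmetic difference is that the paper handles the two regimes of $p_{i_g j}$ separately with one-sided deviation bounds (each $\le \tfrac{\delta_\ell}{2k}$) before the union bound, whereas you first establish the uniform two-sided concentration and then specialize; both yield the same $\le \delta_\ell$ total failure probability.
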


\begin{proof}
With top-$m$ ranking feedback, the crucial observation lies in the fact that at any round $\tau \in [t]$, whenever an item $i \in \cG_g$ appears in the top-$m$ set $\cG_{gm}^\tau$, then the rank breaking update ensures that every element in the top-$m$ set gets compared with each of the rest of the $k-1$ elements of $\cG_g$ - it defeats to every element preceding item in $\sigma \in \Sigma_{\cG_{gm}}$, and wins over the rest.  
Therefore defining $n_{ij} = w_{ij} + w_{ji}$ to be the number of times item $i$ and $j$ are compared after rank-breaking, $i,j \in \cG_g$. Clearly $n_{ij} = n_{ji}$, and $0 \le n_{ij} \le tk$. Moreover, from Lemma \ref{lem:divbat_n1} with $\eta = \frac{1}{2}$, we have that $n_{i_g j} \ge \frac{mt}{2k}$.
%Now suppose indeed $\exists$ $c_g$ such that $w_{c_g j }\ge w_{j c_g}, \,\forall j \in [k]$
Given the above arguments in place let us analyze the probability of a `bad event' that indedd:

\textbf{Case 1.} $j$ is $\epsilon_\ell$-optimal with respect to $i_g$, i.e. $p_{i_g j} \in \big(\frac{1}{2}, \frac{1}{2} + \epsilon_\ell \big]$. Then we have 

\begin{align*}
Pr\Bigg( \bigg\{ \hp_{i_g j} + \frac{\epsilon_\ell}{2} < \frac{1}{2}  \bigg\} & \cap \bigg\{ n_{i_g j} \ge \frac{mt}{2k} \bigg\} \Bigg)
 = Pr\Bigg( \bigg\{ \hp_{i_g j} < \frac{1}{2} - \frac{\epsilon_\ell}{2} \bigg\} \cap \bigg\{ n_{i_g j} = \frac{mt}{2k} \bigg\} \Bigg)\\%1
& \le Pr\bigg( \bigg\{ \hp_{i_g j} - p_{i_g j} <  - \frac{\epsilon_\ell}{2}\bigg\} \cap \bigg\{ n_{i_g j} = \frac{mt}{2k} \bigg\} \bigg)\\%2
&  \le \exp\Big( -2\frac{mt}{2k}{(\epsilon_\ell/2)}^2 \Big) \bigg) = \frac{\delta_\ell}{2k},%4
\end{align*}

where the first inequality follows as $p_{i_g j} > \frac{1}{2}$, and the second inequality follows from Lemma \ref{lem:pl_simulator} with $\eta = \frac{\epsilon_\ell}{2}$ and $v = \frac{mt}{2k}$.

\textbf{Case 2.} $j$ is non $\epsilon_\ell$-optimal with respect to $i_g$, i.e. $p_{i_g j} > \frac{1}{2} + \epsilon_\ell$. Similar to before, we have

\begin{align*}
Pr\Bigg( \bigg\{ \hp_{i_g j} - \frac{\epsilon_\ell}{2} < \frac{1}{2}  \bigg\} & \cap \bigg\{ n_{i_g j} \ge \frac{mt}{2k} \bigg\} \Bigg)
 = Pr\Bigg( \bigg\{ \hp_{i_g j} < \frac{1}{2} + \frac{\epsilon_\ell}{2} \bigg\} \cap \bigg\{ n_{i_g j} = \frac{mt}{2k} \bigg\} \Bigg)\\%1
& \le Pr\bigg( \bigg\{ \hp_{i_g j} - p_{i_g j} <  - \frac{\epsilon_\ell}{2}\bigg\} \cap \bigg\{ n_{i_g j} = \frac{mt}{2k} \bigg\} \bigg)\\%2
&  \le \exp\Big( -2\frac{mt}{2k}{(\epsilon_\ell/2)}^2 \Big) \bigg) = \frac{\delta_\ell}{2k},%4
\end{align*}

where the third last inequality follows since in this case $p_{i_g j} > \frac{1}{2} + \epsilon_\ell$, and the last inequality follows from Lemma \ref{lem:pl_simulator} with $\eta = \frac{\epsilon_\ell}{2}$ and $v = \frac{mt}{2k}$.

Let us define the event $\cE: = \bigg\{ \exists j \in \cG_g  \text{ such that } \hp_{i_g j} + \frac{\epsilon_\ell}{2} < \frac{1}{2}, p_{i_g j} \in \big( \frac{1}{2}, \frac{1}{2} + \epsilon_\ell \big] \text{ or } \hp_{i_g j} -  \frac{\epsilon_\ell}{2} < \frac{1}{2}, p_{i_g j} > \frac{1}{2} + \epsilon_\ell  \bigg\}$. Then by combining Case $1$ and $2$, we get
\begin{align*}
 Pr\Big( \cE \Big) & = Pr\Bigg( \cE \cap \bigg\{ n_{i_g j} \ge \frac{mt}{2k} \bigg\} \Bigg) + Pr\Bigg( \cE \cap \bigg\{ n_{i_g j} < \frac{mt}{2k} \bigg\} \Bigg)\\%1
& \le \sum_{j \in \cG_g \text{ s.t. } p_{i_g j} \in \big(\frac{1}{2}, \frac{1}{2} + \epsilon_\ell \big]}Pr\Bigg( \bigg\{ \hp_{i_g j} + \frac{\epsilon_\ell}{2} < \frac{1}{2} \bigg\} \cap \bigg\{ n_{i_g j} \ge \frac{mt}{2k} \bigg\} \Bigg) \\%2
& + \sum_{j \in \cG_g \text{ s.t. } p_{i_g j} > \frac{1}{2} + \epsilon_\ell}Pr\Bigg( \bigg\{ \hp_{i_g j} - \frac{\epsilon_\ell}{2} < \frac{1}{2} \bigg\} \cap \bigg\{ n_{i_g j} \ge \frac{mt}{2k} \bigg\} \Bigg) + Pr\Bigg( \bigg\{ n_{i_g j} < \frac{mt}{2k} \bigg\} \Bigg)\\%3
& \le \frac{(k-1)\delta_\ell}{2k} + \frac{\delta_\ell}{2k} \le {\delta_\ell}%4
\end{align*}

where the last inequality follows from the above two case analyses and Lemma \ref{lem:divbat_n1}.

%Now taking union bound over all $\epsilon_\ell$-suboptimal elements $i'$ of $\cG_g$, such that $p_{i'i_g} < \frac{1}{2} - \epsilon_\ell$, we get that $Pr\Bigg(\bigg\{\exists i' \in \cG_g \mid p_{i'i_g} < \frac{1}{2} - \epsilon_\ell, \text{and } i' = c_g \bigg\}\Bigg) \le \frac{\delta_\ell}{k}\Big \vert\{ i' \in \cG_g \mid p_{i'i_g} < \frac{1}{2} - \epsilon_\ell\}\Big\vert \le \delta_\ell$, and the claim follows henceforth.
\end{proof}

%\begin{rem}
%Lemma \ref{lem:divbat_n2} holds for the set $S$ (Line $20$-$29$) of the last iteration $\ell = \lceil \frac{n}{k} \rceil$ as well, following an exact similar analysis as above. 
%\end{rem}

%******************************************

Given Lemma \ref{lem:divbat_n2} in place, let us now analyze with what probability the algorithm can select a non $\epsilon_\ell$-optimal item $j \in \cG_g$ as $c_g$ at any iteration $\ell \in \lceil \frac{n}{k} \rceil$. For any set $\cG_g$ (or set $S$ for the last iteration $\ell = \lceil \frac{n}{k} \rceil$), we define the set of non $\epsilon_\ell$-optimal element $\cO_g = \{ j \in \cG_g \mid p_{i_g j} > \frac{1}{2} + \epsilon_\ell \}$, and recall the event $\cE: = \bigg\{ \exists j \in \cG_g  \text{ such that } \hp_{i_g j} + \frac{\epsilon_\ell}{2} < \frac{1}{2}, p_{i_g j} \in \big( \frac{1}{2}, \frac{1}{2} + \epsilon_\ell \big] \text{ or } \hp_{i_g j} -  \frac{\epsilon_\ell}{2} < \frac{1}{2}, p_{i_g j} > \frac{1}{2} + \epsilon_\ell  \bigg\}$. Then we have %We define the event $\cE_1 = \{ c_g \text{ is non } \epsilon_\ell\text{-optimal} \}$. Then

\begin{align}
\label{eq:divbat_1}
\nonumber Pr(c_g \in \cO_g) & \le Pr\Bigg( \bigg\{ \exists j \in \cG_g, \hp_{i_g j} + \frac{\epsilon_\ell}{2} < \frac{1}{2} \bigg\} \cup \bigg\{ \exists j \in \cO_g, \hp_{j i_g} + \frac{\epsilon_\ell}{2} \ge \frac{1}{2} \bigg\} \Bigg) \\%1
\nonumber & \le Pr\Bigg( \cE \cup \bigg\{ \exists j \in \cO_g, \hp_{j i_g} + \frac{\epsilon_\ell}{2} \ge \frac{1}{2} \bigg\} \Bigg) \\%2
\nonumber & = Pr \Big( \cE \Big) + Pr\Bigg( \bigg\{ \exists j \in \cO_g, \hp_{j i_g} + \frac{\epsilon_\ell}{2} \ge \frac{1}{2} \bigg\} \cap \cE^{c} \Bigg)\\ %3
& = Pr \Big( \cE \Big) + Pr\Bigg( \bigg\{ \exists j \in \cO_g, \hp_{j i_g} + \frac{\epsilon_\ell}{2} \ge \frac{1}{2} \bigg\} \cap \cE^{c} \Bigg) \le \delta_\ell + 0 = \delta_\ell,%4
\end{align}

where the last inequality follows from Lemma \ref{lem:divbat_n2}, and the fact that $\hp_{i_g j} - \frac{\epsilon_\ell}{2} \ge \frac{1}{2} \implies \hp_{j i_g} + \frac{\epsilon_\ell}{2} < \frac{1}{2}$. The proof now follows combining all the above parts together.

More formally, for each iteration $\ell$, let us define $g_\ell \in [G]$ to be the set that contains \textit{best item} of the entire set $S$, i.e. $\arg\max_{i \in S}\theta_i \in \cG_{g_\ell}$. Then from \eqref{eq:divbat_1}, with probability at least $(1-\delta_\ell)$,\, $\tp_{c_{g_\ell}i_{g_\ell}} > -\epsilon_\ell$. Now for each iteration $\ell$, recursively applying \eqref{eq:divbat_1} and Lemma \ref{lem:pl_sst} to $\cG_{g_\ell}$, we get that $\tp_{r_*1} > -\Big( \frac{\epsilon}{8} + \frac{\epsilon}{8}\Big(\frac{3}{4}\Big) + \cdots + \frac{\epsilon}{8}\big(\frac{3}{4}\big)^{\lfloor \frac{n}{k} \rfloor} \Big) + \frac{\epsilon}{2} \ge -\frac{\epsilon}{8}\Big( \sum_{i = 0}^{\infty}\big( \frac{3}{4} \big)^{i} \Big) + \frac{\epsilon}{2} = \epsilon$.
(Note that, for above analysis to go through, it is in fact sufficient to consider only the set of iterations $\{\ell \ge \ell_0 \mid \ell_0 = \min\{l \mid 1 \notin \cR_l, \, l \ge 1\} \}$ because prior considering item $1$, it does not matter even if the algorithm mistakes in any of the iteration $\ell < \ell_0$). Thus assuming the algorithm does not fail in any of the iteration $\ell$, we have that $p_{r_*1} > \frac{1}{2} - \epsilon$.

Finally, since at each iteration $\ell$, the algorithm fails with probability at most $\delta_\ell$, the total failure probability of the algorithm is at most $\Big( \frac{\delta}{4} + \frac{\delta}{8} + \cdots + \frac{\delta}{2^{\lceil \frac{n}{k}  \rceil}} \Big) + \frac{\delta}{2} \le \delta$.
This concludes the correctness of the algorithm showing that it indeed returns an $\epsilon$-best element $r_*$ such that $p_{r_*1} \ge \frac{1}{2} - \epsilon$ with probability at least $1-\delta$. 
\end{proof}

%% -------------------------------------------

\end{document}